\documentclass{article}
\usepackage{preamble}
\usepackage{maths_preamble}
\usepackage{microtype}
\usepackage{graphicx}
\usepackage{booktabs} 
\usepackage{multirow} 

\usepackage{hyperref}

\usepackage[accepted]{icml2026}
\usepackage{amsmath}
\usepackage{amssymb}
\usepackage{mathtools}
\usepackage{amsthm}
\theoremstyle{plain}
\newtheorem*{proposition*}{Proposition}

\theoremstyle{plain}
\newtheorem{theorem}{Theorem}[section]
\newtheorem{proposition}[theorem]{Proposition}

\theoremstyle{definition}

\theoremstyle{remark}

\usepackage[textsize=tiny]{todonotes}

\newcommand{\E}{\mathbb{E}}

\icmltitlerunning{Incremental Transformer Neural Processes}

\begin{document}

\twocolumn[
\icmltitle{Incremental Transformer Neural Processes}



\icmlsetsymbol{equal}{*}

\begin{icmlauthorlist}
\icmlauthor{Philip Mortimer}{equal,cam}
\icmlauthor{Cristiana Diaconu}{equal,cam}
\icmlauthor{Tommy Rochussen}{hai,tum,mcml}
\icmlauthor{Bruno Mlodozeniec}{cam}
\icmlauthor{Richard E. Turner}{cam,turing}
\end{icmlauthorlist}

\icmlaffiliation{cam}{University of Cambridge}
\icmlaffiliation{tum}{Technical University of Munich}
\icmlaffiliation{hai}{Helmholtz AI}
\icmlaffiliation{mcml}{Munich Center for Machine Learning}
\icmlaffiliation{turing}{Alan Turing Institute}

\icmlcorrespondingauthor{Philip Mortimer}{pm846@cam.ac.uk}
\icmlcorrespondingauthor{Cristiana Diaconu}{cdd43@cam.ac.uk}

\icmlkeywords{Machine Learning, ICML}

\vskip 0.3in
]



\printAffiliationsAndNotice{\icmlEqualContribution} 

\begin{abstract} 
Neural Processes (NPs), and specifically Transformer Neural Processes (TNPs), have demonstrated remarkable performance across tasks ranging from spatiotemporal forecasting to tabular data modelling. However, many of these applications are inherently sequential, involving continuous data streams such as real-time sensor readings or database updates. In such settings, models should support cheap, incremental updates rather than recomputing internal representations from scratch for every new observation—a capability existing TNP variants lack. Drawing inspiration from Large Language Models, we introduce the Incremental TNP (\texttt{incTNP}). By leveraging causal masking, Key-Value (KV) caching, and a data-efficient autoregressive training strategy, \texttt{incTNP} matches the predictive performance of standard TNPs while reducing the computational cost of updates from quadratic to linear time complexity. We empirically evaluate our model on a range of synthetic and real-world tasks, including tabular regression and temperature prediction. Our results show that, surprisingly, \texttt{incTNP} delivers performance comparable to—or better than—non-causal TNPs while unlocking orders-of-magnitude speedups for sequential inference. Finally, we assess the consistency of the model's updates---by adapting a metric of ``implicit Bayesianness", we show that under a one-at-a-time streaming protocol, \texttt{incTNP} retains a prediction rule as implicitly Bayesian as standard non-causal TNPs, demonstrating that \texttt{incTNP} achieves the computational benefits of causal masking without sacrificing the consistency required for streaming inference.\footnote{Our code: \href{https://github.com/philipmortimer/incTNP-code}{https://github.com/philipmortimer/incTNP-code}.}
\end{abstract}

\section{Introduction}
\label{sec:introduction}
The dominant paradigm in machine learning involves training models on static datasets collected up to a fixed point in time. While effective in offline settings, this approach faces significant challenges in real-world streaming scenarios where data arrive sequentially and continuously. In such cases, a model should be able to quickly integrate new observations to improve predictions. This necessitates architectures that support \textbf{cheap, incremental} updates, allowing for immediate adaptation without the computational overhead of reprocessing the entire history. 

Amongst models that output predictions conditioned on past data (the context set), approaches such as Neural Processes (NP;~\citet{garnelo2018conditional,garnelo2018neural,nguyen2022transformer}) and the closely-related Prior-Fitted Networks (PFNs;~\citet{muller2021transformers}) have shown great promise. They have been successfully applied to tasks such as weather modelling~\citep{andersson2023environmental,vaughan2024aardvark} and tabular data prediction~\citep{TabPFNv2}. Crucially, these domains are inherently suited for streaming workflows, for example integrating real-time measurements from weather sensors or handling continuous updates in active databases. However, existing NP and PFN implementations are not well-suited to handle such live data streams efficiently, as they lack the ability to perform cheap incremental updates. The most popular variants rely on the transformer architecture~\citep{vaswani2017attention}, which, when applied na\"ively, incurs a quadratic computational cost for every new observation, making high-frequency updates prohibitively expensive.

In this work, we address this computational bottleneck by leveraging advancements from causally masked Large Language Models (LLMs) to introduce an incrementally-updateable TNP, the \texttt{incTNP}. By incorporating causal masking into the self-attention mechanism and utilising Key-Value (KV) caching, \texttt{incTNP} enables efficient incremental context updates with linear time complexity. We also introduce a dense autoregressive training strategy (referred to as \texttt{incTNP-Seq}) that amortises learning across all context sizes in a single forward pass. Our experiments show that when trained with this data-efficient scheme, \texttt{incTNP} often outperforms standard TNP models trained on a compute-matched budget, while unlocking the ability to perform high-frequency updates in real-time streaming scenarios. This finding is interesting, indicating that causal masking has surprisingly little effect on the quality of the predictions compared to the full bi-directional attention used in standard TNPs.

We further investigate the extent to which causal masking compromises permutation invariance with respect to the context set, a key property of NPs.
We adapt the metric of \textit{implicit Bayesianness} proposed in \citet{mlodozeniec2024implicitly}, applying it for the first time to NPs in order to quantify this effect. Comparing the prediction rules of \texttt{incTNP} against the standard TNP (\texttt{TNP-D};~\citealt{nguyen2022transformer}) under a one-at-a-time streaming protocol, our analysis reveals that, despite its causal structure, \texttt{incTNP} exhibits a degree of implicit Bayesianness comparable to the permutation-invariant baseline, confirming that \texttt{incTNP} secures the computational benefits of causal masking without compromising probabilistic consistency.


We validate our approach across both factorised and autoregressive (AR) deployment modes (see \cref{fig:incTNP_Diagram}). The latter is particularly relevant when we expect correlations between target points, yielding state-of-the-art performance~\citep{bruinsma2023autoregressive}. However, AR prediction creates a computational bottleneck: standard TNPs must re-process the entire context history for every generated target. In a streaming setting, where this expensive inference loop repeats at every update step, standard TNPs become prohibitively slow. We demonstrate that \texttt{incTNP} overcomes this barrier, rendering high-fidelity AR inference feasible even in streaming scenarios. We summarise our key contributions:
\begin{enumerate}[leftmargin=*, noitemsep]
\item We propose the Incremental TNP (\texttt{incTNP}), a Transformer NP variant that leverages causal masking and KV caching to enable linear-time context updates, unlocking orders-of-magnitude speedups for streaming predictions tasks, in both factorised and AR deployment mode.
\item We demonstrate on both synthetic benchmarks and complex real-world tasks—including environmental temperature prediction and tabular regression—that by employing a dense autoregressive training strategy, \texttt{incTNP} matches or exceeds the predictive performance of standard TNPs under equal training budgets.
\item We investigate the impact of causal masking on predictive consistency using a novel implicit Bayesianness metric. We show empirically that \texttt{incTNP} retains a prediction rule comparable in Bayesian consistency to the standard permutation-invariant \texttt{TNP-D}, confirming that the model achieves computational scalability without sacrificing probabilistic coherency.
\end{enumerate}

\begin{figure*}[htb]
    \centering
    \includegraphics[width=\linewidth]{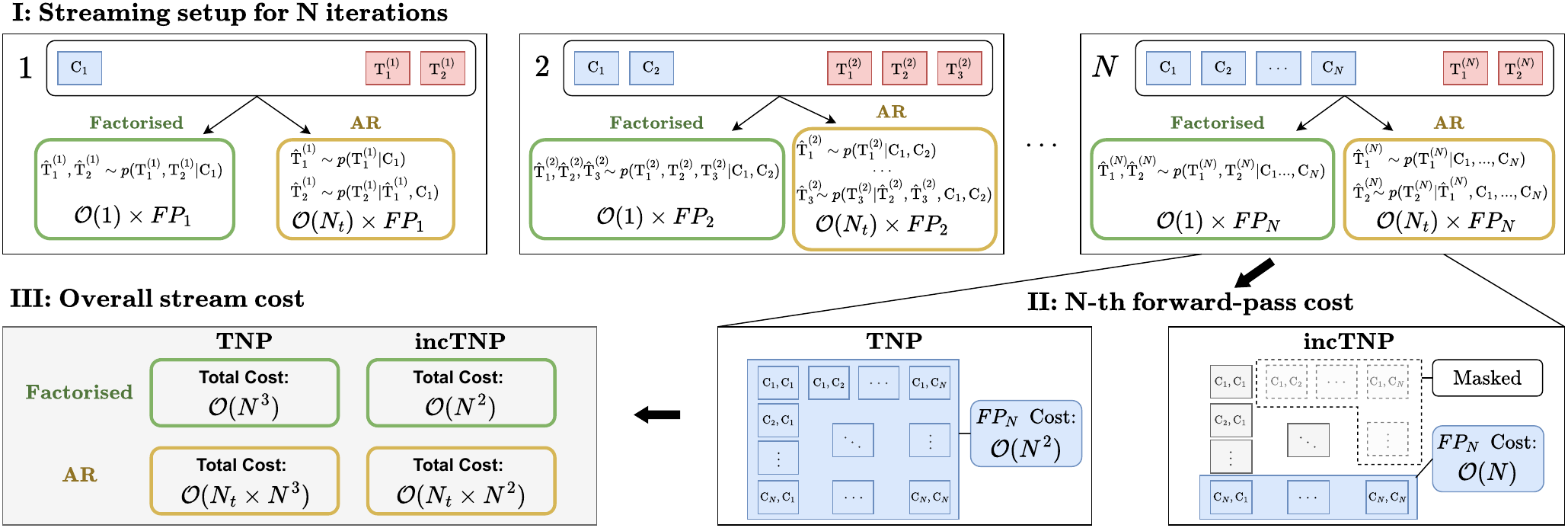}
    \caption{Computational complexity analysis in the streaming setting (see theoretical analysis in \cref{app_subsec:temp_streaming_ar}).
    \textbf{Top}: As the data stream grows (steps $1, \dots N$), the model updates its context $C$ and predicts targets $T$. In AR mode, the model must perform $N_t$ forward passes (FP) at every step, amplifying the computational burden.
    \textbf{Bottom Right}: A comparison of the FP mechanics. Standard \texttt{TNP} must re-encode the entire context history at every step (cost $\mathcal{O}(N^2)$), whereas \texttt{incTNP} leverages KV caching to process only the incremental update (cost $\mathcal{O}(N)$).
    \textbf{Bottom Left}: The cumulative cost over a stream of length $N$. Because \texttt{TNP} recomputes the full attention matrix repeatedly, its total cost scales cubically ($\mathcal{O}(N^3)$). \texttt{incTNP} reduces this to quadratic scaling ($\mathcal{O}(N^2)$), making it the only viable option for long streams, particularly when combined with the expensive AR decoding loop.}
    \label{fig:incTNP_Diagram}
\end{figure*}

\section{Background}
\label{sec:background}

We consider a supervised meta-learning setting with input and output spaces $\smash{\mcX = \R^{D_x}}$ and $\smash{\mcY = \R^{D_y}}$, where the model needs to adapt to new prediction tasks on the target set $\mcD^t$ based on some observed, context data $\mcD^c$. Let $|\mcD^c| \!=\! N_c$ and $|\mcD^t| \!=\! N_t$ points, and $\smash{\bfX^c \in \mcX^{N_c}, \bfY^c \in \mcY^{N_c}}$ and $\smash{\bfX^t \in \mcX^{N_t}, \bfY^t \in \mcY^{N_t}}$ denote the inputs and outputs for $\mcD^c$ and $\mcD^t$. Within the meta-learning setup, a single task is defined as $\xi = (\mcD^c, \mcD^t) = ((\bfX^c, \bfY^c
 ), (\bfX^t, \bfY^t))$.

\subsection{Neural Processes}
\label{subsec:background_nps}
Neural processes (NPs; \citealt{garnelo2018conditional,garnelo2018neural}) define a family of neural-network-based mappings from context sets $\mcD^c = \{(\bfx^c_n, \bfy^c_n)\}_{n=1}^{N_c}$ to predictive distributions at target locations $\bfX^t$. Viewed through the lens of stochastic process modelling~\citep{garnelo2018neural}, NPs are designed to respect consistency under marginalisation and permutation \textit{given a fixed context set}. However, as noted by \citet{xu2023deepstochasticprocessesfunctional}, this design implies that NP variants typically lack \textit{conditional consistency}: the distribution of a sequence $\bfy^t_{1:N_t}$ conditioned on an initial context need not match the distribution obtained if the points were predicted sequentially and appended to the context after each iteration~\citep{kim2019attentive}. This distinction is particularly relevant in streaming settings, where the context set is not fixed but grows dynamically over time.

NPs are also tightly linked to Prior-Fitted Networks (PFNs;~\citealt{muller2021transformers}), which have proven effective in domains such as tabular data prediction~\citep{TabPFNv2}. While PFNs share the same meta-learning framework with NPs, they are generally trained on synthetic data and deployed on real-world tasks, necessitating robust sim-to-real capabilities.

\paragraph{Conditional Neural Processes (CNPs)}
Within the NP family, we focus on Conditional NPs (CNPs; \citealt{garnelo2018conditional}), which approximate the predictive distribution by assuming factorisation over target points given the context representation:
\begin{equation}
p(\bfY^t | \bfX^t, \mcD^c) = \prod_{n=1}^{N_t} p_\theta(\bfy^t_{n} | \bfx^t_{n}, \mcD^c)
\end{equation}
CNPs are trained in a meta-learning setup by maximising the expected log-likelihood over tasks: $\smash{\mcL_{\text{ML}}(\theta) = \mathbb{E}_{p(\xi)}\big[\sum_{n=1}^{N_t} \log p_\theta(\bfy^t_{n} | \bfx^t_{n}, \mcD^c)\big]}$. While efficient, the factorisation assumption prevents CNPs from modelling correlations between target points.

\paragraph{Autoregressive Deployment} To capture dependencies without altering the training objective, CNPs can be deployed in AR mode at test time~\citep{bruinsma2023autoregressive}---the model predicts one target point at a time, samples a value, and appends it to the context set for the next prediction: 
\begin{equation} 
\label{eq:AR_dist}
\resizebox{0.91\linewidth}{!}{
$
p(\bfY^t| \bfX^t; \mcD^c) = \prod_{n=1}^{N_t} p_{\theta}(\bfy^t_{n} | \bfx^t_{n}, \mcD^c \cup {(\bfx^t_{j}, \bfy^t_{j})}_{j=1}^{n-1})
$
}
\end{equation} 
While empirically strong, this approach breaks consistency under permutation~\citep{bruinsma2023autoregressive} and incurs a high computational cost—a full forward pass for each target point—making it prohibitive for real-time streaming.

\subsection{Transformer Neural Processes (TNPs)}
\label{subsec:background_tnp}
Transformer-based architectures~\citep{nguyen2022transformer,müller2024transformersbayesianinference} have emerged as a powerful backbone for NPs due to their natural handling of set-valued inputs. In a standard TNP, context and target inputs are tokenised into representations $\bfZ_0^c$ and $\bfZ_0^t$. Each context token corresponds to an observed pair $(\bfx_i^c, \bfy_i^c)$, while each target token corresponds to a target input $\bfx_i^t$ whose output is to be predicted. These context and target tokens are processed via alternating layers of Multi-Head Self-Attention (MHSA) on the context and Multi-Head Cross-Attention (MHCA) from context to targets: $\bfZ_{l}^t = \text{MHCA}(\bfZ_{l-1}^t, \text{MHSA}(\bfZ_{l-1}^c))$. The Diagonal TNP (\texttt{TNP-D}; \citealt{nguyen2022transformer}) factorises the predictive distribution similar to a CNP. However, because the self-attention mechanism $\text{MHSA}(\bfZ^c)$ scales quadratically with the number of context points and must be recomputed whenever the context changes, \texttt{TNP-D} is ill-suited for incremental learning or streaming data. The Autoregressive TNP (\texttt{TNP-A}; \citealt{nguyen2022transformer}) models the joint distribution over targets autoregressively, but still incurs the same contextual self-attention bottleneck.

\subsection{Efficient Incremental Updates via Causal Masking}

Outside the NP literature, Large Language Models (LLMs) achieve efficient incremental processing through KV caching and causal masking. KV caching stores the key ($K$) and value ($V$) matrices of past tokens, allowing each new token to attend to the cached history and reducing per-step inference cost from $\mathcal{O}(N^2)$ to $\mathcal{O}(N)$~\citep{pope2022efficientlyscalingtransformerinference}. This requires causal masking: in bidirectional attention, new tokens alter previous representations and invalidate the cache, whereas causal attention keeps past representations static. In this work, we adapt these mechanisms to the NP framework to enable efficient streaming updates.


\section{Related Work}
\label{sec:related-work}

\subsection{Architectures for Neural Processes} While early NP architectures relied on efficient aggregation mechanisms like DeepSets~\citep{zaheer2017deep,garnelo2018conditional} or convolutions~\citep{gordon2019convolutional,bruinsma2023autoregressive}, transformers have become the state-of-the-art for modelling complex relationships in data~\citep{nguyen2022transformer,ashman2025gridded}. Despite their representational power, standard TNPs suffer from the quadratic cost of attention. Various works have proposed linear-time or sub-quadratic approximations for NPs~\citep{feng2022latent,lee2019set,ashman2025gridded}, often by summarising the context representation into a reduced set of pseudo-tokens. A prominent example is the Latent Bottlenecked Attention NP (\texttt{LBANP};~\citealt{feng2022latent}), which is used as a baseline in this work. However, these methods typically require re-encoding the full context for each update or are designed for static context sets, rendering them inefficient for streaming scenarios where the context grows continuously.\looseness=-1

\subsection{Incremental Updates in Neural Processes}
The challenge of efficient autoregression in NPs has recently been addressed by \citet{hassan2025efficientautoregressiveinferencetransformer}. Their work introduces a causal autoregressive buffer to enable cheaper AR sampling. However, their motivation differs fundamentally from ours: they focus on correlated target prediction rather than streaming data adaptation. Consequently, their architecture separates a fixed initial context from a dynamic buffer, a design that becomes unnatural in streaming settings where all data arrive sequentially. To handle infinite streams, their buffer would require periodic merging and re-encoding, reintroducing computational bottlenecks. In contrast, our \texttt{incTNP} applies causal masking uniformly across the entire stream, enabling $\mathcal{O}(N)$ updates indefinitely without buffers. Furthermore, we provide the first quantitative analysis of how causal masking affects the consistency of the prediction rule implied by the model.

\subsection{Streaming Prediction and Implicit Bayesianness}
In streaming settings, maintaining coherent probabilistic updates is crucial. Gaussian Processes (GPs)~\citep{Rasmussen2006Gaussian} offer a gold standard for coherent Bayesian updates but scale poorly ($\mathcal{O}(N^3)$). While sparse GP methods~\citep{BuiNguTur17,WISKI-GP} allow for online updates, they often sacrifice consistency due to their approximations—such as the structured kernel interpolation approximation in \citealt{WISKI-GP}. Furthermore, they lack the meta-learning capabilities of NPs, often struggle in higher-dimensional settings, and generally incur higher inference costs.  Since NPs with dynamic contexts (or causal masking) are not strictly consistent stochastic processes, it is vital to assess how coherent their predictions are. \citet{mlodozeniec2024implicitly} recently proposed a metric of \textit{implicit Bayesianness}, quantifying how closely a prediction rule adheres to exchangeability—a proxy for being implicitly Bayesian (and hence rational in updating posterior beliefs based on new evidence). This is achieved by permuting the data uniformly at random and computing the variance of the log-joint predictive distribution over a set of target variables across these permutations.
We extend this analysis by proposing an alternative metric, and apply it for the first time in the context of NPs to assess the impact of the causal masking mechanism on implicit Bayesianness.

\subsection{Foundation Models for Tabular Data} Our work is also relevant to the growing field of Foundation Models for structured data. Prior-Fitted Networks (PFNs; \citealt{muller2021transformers,TabPFNv2}) and time-series foundation models~\citep{moroshan2025tempopfnsyntheticpretraininglinear} share the in-context learning paradigm of NPs. While highly effective on small datasets, these models often struggle with large historical contexts due to the quadratic attention costs. In real-world applications where data accumulate continuously—such as clinical monitoring or financial forecasting—our efficient incremental architecture offers a path to scaling these foundation models to massive, evolving datasets without discarding historical information.

\section{Incremental Transformer Neural Processes}
In this section, we outline the architecture of the proposed Incremental Transformer Neural Process (\texttt{incTNP}). This novel member of the NP family introduces a causal structure to the standard TNP, enabling linear-time context updates via KV caching while preserving the expressivity of softmax-based attention.

\subsection{Causally Masked Self-Attention}
\label{subsec:causal_masking}
To enable efficient updates without recomputing the entire context, we introduce causal masking into the TNP encoder. By enforcing a lower-triangular causal attention mask $M^{\text{causal}}$ (where token $i$ attends only to tokens $j \le i$) and utilising KV caching to store past representations, we decouple historical data from future observations. This allows \texttt{incTNP} to update the context by processing \textit{only} the newly arrived datapoint.
The update at layer $l$ becomes:
\begin{equation}
\label{eq:incTNP_MHSA}
\bfZ_{l}^t = \text{MHCA}\left(\bfZ_{l-1}^t, \text{M-MHSA}(\bfZ_{l-1}^c, M^{\text{causal}})\right),
\end{equation}
where $\text{M-MHSA}$ denotes Masked Multi-Head Self-Attention. This mechanism reduces the marginal computational cost of an incremental update from $\mathcal{O}(N_c^2)$ to $\mathcal{O}(N_c)$, enabling scalable streaming inference.

\subsection{Efficient training strategy}
\label{subsec:dense_training}
Standard CNPs are trained via a meta-learning objective that approximates the expected predictive log-probability over random tasks: $\mathbb{E}_{\xi \sim p(\xi)}\big[\sum_{n=1}^{N_t} \log p_\theta(\bfy^t_{n} | \bfx^t_{n}, \mcD^c)\big]$. In this regime, gradients are computed for a fixed context size $N_c$ per task, potentially leading to high variance and inefficient sample usage. Inspired by LLM training, we propose a dense autoregressive training strategy, applied in the model termed \texttt{incTNP-Seq}. Instead of partitioning data into disjoint context and target sets, we treat the data as a single sequence $\mcD^{\text{seq}} = [(\bfx_1, \bfy_1), \dots, (\bfx_N, \bfy_N)]$. We construct two parallel streams for the transformer differentiated through a binary flag: a ``context" stream $\bfZ^{c, \text{seq}}$ and a ``target" stream $\bfZ^{t, \text{seq}}$. This change requires application of causal masking not only in the self-attention blocks (as in \cref{eq:incTNP_MHSA}) but also in the cross-attention blocks (MHCA). The target representation update at layer $l$ becomes:
\begin{equation}\bfZ_{l}^t = \text{M-MHCA}\left(\bfZ_{l-1}^t, \text{M-MHSA}(\bfZ_{l-1}^c, M^{\text{causal}}), M^{\text{causal}}\right).
\end{equation}
By masking the MHCA, we enforce that the prediction for the $n$-th target $\bfy_n$ relies solely on the context history $(\bfx_{1:n}, \bfy_{1:n})$, effectively computing the loss for \textit{every possible prefix context} simultaneously in a single forward pass. This dense supervision signal amortises the cost of training across all context sizes, improving data efficiency and generalisation. Such a training paradigm is only possible under \texttt{incTNP}'s causal masking structures; attempting to train a standard \texttt{TNP-D} via this approach would fail since each target token could `cheat' by simply attending to its corresponding context token one step later in the stream. 


\subsection{Test-time Computational Complexity}
\label{subsec:complexity}
\paragraph{Runtime Cost} We demonstrate the efficiency gains of the \texttt{incTNP} family by analysing a streaming scenario with a growing context of instantaneous size $N_s$ and $N_t$ target locations. In the factorised deployment, existing TNPs must re-encode the full history at every step, incurring an $\mathcal{O}(N_s^2)$ cost per update. In contrast, \texttt{incTNP} processes only the new token, reducing update latency to $\mathcal{O}(N_s)$. These gains are magnified in AR deployment, where predictions are generated sequentially over the targets to capture dependencies. Whilst this regime is prohibitively expensive for standard models—which scale as $\mathcal{O}(N_t \cdot N_s^2)$ per update—\texttt{incTNP} leverages KV caching to reduce to an $\mathcal{O}(N_t \cdot N_s)$ cost\footnote{Analysis assumes $N_s \gg N_t$}. This linear scaling makes high-fidelity autoregressive inference computationally feasible in real-time streaming applications with large histories.

\paragraph{KV Cache Memory Footprint} Caching the streamed key-value history incurs a persistent memory cost of $\mathcal{O}(L D_z N_s)$ for factorised predictions, where $L$ is the number of transformer layers and $D_z$ is the embedding dimension. AR prediction has a peak KV cache memory cost of $\mathcal{O}(SLD_z(N_s+N_t))$ over $S$ sample unrolls. When accounting for peak transient activation memory to evaluate the overall peak memory footprint under standard attention kernels, \texttt{incTNP} scales linearly with respect to the context stream size whilst \texttt{TNP-D} scales quadratically due to its full self-attention computation over the context. However, \texttt{TNP-D} can also obtain linear peak-memory scaling through the use of memory-efficient attention kernels. Practically, \texttt{incTNP}'s KV caching may present a memory bottleneck at extreme scale, although we encountered no issues in our experiments. We provide an expanded analysis in \cref{app_subsec:memory_cost_analysis}.

\subsection{Theoretical trade-offs}
\label{sec:breaking_permutation_invariance}
While offering significant computational gains, the \texttt{incTNP} architecture sacrifices a core property of standard Neural Processes: permutation invariance with respect to the context set. Unlike \texttt{TNP-D}, \texttt{incTNP} defines a distribution that is sensitive to the order of context set. However, we argue that in settings where the context is not fixed (e.g., streaming data or autoregressive prediction)---and where the \textit{conditional} inconsistency inherent to \texttt{TNP-D} and all other CNPs becomes particularly noticeable---
strictly enforcing invariance with respect to the context constrains model design without necessarily guaranteeing a more ``rational" prediction rule. To quantify the extent and impact of this violation, we adopt a metric of \textit{implicit Bayesianness}.

\paragraph{Theoretical Framework} Defining $\mathbf{z}_i \vcentcolon= (\mathbf{x}_i, \mathbf{y}_i)$, \citet{mlodozeniec2024implicitly} consider a prediction rule in a general unsupervised setting as a sequence of functions $q_n$ that map a history of observations $\mathbf{z}_{1:n} \vcentcolon= (\mathbf{z}_1,\dots,\mathbf{z}_n)$ taking values in some space $\mathcal{Z}^n \vcentcolon= \mathcal{X}^n \times \mathcal{Y}^n$ to a predictive distribution over the next observation $q(\mathbf{z}_{n+1}|\mathbf{z}_{1:n})$. The sequence of predictions $q_{1:n}$ defines a joint distribution on $\mathcal{Z}^n$.
We transform this joint into a (finitely) exchangeable distribution $\hat{q}_{1:n}$ by averaging over the set $\Pi_n$ of all possible permutations:
\begin{equation}
\label{eq:exchangeable_prediction_rule}\hat{q}_{1:n}(\bfz_1, \dots, \bfz_n) = \frac{1}{n!}\sum_{\pi \in \Pi_n}q_{1:n}(\bfz_{\pi(1)}, \dots, \bfz_{\pi(n)})
\end{equation} where we refer to the output of such a transformation, $\hat{q}$, as an \textit{exchangeabilified} version of $q$.

\begin{proposition}\label{prop:bruno} Let $p : \mathcal{Z}^n \to \mathbb{R}$ be the true (joint) data-generating distribution, which we assume to be exchangeable. Then, the following decomposition holds:
\begin{equation}\label{eq:joints_kl_gap}
D_{\text{KL}}(q_{1:n}\mathrel{\|}p) = D_{\text{KL}}(\hat{q}_{1:n}\mathrel{\|}p) + D_{\text{KL}}(q_{1:n}\mathrel{\|}\hat{q}_{1:n})
\end{equation}
and, in particular, $D_{\text{KL}}(q_{1:n} \mathrel{\|} p) \geq D_{\text{KL}}(\hat{q}_{1:n} \mathrel{\|} p)$ with equality if and only if $q_{1:n}$ is exchangeable.
\end{proposition}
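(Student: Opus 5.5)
The plan is to expand both sides of \cref{eq:joints_kl_gap} as integrals and show their difference is zero. The only structural input is that $\log(\hat{q}_{1:n}/p)$ is a permutation-invariant function. Write $q \vcentcolon= q_{1:n}$ and $\hat{q} \vcentcolon= \hat{q}_{1:n}$, and for $\pi \in \Pi_n$ let $\bfz_\pi \vcentcolon= (\bfz_{\pi(1)},\dots,\bfz_{\pi(n)})$. By the chain rule for logarithms,
\begin{equation*}
D_{\text{KL}}(q\mathrel{\|}p) = \E_{q}\Big[\log\tfrac{q}{\hat{q}}\Big] + \E_{q}\Big[\log\tfrac{\hat{q}}{p}\Big] = D_{\text{KL}}(q\mathrel{\|}\hat{q}) + \E_{q}\Big[\log\tfrac{\hat{q}}{p}\Big].
\end{equation*}
It therefore suffices to prove $\E_{q}[\log(\hat{q}/p)] = \E_{\hat{q}}[\log(\hat{q}/p)]$, since the right-hand side is exactly $D_{\text{KL}}(\hat{q}\mathrel{\|}p)$.

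To establish this identity, set $f(\bfz_{1:n}) \vcentcolon= \log \hat{q}(\bfz_{1:n}) - \log p(\bfz_{1:n})$.
\begin{itemize}
\item $\hat{q}$ is invariant under every $\pi \in \Pi_n$ by construction in \cref{eq:exchangeable_prediction_rule}, because composing permutations just reindexes the sum over $\Pi_n$.
\item $p$ is invariant by the exchangeability assumption.
\end{itemize}
Hence $f(\bfz_\pi) = f(\bfz)$ for all $\pi$. Coordinate permutations preserve the product reference measure on $\mathcal{Z}^n$, so changing variables gives
\begin{equation*}
\int q(\bfz_\pi) f(\bfz)\,d\bfz = \int q(\bfz) f(\bfz)\,d\bfz
\end{equation*}
for every $\pi$. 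Averaging over $\Pi_n$ then yields $\E_{\hat{q}}[f] = \frac{1}{n!}\sum_\pi \E_{q}[f\circ \pi^{-1}] = \E_q[f]$, which gives the decomposition.

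The inequality follows from nonnegativity of $D_{\text{KL}}(q\mathrel{\|}\hat{q})$. Equality holds iff $q = \hat{q}$ almost everywhere. If $q$ is exchangeable, each summand in \cref{eq:exchangeable_prediction_rule} equals $q$, so $\hat{q} = q$. Conversely, $q = \hat{q}$ forces $q$ to inherit the invariance of $\hat{q}$, so $q$ is exchangeable.

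The main obstacle is measure-theoretic bookkeeping rather than the algebra. I would handle it as follows:
\begin{itemize}
\item I would need absolute continuity $q \ll \hat{q}$. This always holds, since $\hat{q} \ge q/n!$ pointwise, with the identity permutation contributing one summand.
\item I would adopt the convention that when $D_{\text{KL}}(q\mathrel{\|}p)=\infty$ the identity holds in $[0,\infty]$. Here $\hat{q}\ll p$ iff $q \ll p$, using exchangeability of $p$ to transport null sets.
\item The chain-rule split requires that not both terms be infinite with opposite signs. Both KL terms are nonnegative, and the cross term equals a KL divergence after the symmetrisation step, so this holds.
\end{itemize}
I would first carry out the argument assuming all quantities are finite and densities are positive, then remark on these extensions.
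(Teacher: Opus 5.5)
Your proof is correct and follows essentially the same route as the paper's. You insert $\hat{q}_{1:n}$ in the logarithm to split off $D_{\text{KL}}(q_{1:n}\mathrel{\|}\hat{q}_{1:n})$, then average the cross term over $\Pi_n$, change variables, and use the permutation invariance of both $\hat{q}_{1:n}$ and $p$ to turn $\E_{q}[\log(\hat{q}_{1:n}/p)]$ into $\E_{\hat{q}}[\log(\hat{q}_{1:n}/p)]$; your measure-theoretic remarks (notably $q_{1:n}\le n!\,\hat{q}_{1:n}$, which gives $q_{1:n}\ll\hat{q}_{1:n}$ and even bounds the KL gap by $\log n!$) are extra care the paper omits, not a different argument.
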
 

While \cref{prop:bruno} applies to general unsupervised settings, we adapt it to the more relevant regression setting as follows.

\begin{proposition}\label{prop:bruno_regression} Let the true data-generating distribution $p$ be \textbf{conditionally} exchangeable, i.e., for the true \textbf{conditional} data-generating distribution $p_{Y_{1:n}|X_{1:n}} : \mathcal{Y}^n \times \mathcal{X}^n \to \mathbb{R}$ the following holds:
\begin{equation}\label{eq:cond_exch}
    p(\mathbf{y}_{1:n} \mathrel{|} \mathbf{x}_{1:n}) = p(\mathbf{y}_{\pi_{1:n}} \mathrel{|} \mathbf{x}_{\pi_{1:n}}) \quad \forall\pi, \mathbf{x}_{1:n}, \mathbf{y}_{1:n},
\end{equation} and the true \textbf{marginal} data-generating distribution over inputs $p_{X_{1:n}} : \mathcal{X}^n \to \mathbb{R}$ be i.i.d. Next, for any conditional density $q_{Y_{1:n}|X_{1:n}} : \mathcal{Y}^n \times \mathcal{X}^n \to \mathbb{R}$, obtain the exchangeabilified version of it as
\begin{equation}\label{eq:exchangeabilification}
    \hat{q}_{Y_{1:n}|X_{1:n}}(\mathbf{y}_{1:n}|\mathbf{x}_{1:n}) = \frac{1}{n!}\sum_{\pi \in \Pi_n} q_{Y_{1:n}|X_{1:n}}(\mathbf{y}_{\pi_{1:n}}|\mathbf{x}_{\pi_{1:n}}).
\end{equation} Then, the following decomposition holds:
\begin{align}\label{eq:kl_gap}
    \mathbb{E}_{p_{X_{1:n}}} &\left[D_{\text{KL}}\left[q_{Y_{1:n}|X_{1:n}} \mathrel{\|} p_{Y_{1:n}|X_{1:n}} \right]\right] \notag \\
    &= \mathbb{E}_{p_{X_{1:n}}} \left[D_{\text{KL}}\left[\hat{q}_{Y_{1:n}|X_{1:n}} \mathrel{\|} p_{Y_{1:n}|X_{1:n}} \right]\right] \notag \\
    &\quad + \underbrace{\mathbb{E}_{p_{X_{1:n}}}\left[D_{\text{KL}}\left[q_{Y_{1:n}|X_{1:n}} \mathrel{\|} \hat{q}_{Y_{1:n}|X_{1:n}} \right]\right]}_{\text{KL Gap}}
\end{align}and, in particular, $\mathbb{E}_{p_{X_{1:n}}} \left[D_{\text{KL}}\left[q_{Y_{1:n}|X_{1:n}} \mathrel{\|} p_{Y_{1:n}|X_{1:n}} \right]\right] \geq \mathbb{E}_{p_{X_{1:n}}} \left[D_{\text{KL}}\left[\hat{q}_{Y_{1:n}|X_{1:n}} \mathrel{\|} p_{Y_{1:n}|X_{1:n}} \right]\right]$ with equality for all $n$ if and only if $q_{Y_{1:n}|X_{1:n}}$ is conditionally exchangeable.
\end{proposition}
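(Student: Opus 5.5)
The plan is a pointwise argument: fix $\mathbf{x}_{1:n}$, prove the decomposition for the conditional densities at that input, and then take the expectation over $p_{X_{1:n}}$. For fixed $\mathbf{x}_{1:n}$, write the KL as
\begin{equation*}
D_{\text{KL}}[q\,\|\,p] = \mathbb{E}_{q}\left[\log \frac{q}{\hat{q}}\right] + \mathbb{E}_{q}\left[\log \frac{\hat{q}}{p}\right],
\end{equation*}
where all densities are evaluated at $(\mathbf{y}_{1:n}\,|\,\mathbf{x}_{1:n})$. The first term is exactly $D_{\text{KL}}[q\,\|\,\hat{q}]$. The remaining task is to show that, once averaged over $p_{X_{1:n}}$, the second term equals $\mathbb{E}_{p_{X_{1:n}}}\big[D_{\text{KL}}[\hat{q}\,\|\,p]\big]$. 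In other words, the expectation under $q$ can be replaced by one under $\hat{q}$.

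The key step is a permutation-averaging argument. Define $f(\mathbf{x}_{1:n},\mathbf{y}_{1:n}) = \log \hat{q}(\mathbf{y}_{1:n}|\mathbf{x}_{1:n}) - \log p(\mathbf{y}_{1:n}|\mathbf{x}_{1:n})$. Then $f$ is invariant under jointly permuting $(\mathbf{x}_i,\mathbf{y}_i)$ pairs:
\begin{itemize}
\item $\hat{q}$ is invariant by construction in \cref{eq:exchangeabilification}, since the permutations form a group;
\item $p$ is invariant by the conditional exchangeability assumption \cref{eq:cond_exch}.
\end{itemize}

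I then consider the joint measure $p_{X_{1:n}}(\mathbf{x}_{1:n})\,q(\mathbf{y}_{1:n}|\mathbf{x}_{1:n})$ and compute $\iint p_X\, q\, f$. For each $\pi$, I change variables $(\mathbf{x},\mathbf{y}) \mapsto (\mathbf{x}_{\pi},\mathbf{y}_{\pi})$. This map has unit Jacobian and preserves Lebesgue measure. Because $p_{X_{1:n}}$ is i.i.d., and hence permutation invariant, and because $f$ is invariant, the integral equals $\iint p_X(\mathbf{x})\, q(\mathbf{y}_\pi|\mathbf{x}_\pi)\, f(\mathbf{x},\mathbf{y})$. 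Averaging over all $\pi\in\Pi_n$ turns $q$ into $\hat{q}$. This gives $\mathbb{E}_{p_X}\mathbb{E}_{q}[f] = \mathbb{E}_{p_X}\mathbb{E}_{\hat{q}}[f] = \mathbb{E}_{p_X}\big[D_{\text{KL}}[\hat{q}\,\|\,p]\big]$, which is \cref{eq:kl_gap}.

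This swap is the main subtle point. It only holds after integrating over $\mathbf{x}$, because a permutation moves the inputs as well as the outputs. That is exactly why the statement is phrased in expectation over $p_{X_{1:n}}$ and needs the i.i.d. (exchangeable) input marginal. I would also note the integrability assumptions needed for Fubini, namely finite KLs.

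For the inequality, KL is nonnegative, so the KL gap is $\geq 0$. Equality holds iff $D_{\text{KL}}[q\,\|\,\hat{q}]=0$ for $p_X$-almost every $\mathbf{x}$, i.e. $q=\hat{q}$ almost everywhere. Since $\hat{q}$ is conditionally exchangeable, this is equivalent to $q$ being conditionally exchangeable. Conversely, if $q$ is conditionally exchangeable, then every summand in \cref{eq:exchangeabilification} equals $q$, so $\hat{q}=q$ and the gap vanishes. Both directions hold up to $p_X$-null sets and for each $n$.
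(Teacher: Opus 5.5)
Your proposal is correct and follows essentially the same route as the paper's proof. You insert $\hat{q}$ inside the logarithm to split off the KL gap, and then handle the cross term $\int p_{X_{1:n}}\, q \log(\hat{q}/p)$ by averaging over permutations. You change variables $(\mathbf{x}_{1:n},\mathbf{y}_{1:n})\mapsto(\mathbf{x}_{\pi(1:n)},\mathbf{y}_{\pi(1:n)})$ and use the permutation invariance of $\hat{q}$, of $p_{Y_{1:n}|X_{1:n}}$ and of the i.i.d.\ input marginal to turn $q$ into $\hat{q}$, with the equality case following from nonnegativity of KL. Your remarks that this swap only holds after integrating over the inputs, and that the equality characterisation holds up to $p_{X_{1:n}}$-null sets, are accurate refinements that the paper leaves implicit.
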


Informally, \cref{prop:bruno_regression} implies that, for any true data-generating mechanism, we can `gain' $\mathbb{E}_{p_{X_{1:n}}}\left[D_{\text{KL}}\left[q_{Y_{1:n}|X_{1:n}} \mathrel{\|} \hat{q}_{Y_{1:n}|X_{1:n}} \right]\right]$ in performance by replacing the non-exchangeable prediction rule with its exchangeable counterpart. We term this quantity the \textit{KL gap}; it acts as a proxy for the performance lost due to the conditional non-exchangeability of $q_{Y_{1:n}|X_{1:n}}$ and quantifies the deviation from Bayesian behaviour. A KL gap of zero implies that $q_{Y_{1:n}|X_{1:n}}$ is perfectly exchangeable and thus implicitly Bayesian. We provide proofs for \cref{prop:bruno,prop:bruno_regression} in \cref{app:proofs_bruno}.
The approach can also be extended to non-iid inputs (\cref{app:noniid_gap}).


\paragraph{Implicit Bayesianness for NPs} Since we are concerned with NPs applied to streaming data, we implement the proposed metric as follows. For a particular dataset of $N$ observations $\{\bfx_i, \bfy_i\}_{i=1}^N$, we define our NP prediction rule as an autoregressive factorisation of the joint predictive distribution: $p_{\text{NP}}(\bfy_{1:N}|\bfx_{1:N}) = \Pi_{i=1}^Np_{\text{NP}}(\bfy_i|\bfy_{1:i-1}, \bfx_{1:i})$---simulating how at each update we predict the next point conditioned on all past history\footnote{Because NPs are not usually trained to support the null context scenario, we condition on a few fixed datapoints $p_{\text{NP}}(\bfy_{1:N}|\bfx_{1:N},\mcD^{\text{fixed}}) = \Pi_{i=1}^Np_{\text{NP}}(\bfy_i|\bfy_{1:i-1}, \bfx_{1:i}, \mcD^{\text{fixed}})$, but this does not change the arguments above.}. We construct the exchangeable prediction rule $\hat{p}_{\text{NP}}$ following \cref{eq:exchangeabilification}, and compute the expected value of the KL gap over a set of $N_{\text{perm}}$ joint permutations of the input-output observations through Monte Carlo averaging (see \cref{app:metric_implementation_details} for further details).


This empirical KL gap is computed under a teacher-forced protocol in which ground-truth observations are sequentially appended to the context, as is typical in streaming settings. We therefore interpret it as a measure of implicit Bayesianness under the \textit{one-at-a-time} streaming setting considered here. Alternative empirical setups could be used to study implicit Bayesianness under different protocols such as batched streaming, as well as to investigate length generalisation or the compounding of prediction errors in streaming setups. As argued in \citet{mlodozeniec2024implicitly}, implicit Bayesianness is a desirable property, but one that can be trivially achieved by uninformative prediction rules. As such, in our empirical investigation we jointly report both the KL gap, as well as a measure of performance in terms of the average negative log-likelihood.

\section{Experiments}
\label{sec:experiments}

\begin{table*}[htb]
    \captionsetup[subtable]{labelformat=simple}
    \renewcommand\thesubtable{(\alph{subtable})}
    \caption{Average Test Log-Likelihoods ($\uparrow$) for (a) Factorised and (b) AR deployment. We report the absolute performance of the reference non-incremental TNP-D. For all the other methods we report the difference ($\Delta$) relative to the reference performance. Held-out task are shown in purple, while transfer scenarios, such as Sim-to-Real (Tabular) and Forecasting (Temperature) are coloured in orange. Best results (including ties within one SEM) are bolded, with standard errors provided in \cref{tab_app:aggregated_results}.}
    \label{tab:aggregated_results}
    \centering

    \begin{subtable}{\textwidth}
        \centering
        \caption{\textbf{Factorised predictions}. \texttt{incTNP-Seq} achieves parity or outperforms \texttt{TNP-D} on held-out tasks, and dominates on transfer scenarios.}
        \label{tab:aggregated_results_factorised}
        
        \begin{small}
        \begin{sc}
        \resizebox{\textwidth}{!}{
            \begin{tabular}{l c cc cc}
            \toprule
            & \multicolumn{1}{c}{\textbf{Reference}} & \multicolumn{2}{c}{\textbf{Ours} ($\Delta$ ($\uparrow$) relative to Ref.)} & \multicolumn{2}{c}{\textbf{Baselines} ($\Delta$ ($\uparrow$) relative to Ref.)} \\
            \cmidrule(lr){2-2} \cmidrule(lr){3-4} \cmidrule(lr){5-6}
            
            \textbf{Dataset} & \textbf{TNP-D} & \textbf{incTNP} & \textbf{incTNP-Seq} & \textbf{CNP} & \textbf{LBANP} \\
            \midrule
            
            \rowcolor{blue!5} 1D GP & $0.431$ & \textcolor{BrickRed}{${-0.013}$} & \textcolor{BrickRed}{${-0.002}$} & \textcolor{BrickRed}{${-0.230}$} & \textcolor{OliveGreen}{$\mathbf{+0.004}$} \\
            \rowcolor{blue!5} Tabular (Synthetic) & $0.154$ & \textcolor{BrickRed}{${-0.020}$} & \textcolor{OliveGreen}{$\mathbf{+0.007}$} & \textcolor{BrickRed}{${-0.330}$} & \textcolor{BrickRed}{${-0.058}$} \\
            \rowcolor{orange!10} Skillcraft & $-0.954$ & \textcolor{OliveGreen}{$+0.002$} & \textcolor{OliveGreen}{$\mathbf{+0.008}$} & \textcolor{BrickRed}{${-0.134}$} & \textcolor{BrickRed}{${-0.031}$} \\
            \rowcolor{orange!10} Powerplant & $-0.008$ & \textcolor{OliveGreen}{$+0.002$} & \textcolor{OliveGreen}{$\mathbf{+0.003}$} & \textcolor{BrickRed}{${-0.269}$} & \textcolor{BrickRed}{${-0.024}$} \\
            \rowcolor{orange!10} Elevators & $\textbf{-0.322}$ & \textcolor{BrickRed}{${-0.024}$} & \textcolor{BrickRed}{${-0.009}$} & \textcolor{BrickRed}{${-0.942}$} & \textcolor{BrickRed}{${-0.205}$} \\
            \rowcolor{orange!10} Protein & $-1.152$ & \textcolor{BrickRed}{${-0.028}$} & \textcolor{OliveGreen}{$\mathbf{+0.036}$} & \textcolor{BrickRed}{${-0.188}$} & \textcolor{BrickRed}{${-0.024}$} \\
            \rowcolor{blue!5} Temperature (Interp) & $-1.703$ & \textcolor{BrickRed}{${-0.011}$} & \textcolor{OliveGreen}{$\mathbf{+0.018}$} & \textcolor{BrickRed}{${-0.533}$} & \textcolor{BrickRed}{${-0.090}$} \\
            \rowcolor{orange!10}  Temperature (Forecast) & $-2.571$ & \textcolor{OliveGreen}{$+0.030$} & \textcolor{OliveGreen}{$\mathbf{+0.690}$} & \textcolor{OliveGreen}{$+0.181$} & \textcolor{OliveGreen}{$+0.268$} \\
            \bottomrule
            \end{tabular}
        }
        \end{sc}
        \end{small}
    \end{subtable}

    \vspace{0.5cm} 

    \begin{subtable}{\textwidth}
        \centering
        \caption{\textbf{AR predictions.} Cost ($\downarrow$) is the ratio of inference time versus \texttt{incTNP-Seq} ($T_{\text{model}} / T_{\text{incTNP-Seq}}$) ($>1.0$ indicates slower inference).}
        \label{tab:aggregated_results_ar}
        
        \begin{small}
        \begin{sc}
        {
        \setlength{\tabcolsep}{2.5pt} 
        \resizebox{\textwidth}{!}{
        \begin{tabular}{l cc cc cc cc cc cc}
        \toprule
        & \multicolumn{2}{c}{\textbf{TNP-D} (Ref)} & \multicolumn{2}{c}{\textbf{incTNP}} & \multicolumn{2}{c}{\textbf{incTNP-Seq}} & \multicolumn{2}{c}{\textbf{CNP}}  & \multicolumn{2}{c}{\textbf{LBANP}} & \multicolumn{2}{c}{\textbf{TNP-A}} \\
        \cmidrule(lr){2-3} \cmidrule(lr){4-5} \cmidrule(lr){6-7} \cmidrule(lr){8-9} \cmidrule(lr){10-11} \cmidrule(lr){12-13} 
        
        \textbf{Dataset} & LL ($\uparrow$) & Cost & $\Delta$ ($\uparrow$) & Cost & $\Delta$ ($\uparrow$) & Cost & $\Delta$ ($\uparrow$) & Cost & $\Delta$ ($\uparrow$) & Cost & $\Delta$ ($\uparrow$) & Cost \\
        \midrule
        \rowcolor{blue!5} 1D GP & $0.767$ & $1.21\times$ & \textcolor{BrickRed}{$-0.007$} & $1.01\times$ & \textcolor{OliveGreen}{$\mathbf{+0.002}$} & $1.00\times$ & \textcolor{BrickRed}{$-0.230$} & $0.11\times$ & \textcolor{BrickRed}{$-0.001$} & $1.38\times$ & \textcolor{OliveGreen}{$\mathbf{+0.004}$} & 
        $1.04\times$ \\
        \rowcolor{blue!5} Temperature (Interp) & $-1.670$ & $3.57\times$ & \textcolor{BrickRed}{$-0.038$} & $1.00\times$ & \textcolor{OliveGreen}{$+0.016$} & $1.00\times$ & \textcolor{BrickRed}{$-0.552$} & $0.02\times$ & \textcolor{BrickRed}{$-0.097$} & $1.50\times$ & \textcolor{OliveGreen}{$\mathbf{+0.031}$} &
        $0.81\times$ \\
        \rowcolor{orange!10} Temperature (Forecast) & $-1.749$ & $3.57\times$ & \textcolor{BrickRed}{$-0.005$} & $1.00\times$ & \textcolor{OliveGreen}{$\mathbf{+0.061}$} & $1.00\times$ & \textcolor{BrickRed}{$-0.559$} & $0.02\times$ & \textcolor{BrickRed}{$-0.120$} & $1.50\times$ & \textcolor{OliveGreen}{$+0.035$} &
        $0.81\times$ \\
        \bottomrule
        \end{tabular}
        }
        }
        \end{sc}
        \end{small}
    \end{subtable}
\end{table*}

We evaluate our approach on synthetic, tabular, and environmental prediction tasks. Our analysis proceeds in three stages: 1) We assess the impact of causal masking in static settings; 2) We measure probabilistic consistency via implicit Bayesianness; and 3) We validate the scalability of \texttt{incTNP} in streaming scenarios using both factorised and autoregressive (AR) deployment. Detailed data generation protocols and hyperparameters are provided in \cref{app:datasets}. We consider the following domains:
\begin{itemize}[leftmargin=*, noitemsep]
    \item \textbf{1D GP Regression:} Synthetic tasks generated from Gaussian Processes (GPs) with randomised radial basis function (RBF) kernel hyperparameters, trained on up to $64$ context points. This standard benchmark tests the model's ability to meta-learn flexible function approximations.
    \item \textbf{Tabular Data Regression:} Following \citet{hollmann2023tabpfntransformersolvessmall}, we train on synthetic data generated via structural causal models (SCMs) and evaluate on both held-out synthetic tasks and four real-world UCI regression benchmarks (Skillcraft, Powerplant, Elevators, Protein).
    \item \textbf{Station Temperature Prediction:} A real-world spatiotemporal task involving temperature prediction across stations in Africa and Europe (see \cref{fig:had_isd_station_density}). We train the models on a spatiotemporal interpolation task, and evaluate performance on both the interpolation scenario, as well as a forecasting one, where targets are sampled strictly from future time windows relative to the context.
\end{itemize}

\subsection{Static Context Performance}

\paragraph{Causal masking maintains predictive performance.} As shown in \cref{tab:aggregated_results} (see \cref{tab_app:aggregated_results} for standard errors), when evaluated in the static context setting, both in factorised (\cref{tab:aggregated_results_factorised}), as well as AR mode (\cref{tab:aggregated_results_ar}), the \texttt{incTNP} variants achieve competitive if not better performance relative to the non-incremental \texttt{TNP-D} baseline. This is important to establish, as it shows that \texttt{incTNP} achieves computational efficiency and scalability without compromising predictive performance.
We observe the following key trends:

\textbf{Impact of Training Strategy:} While causal masking inherently restricts the receptive field—evident in the slight performance drop of the vanilla \texttt{incTNP} compared to the full-context \texttt{TNP-D}—our data-efficient training strategy in \texttt{incTNP-Seq} effectively counteracts this limitation. Consequently, \texttt{incTNP-Seq} closes the gap to \texttt{TNP-D} on simpler tasks (1D GP) and outperforms it on more complex benchmarks (Tabular and Temperature) by leveraging the richer training signal.


\textbf{Autoregressive Efficiency:} In AR mode, \texttt{incTNP-Seq} exceeds the predictive performance of \texttt{TNP-D} while significantly reducing computational cost. By utilising linear-time $\mathcal{O}(N)$ context updates rather than re-encoding the full history ($\mathcal{O}(N^2)$), we achieve substantial speedups (e.g., $3.5\times$ faster on Temperature) without sacrificing accuracy. We also compare against the autoregressive \texttt{TNP-A}, with \texttt{incTNP-Seq} performing competitively whilst supporting efficient incremental context updates. \texttt{TNP-A} attains strong performance and low static-context inference cost on Temperature tasks; however, it retains an $\mathcal{O}(N^2)$ context-update cost, which becomes prohibitively expensive at scale for streaming. For instance, \texttt{TNP-A} is $3.5\times$ slower than \texttt{incTNP} on Temperature at $N_s=100{,}000$; we discuss \texttt{TNP-A} runtime and additional costs in \cref{app:tnp_a}.


\textbf{Comparison to Baselines:} Our method bridges the gap between flexibility and performance. Our best variant, \texttt{incTNP-Seq}, significantly outperforms \texttt{CNP}, the primary baseline supporting incremental updates, whose performance is limited by its capacity. Furthermore, \texttt{incTNP-Seq} outperforms or achieves parity with \texttt{LBANP}~\citep{feng2022latent}, a strong baseline that offers efficient forward passes via pseudo-tokens but does not support incremental updates. \texttt{incTNP} follows the same trend, with the exception of zero-shot temperature forecasting under factorised prediction, where the combination of highly correlated targets and train-test mismatch favours the \texttt{CNP} baseline over both \texttt{TNP-D} and \texttt{incTNP}, likely due to underfitting. Importantly, performance is recovered through AR deployment, which is better matched to the target structure of this task, or by using the densely trained \texttt{incTNP-Seq}.



\paragraph{Dense autoregressive training improves hyperparameter robustness.} 

\begin{figure}[htb]
    \centering
    \includegraphics[width=\linewidth]{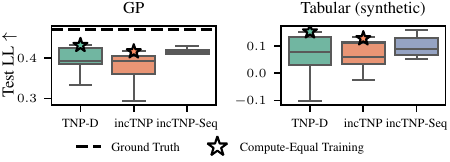}
    \caption{Test log-likelihoods ($\uparrow$) on synthetic GP (RBF kernel, $N_c$ up to $64$) and Tabular datasets across multiple training configurations. \texttt{incTNP-Seq} exhibits reduced variance compared to \texttt{TNP-D} and \texttt{incTNP}, demonstrating the robustness offered by its training strategy.}
    \label{fig:LL_vs_hyperparams}
\end{figure}

We assess the hyperparameter sensitivity of each model by sweeping over a range of learning rates and schedules (see \cref{app:static_context}) on the 1D GP and Tabular benchmarks. After filtering for outliers, the performance distributions in \cref{fig:LL_vs_hyperparams} reveal that \texttt{incTNP-Seq} yields significantly lower variance than its counterparts. We attribute this stability to the dense autoregressive training objective, which likely reduces gradient variance (see \cref{app_para:grad_variance}), with implicit regularisation and a smoother loss landscape also potentially contributing. Although this training strategy incurs a higher cost per forward pass ($\mathcal{O}((N_c + N_t)^2)$) compared to standard training ($\mathcal{O}(N_c^2 + N_cN_t)$), the resulting gradients are more informative. To test if this benefit is purely due to compute, we allocate to \texttt{TNP-D} and \texttt{incTNP} a compute budget equivalent to that of \texttt{incTNP-Seq} and report their peak performance, indicated by stars in \cref{fig:LL_vs_hyperparams}. Since standard training is cheaper per step, this budget allows these models to train for more iterations and observe more data. Crucially, even with this advantage, they fail to surpass the peak performance of \texttt{incTNP-Seq}.


\subsection{Implicit Bayesianness}
\label{subsec:implicit_bayesianness}

Next, we investigate the impact of causal masking on implicit Bayesianness—the degree to which the model behaves like a consistent Bayesian predictor during sequential updates. While standard TNPs guarantee permutation invariance for a fixed context set, they do not inherently satisfy conditional consistency as the context grows. \texttt{incTNP} further sacrifices permutation invariance with respect to the context set to enable efficient streaming. 
We therefore ask: Does this sensitivity to context order degrade the model's ability to approximate a valid Bayesian posterior in this setting compared to the \texttt{TNP-D}, a model that is permutation invariant with respect to the context set?

We evaluate this on the 1D GP benchmark using the KL Gap metric described in \cref{sec:breaking_permutation_invariance}. We condition all models on a fixed initial context ($N_c=20$) and evaluate predictions on $N_t=40$ targets. Inference is performed sequentially under a teacher-forced protocol: the model predicts a target, observes the ground truth, updates its context, and proceeds to the next point. We simulate multiple permutations of the same data stream, compute the KL Gap for each, and then average the results. Remarkably, as shown in \cref{fig:implicit_bayesianness}, the KL Gap for \texttt{incTNP} variants—particularly \texttt{incTNP-Seq}—is comparable to that of \texttt{TNP-D}. This finding suggests that the primary driver of inconsistency in streaming settings is the lack of conditional consistency~\citep{xu2023deepstochasticprocessesfunctional} inherent to multiple NP variants, rather than the specific causal masking mechanism of \texttt{incTNP}. In practice, \texttt{incTNP} is no less ``Bayesian" than the standard \texttt{TNP-D} under the empirical protocol considered here.  We provide an additional example in \cref{app:1d_gp_kl_gap_additional_result} which further supports this claim.

\begin{figure}[htb]
    \centering
    \includegraphics[width=\linewidth]{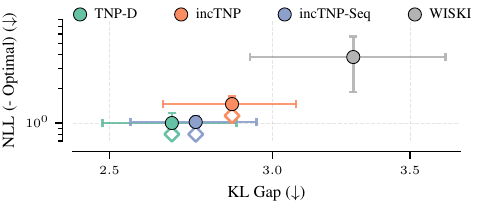}
    \caption{Performance (Joint NLL gap relative to optimal) versus implicit Bayesianness (KL Gap). \texttt{incTNP-Seq} achieves a KL gap similar to the non-causal \texttt{TNP-D}, whereas the streaming GP baseline (\texttt{WISKI}) performs worse in both metrics. Diamond markers ($\diamond$) denote the exchangeable versions of the models.}
    \label{fig:implicit_bayesianness}
\end{figure}

We also benchmark our method against \texttt{WISKI}~\citep{WISKI-GP}, a scalable streaming GP framework. While exact GP inference is perfectly Bayesian (yielding a KL gap of zero), \texttt{WISKI} relies on structured kernel interpolation approximations to achieve scalability. As shown in \cref{fig:implicit_bayesianness}, these induce a significant degradation of Bayesian consistency; \texttt{WISKI} exhibits a higher KL gap than both \texttt{TNP-D} and the \texttt{incTNP} variants, alongside inferior predictive performance. This underscores \texttt{incTNP}'s superior performance in streaming, maintaining greater Bayesian consistency and accuracy than established approximate GP baselines in this one-at-a-time streaming setting.

\subsection{Streaming Performance}
\label{subsec:streaming}

This last section focuses on a streaming setup, measuring performance on a fixed set of targets as observations accumulate. While all variants improve performance with new evidence, \texttt{TNP-D} becomes prohibitively expensive---re-encoding its full history during each update incurs a quadratic cost ($\mathcal{O}(N_s^2)$). In contrast, the \texttt{incTNP} variants leverage caching to achieve linear update scaling ($\mathcal{O}(N_s)$), all while matching or exceeding baseline accuracy.

\paragraph{Streaming Tabular Data (Factorised)}
\begin{figure*}[htb]
    \centering
    \includegraphics[width=\linewidth]{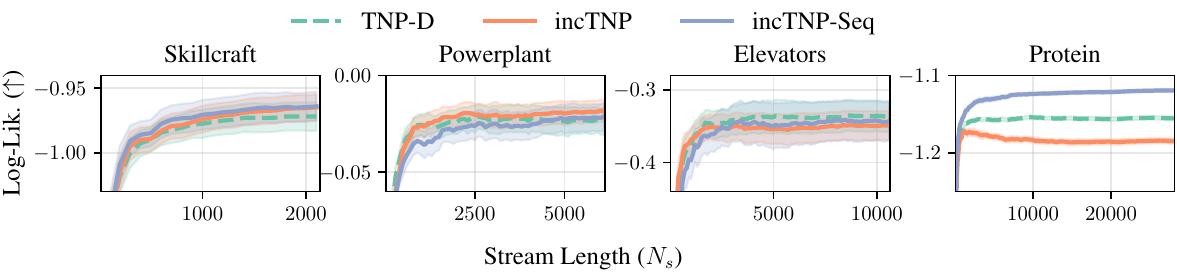}
    \caption{Test log-likelihood ($\uparrow$) on Tabular (real-world datasets). Performance generally increases with stream length for all models. While \texttt{incTNP} and \texttt{TNP-D} remain competitive on smaller datasets, \texttt{incTNP-Seq} demonstrates greater robustness on Protein.}
    \label{fig:streaming_tabular_realworld}
\end{figure*}
\begin{figure}[htb]
    \centering
    \includegraphics[width=\linewidth]{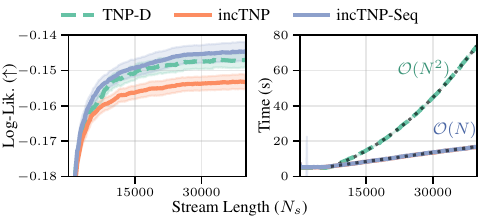}
    \caption{Test log-likelihood ($\uparrow$) on the Tabular (Synthetic) dataset. (Left) Performance improves with more data, with \texttt{incTNP-Seq} consistently outperforming \texttt{TNP-D} and \texttt{incTNP}—even when evaluating beyond the training limit of $1024$ points. (Right) While achieving similar accuracy, the incremental variants demonstrate significantly better scaling efficiency as context size grows.}
    \label{fig:streaming_tabular_synthetic}
\end{figure}
Following the paradigm of tabular foundation models~\citep{TabPFNv2}, we meta-train on a synthetic prior and evaluate in two distinct regimes: (1) held-out synthetic tasks sampled from the same prior used during training; and (2) real-world datasets, which involve sim-to-real transfer and test robustness to distributional shifts. As shown in \cref{fig:streaming_tabular_synthetic,fig:streaming_tabular_realworld}, \texttt{incTNP-Seq} consistently matches or outperforms \texttt{TNP-D}, which in turn outperforms the non-sequential \texttt{incTNP}. Crucially, we observe strong length generalisation: despite training on contexts of up to only $N_c=1024$, the models seamlessly adapt to context streams exceeding $30,000$ points at inference without OOD degradation. However, as the stream grows, the quadratic update cost of \texttt{TNP-D} becomes prohibitive, whereas \texttt{incTNP} remains computationally tractable (\cref{fig:streaming_tabular_synthetic}, right). We provide additional performance and cost comparisons against \texttt{WISKI} and \texttt{LBANP} in \cref{app:streaming_tabular_data}.

\paragraph{Streaming Temperature Prediction (AR)}
\begin{figure}
    \centering
    \includegraphics[width=1.0\linewidth]{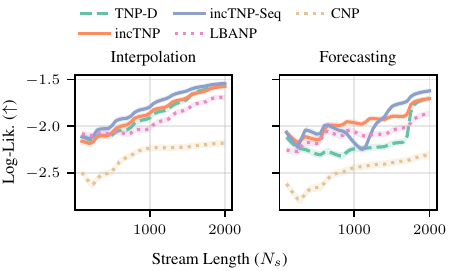}
    \caption{Test log-likelihood ($\uparrow$) on the Temperature Prediction task (streaming AR mode).}
    \label{fig:hadisd_results}
\end{figure}
Finally, we evaluate our models on the streaming version of the temperature prediction task. We consider two regimes of practical interest: Interpolation (e.g., reconstructing data from faulty or sparse sensors), and Forecasting (e.g, predicting future weather patterns based on historical data). Crucially, to capture spatiotemporal correlations, we employ AR decoding over targets at \textit{every} streaming step. This creates a ``nested" computational burden that renders standard \texttt{TNP-D} prohibitive due to its quadratic scaling (see \cref{app_subsec:temp_streaming_ar} for computational cost plots). In contrast, \texttt{incTNP} handles this load efficiently. As shown in \cref{fig:hadisd_results}, \texttt{incTNP-Seq} generally outperforms the other NP variants—including the non-incremental \texttt{LBANP} and the capacity-limited \texttt{CNP}—delivering the necessary performance for correlated, real-time predictions. We report results for \texttt{TNP-A} in \cref{app:streaming_temperature}.

\section{Conclusion}
\label{sec:conclusion}
This work introduces the \texttt{incTNP}, a new Neural Process variant designed for streaming settings where efficient context updates are imperative. This is achieved by incorporating causal masking and KV caching, and adapting the training strategy to a dense autoregressive objective (\texttt{incTNP-Seq}) which improves training efficiency and stability. Empirically, \texttt{incTNP} (especially \texttt{incTNP-Seq}) matches the predictive accuracy of the standard \texttt{TNP-D} across synthetic and real-world benchmarks, while reducing the computational update cost from $\mathcal{O}(N^2)$ to $\mathcal{O}(N)$. Furthermore, using our proposed ``implicit Bayesianness'' metric, we find that in one-at-a-time streaming scenarios where the context evolves sequentially, \texttt{incTNP} retains a prediction rule that is practically no less Bayesian than \texttt{TNP-D}.

Several promising avenues for future research remain. It would be valuable to further analyse \texttt{incTNP}'s robustness to distributional shifts and non-stationary environments, building on our preliminary experiment in \cref{app:distribution_shift}. Its linear-time updates and robust uncertainty estimates also make \texttt{incTNP} an attractive surrogate model for active learning scenarios. When observation order can be controlled, context-order sensitivity may be exploited to improve performance; we propose a context ordering algorithm as part of our broader analysis of ordering sensitivity in \cref{app:sensitivity_to_context_ordering}. Tabular foundation models such as TabPFN~\citep{TabPFNv2} could incorporate \texttt{incTNP}'s incremental framework to support streamed row and column updates. Having laid the foundations for analysing the implicit Bayesianness of NPs, future work developing and interpreting consistency measures could provide further insight into the probabilistic coherence of NP updates. Finally, although our tabular experiments show strong length generalisation, handling $N_s=30{,}000$ after training on $N_c=1024$, further testing is needed at extreme context scales. To address potential KV-cache memory bottlenecks, future work could replace attention with State-Space Models (SSMs)~\citep{gu2024mamba}, whose constant-time updates and fixed state size could enable even larger context windows.



\section*{Acknowledgements}
Cristiana Diaconu is supported by the Cambridge Trust Scholarship. Richard E. Turner is supported by Google, Amazon, ARM, Improbable and the EPSRC Probabilistic AI Hub (EP/Y028783/1).

\section*{Impact Statement}
This paper presents work whose goal is to advance the field
of Machine Learning. There are many potential societal
consequences of our work, none of which we feel must be
specifically highlighted here.

\bibliography{bibliography}
\bibliographystyle{icml2026}


\appendix
\onecolumn

\section{Hardware specifications}
\label[appendix]{app:hardware-specifications}
Training and inference for synthetic GP regression, as well as the training of tabular models, were performed on a single NVIDIA GeForce RTX 2080 Ti GPU (20 CPU cores). For the evaluation of tabular models within the streaming context and for non-streaming evaluation on the Protein UCI dataset, we conduct experiments on a single NVIDIA RTX 6000 Ada Generation GPU (48 GB) with 56 CPU cores. For the computationally expensive temperature prediction task, we perform training and inference for all models on a single NVIDIA A100 80GB GPU with 32 CPU cores. We benchmark tabular model runtimes on the NVIDIA RTX 6000 Ada GPU and environmental model runtimes on the NVIDIA A100 80GB GPU, ensuring that FlashAttention \citep{dao2023flashattention2fasterattentionbetter} is used across both environments.

\section{Experiment Details}
\label[appendix]{app:experiment-details}
For our neural process baseline implementations, we adopt the same architectures as detailed in \citet{ashman2025gridded}; see Appendix A from \citet{ashman2025gridded} for architectural diagrams of \texttt{TNP-D} and \texttt{LBANP} models. Full code for reproducing our experiments is provided in the supplementary material.

\paragraph{Optimiser details} For all experiments, we optimise parameters using AdamW \citep{loshchilov2017decoupled} with gradient norms clipped to 0.5, using $\beta_1=0.9$, $\beta_2=0.999$ and a weight decay of $1 \times 10^{-2}$. We explore the use of both a fixed learning rate and a cosine decay learning rate schedule \citep{loshchilov2017sgdr}. The latter strategy consists of a linear warmup phase across the first 10\% of training steps to the maximum learning rate, followed by a half-cosine curve decay to a minimum learning rate of ${1}\times10^{-6}$. For the tabular data and GP regression tasks, we explore both schedulers and a sweep of learning rates. For the computationally demanding temperature prediction task (HadISD), we use the cosine learning rate scheduler and a learning rate of $5\times 10^{-4}$ for all trained models.

\paragraph{Training seed} Unless otherwise specified, all reported results are from a single training run per model, using a single training seed.

\paragraph{Log-likelihood performance metric} To evaluate the performance of our NP models, we report the average log-likelihood computed over $K$ test tasks. For the $i$-th task with context set $\mathcal{D}_c^{(i)}$ and $N_t$ target points $\{\bfx_{n,i}^{(t)}, \bfy_{n,i}^{(t)}\}_{n=1}^{N_t}$, the metric is defined as:\begin{equation}
LL = \frac{1}{K N_t} \sum_{i=1}^{K} \sum_{n=1}^{N_t} \log \mathcal{N}\left(\bfy_{n,i}^{(t)}; \mu_\theta(\bfx_{n,i}^{(t)}, \mathcal{D}_c^{(i)}), \sigma^2_\theta(\bfx_{n,i}^{(t)},\mathcal{D}_c^{(i)})\right),\label{eq:ll_nps}
\end{equation}
where $\mu_\theta(\cdot)$ and $\sigma^2_\theta(\cdot)$ denote the predicted mean and variance parameters at the target location $\bfx_{n,i}^{(t)}$ conditioned on the context $\mathcal{D}_c^{(i)}$.

\paragraph{Likelihood details} All models parameterise Gaussian predictive likelihoods via a mean and variance. To ensure non-negativity of the variances, we pass the models' raw variance outputs through a softplus function. We also set a minimum noise level of $\*\sigma_{\text{min}}^2 = 1\times10^{-4}$ for tabular data experiments, and $\*\sigma_{\text{min}}^2 = 0$ for all other experiments. Thus, given a target representation $\bfz_t$, our decoder, $\varphi$, outputs:
\begin{equation}
    \*\mu_{\theta, t},\ \log(\exp{(\*\sigma_{\theta, t} - \*\sigma_{\text{min}})^2} - 1) = \varphi(\bfz_t),\quad p(\bfy_t \mid \mcD_c, \bfx_t) = \mcN(\bfy_t; \*\mu_{\theta,t}, \*\sigma_{\theta, t}^2).
\end{equation}

\paragraph{Embedder / decoder details} For all models, we embed each context point to a fixed dimensional representation in $\mathbb{R}^{128}$ (i.e., $D_z=128$) using an MLP $\phi$ with two hidden layers. Similarly, all models use an MLP with two hidden layers to decode target predictions. All MLPs use 128 neurons per hidden layer (matching $D_z)$.

\paragraph{CNP details} Context points $(\bfx_n^c,\bfy_n^c)$ are embedded into a representation $\bfz_n^c \in \mathbb{R}^{D_z}$ and aggregated together using $\bfz_c=\frac{1}{N_c}\sum_{n=1}^{N_c} \bfz_n^c$. This representation is concatenated with the target input $\bfx_t$ and fed through the decoder to produce the predictive likelihood parameters $\{\boldsymbol{\mu}_t,\boldsymbol{\sigma}_t\}$. 


\paragraph{TNP family shared backbone} Each transformer block consists of an attention mechanism (either MHSA or MHCA) and an MLP, with Layer normalisation ($\operatorname{LN}$) applied before each operation (pre-norm). We utilise $H=8$ heads, each with dimension $D_V=16$ and $D_{QK}=D_Z$ throughout. Each block uses two residual connections. The update rule for a block is defined as:
\begin{equation}
    \begin{aligned}
        \widetilde{\bfZ} &\gets \bfZ + \operatorname{Attention}(\operatorname{LN}_1(\bfZ)) \\
        \bfZ &\gets \widetilde{\bfZ} + \operatorname{MLP}(\operatorname{LN}_2(\widetilde{\bfZ})).
    \end{aligned}
\end{equation}

Across all transformer models, we use 5 layers.

\paragraph{LBANP details} For our \texttt{LBANP} implementation, we compress our context set into $L_\text{LBANP}$ latent vectors to obtain a latent representation $\bfU \in \mathbb{R}^{L_\text{LBANP} \times D_z}$, using a Perceiver \citep{jaegle2021perceiver} style encoder: 
\begin{equation}
    \begin{aligned}
    \bfU &\gets \operatorname{MHCA-layer}(\bfU, \bfZ_c) \\
    \bfU &\gets \operatorname{MHSA-layer}(\bfU) \\
    \bfZ_t &\gets \operatorname{MHCA-layer}(\bfZ_t, \bfU). \\
    \end{aligned}
\end{equation}

We use $L_\text{LBANP}=32$ latent vectors for all GP experiments, $L_\text{LBANP}=128$ for tabular data tasks and $L_\text{LBANP}=256$ for temperature modelling, reflecting the increased complexity of the latter two tasks.

\paragraph{Common incTNP transformer details} \texttt{incTNP} class models use the same transformer architecture as TNP models, but substitute unmasked attention operations (MHSA/MHCA) with masked attention operations (M-MHSA/M-MHCA) where appropriate. All causal attention calculations use a causal mask $M^{\text{causal}}$:

\begin{equation}
    M^{\text{causal}}_{i,j} = 
    \begin{cases}
    0 & \text{if } i \geq j  \\
    -\infty & \text{otherwise}.
    \end{cases}
\end{equation}

Incremental TNP models use updated masked attention layer operations where appropriate:

\begin{equation}
    \begin{aligned}
        \widetilde{\bfZ} &\gets \bfZ + \operatorname{Attention}(\operatorname{LN}_1(\bfZ), \text{mask}=M) \\
        \bfZ &\gets \widetilde{\bfZ} + \operatorname{MLP}(\operatorname{LN}_2(\widetilde{\bfZ})).
    \end{aligned}
\end{equation}

During inference, both \texttt{incTNP} and \texttt{incTNP-Seq} use causal masking $M^{\text{causal},c} \in \mathbb{R}^{N_c \times N_c}$ to encode the context set, and unmasked cross-attention to handle target-context MHCA. At training time, \texttt{incTNP-Seq} trains across the sequence $N=N_c+N_t$, imposing a contextual masking $M^{\text{causal},c} \in \mathbb{R}^{N \times N}$, and a causal target-context masking $M^{\text{causal},t} \in \mathbb{R}^{(N-1) \times N}$.

\paragraph{incTNP architecture} To outline the flow of information through \texttt{incTNP}, consider a context set $\{\mathbf{x}_n^c,\mathbf{y}_n^c\}_{n=1}^{N_c}$ and a target set $\{\mathbf{x}_n^t,\mathbf{y}_n^t\}_{n=1}^{N_t}$. We construct input representations by augmenting these points with a boolean source flag ($0$ for context, $1$ for target). Context points are concatenated as $[\mathbf{x}_n^c, \mathbf{y}_n^c, 0]$. Target points are concatenated as $[\mathbf{x}_n^t, \mathbf{0}, 1]$, where $\mathbf{0}$ serves as a zero-vector placeholder for the unobserved target value. These vectors are passed through the MLP embedder to obtain the context and target embeddings $\mathbf{R}_{1:N_c}^{(c)}$ and $\mathbf{R}_{1:N_t}^{(t)}$. These representations are propagated through the $L=5$ self-attention and cross-attention layers:

\begin{flalign}
 \mathbf{Z}_l^{(c)}&=\text{M-MHSA-layer}(\mathbf{Z}_{l-1}^{(c)},M^{\text{causal}}) \quad l\in{1, \cdots,L} \quad \text{where} \quad \mathbf{Z}_0^{(c)}=\bfR_{1:N_c}^{(c)} \\
 \mathbf{Z}_l^{(t)}&=\text{MHCA-layer}(\mathbf{Z}^{(t)}_{l-1},\mathbf{Z}_{l}^{(c)}) \quad l\in{1, \cdots,L} \quad \text{where} \quad \mathbf{Z}_0^{(t)}=\bfR_{1:N_t}^{(t)}.
\end{flalign}

Finally, the MLP decoder maps the resulting representation $\mathbf{Z}_{L}^{(t)}$ to the parameters of the factorised predictive distribution $p_\theta(\mathbf{Y}_t \mid \mathcal{D}_c, \mathbf{X}_t)$. A visualisation of \texttt{incTNP}'s architecture can be seen in \autoref{fig:inctnp_arch}.

\begin{figure}[htbp]
    \centering
    \includegraphics[width=0.50\linewidth]{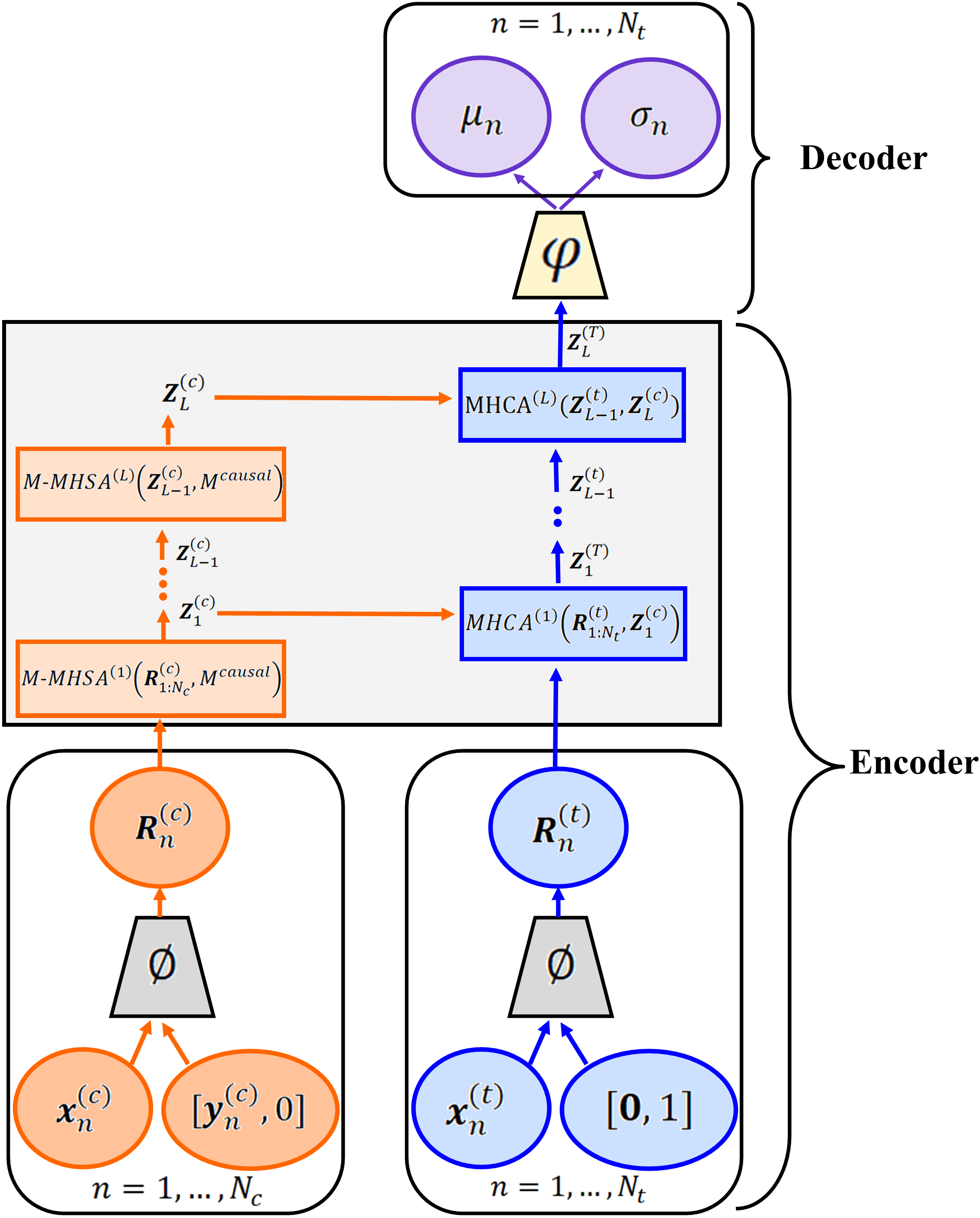}
    \caption[Architecture of the incremental Transformer Neural Process]{Architecture of the incremental Transformer Neural Process (\texttt{incTNP}), which uses interwoven M-MHSA and MHCA layers.}
    \label{fig:inctnp_arch}
\end{figure}

\paragraph{incTNP-Seq data flow} At test time, \texttt{incTNP-Seq} behaves identically to \texttt{incTNP}. At training time, data is treated as a single joined sequence such that $\bfX^{(seq)}=[\bfX_c, \bfX_t]$ and $\bfY^{(seq)}=[\bfY_c, \bfY_t]$. Two copies of this sequence are used to generate a context sequence $\bfD_{c}^{(seq)}=\phi ([\bfX^{(seq)},\bfY^{(seq)}, \mathbf{0}])$ and a target sequence $\bfD_{t}^{(seq)}=\phi([\bfX^{(seq)},\mathbf{0}, \mathbf{1}])$, where $\phi$ is the MLP embedder. Causally masked self-attention updates the context sequence representation as it does for \texttt{incTNP}. In addition to this, masked cross-attention is used to enforce a causal structure. That is a target token $(\bfx_{i}^{(seq)},\mathbf{0}, 1)$, is only permitted to attend to preceding context tokens $(\bfx_{j}^{(seq)},\bfy_j^{(seq)},0)$ for $1\leq j <i$, resulting in the following transformer update equations:

\begin{flalign}
 \mathbf{Z}_l^{(c)}&=\text{M-MHSA-layer}(\mathbf{Z}_{l-1}^{(c)},M^{\text{causal},c}) \quad l\in{1, \cdots,L} \quad \text{where} \quad \mathbf{Z}_0^{(c)}= \bfD_{c}^{(seq)} \label{eq:context_update_inc_seq} \\
 \mathbf{Z}_l^{(t)}&=\text{M-MHCA-layer}(\mathbf{Z}^{(t)}_{l-1},\mathbf{Z}_{l}^{(c)}, M^{\text{causal},t}) \quad l\in{1, \cdots,L} \quad \text{where} \quad \mathbf{Z}_0^{(t)}=(\bfD_{t}^{(seq)})_{2:N_c+N_t}. \label{eq:target_update_inc_seq}
\end{flalign}

The data flow and masking configuration of \texttt{incTNP-Seq} during training are outlined in \autoref{fig:incbatched_arch_diag}.

\begin{figure}[hbtp]
    \centering
    \includegraphics[width=0.5\linewidth]{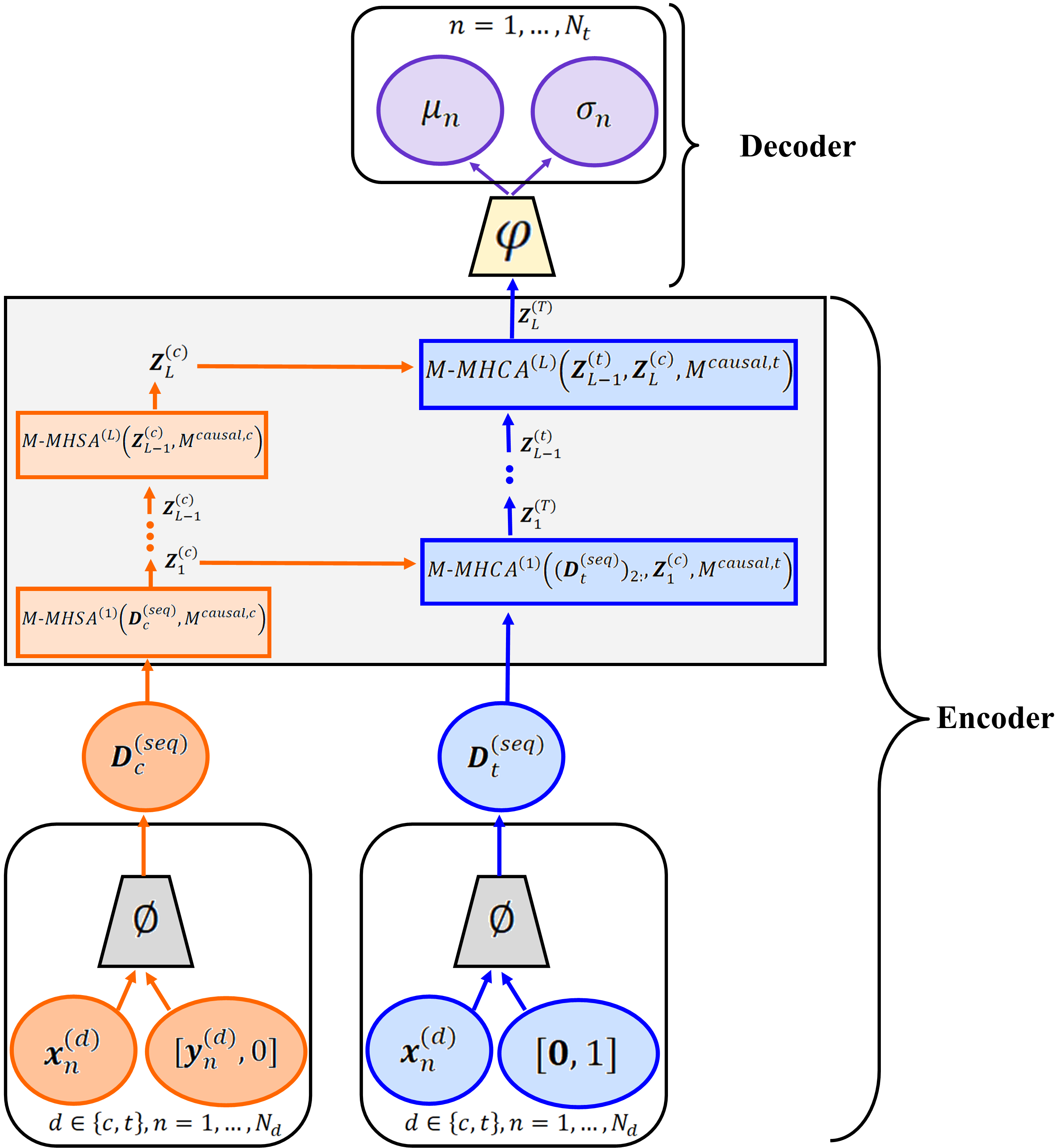}
    \caption[]{Schematic of \texttt{incTNP-Seq} during training. All M-MHSA and M-MHCA layers use causal masking.}
    \label{fig:incbatched_arch_diag}
\end{figure}

\paragraph{WISKI details} We utilise the Woodbury Inversion with SKI (\texttt{WISKI}) framework proposed by \citet{WISKI-GP}, using their official implementation \footnote{Official \texttt{WISKI} implementation available at \url{https://github.com/wjmaddox/online_gp}}. \texttt{WISKI} facilitates fast online learning by approximating the kernel matrix using a set of $m$ inducing points placed on a structured grid (Structured Kernel Interpolation). The \texttt{WISKI} architecture requires a user-specified kernel function and optionally a learnable stem to reduce effective feature dimensionality. Training consists of two phases:
\begin{itemize}
\item \textbf{Pretraining:} \texttt{WISKI} is pretrained on an initial subset of contextual observations (of size $N_\text{init}$) over 200 epochs using the Adam optimiser. GP hyperparameters are updated using a learning rate of ${lr}_{\text{init}}^{\text{(GP)}}$, and stem hyperparameters are updated using a learning rate of ${lr}_{\text{init}}^{\text{(stem)}}$.
\item \textbf{Online Updates:} For the remaining data stream, the model updates sequentially, using a single Adam optimisation step per streamed point. GP hyperparameters are updated using a learning rate of ${lr}_{\text{stream}}^{\text{(GP)}}$, and stem hyperparameters are updated using a learning rate of ${lr}_{\text{stream}}^{\text{(stem)}}$.
\end{itemize}

For the tabular data task, we copy the setup from \citet{WISKI-GP}. Specifically, we learn an RBF-ARD GP kernel and set the grid resolution to 16 per dimension (resulting in $m=256$ total inducing points). We employ a stem that projects data to $\mathbb{R}^2$ using a learned linear projection, a batch-norm operation, and a hyperbolic tangent activation in sequence. We define an initial learning rate for each task $lr_\text{(base)}$ and set the learning rates as ${lr}_{\text{init}}^{\text{(GP)}}=lr_\text{(base)}$, ${lr}_{\text{init}}^{\text{(stem)}}=10^{-1}lr_\text{(base)}$, ${lr}_{\text{stream}}^{\text{(GP)}}=10^{-1}lr_\text{(base)}$ and ${lr}_{\text{stream}}^{\text{(stem)}}=10^{-2}lr_\text{(base)}$. For the Powerplant and Skillcraft datasets, we set $lr_\text{(base)}=5\times 10^{-2}$ and for all other tabular tasks we use $lr_\text{(base)}=1\times 10^{-2}$. We pretrain tabular \texttt{WISKI} models using 5\% of the total contextual data.

For GP regression tasks, we match the GP kernel to the same class as the underlying GP kernel and do not use a stem. We set ${lr}_{\text{init}}^{\text{(GP)}}=5\times10^{-2}$ and  ${lr}_{\text{stream}}^{\text{(GP)}}=10^{-1}{lr}_{\text{init}}^{\text{(GP)}}$, having tuned ${lr}_{\text{init}}^{\text{(GP)}}$ on the GP dataset using a learning rate sweep. $m=32$ inducing points are utilised for this task. In \cref{subsec:implicit_bayesianness}, we use $N_\text{init}=20$ points to pretrain \texttt{WISKI}.

\subsection{1D Gaussian Process Regression}
\label[appendix]{app:1d_gp}
\paragraph{Training and testing details} We train NP models on the GP regression task for 16,000 samples with a batch size of 16. We train for 200 epochs on the RBF kernel and for 400 epochs on a more challenging mixed kernel by default. All samples within a batch are drawn from the same randomly sampled GP kernel. For each batch, we select $N_t=128$ target points for evaluation and $N_c \sim \mathcal{U}[1, N_c^{(\text{max})}]$ context points, with model training lasting approximately 3 hours. We sample uniformly from an input range of $[-2,2]$, with the standard deviation of observation noise set to $\sigma_\text{obs}=0.1$ across all experiments. An outline of our training procedure is provided in \cref{alg:gp_np_training_algorithm}. We evaluate predictive performance across 80,000 random samples.

\begin{algorithm}[H]
   \caption{1D GP Regression Training}
   \label{alg:gp_np_training_algorithm}
\begin{algorithmic}[1]
   \STATE {\bfseries Input:} Target count $N_t$, context range $[N_c^{(\min)}, N_c^{(\max)}]$, epochs $E$, samples per epoch $S_E$, batch size $B$, learning rate $l_r$, kernel family $k$, observation noise $\sigma_\text{obs}$, NP model $\text{NP}_\theta$.
   \vspace{0.1cm} 
   \FOR{epoch $=1, \dots, E$}
      \FOR{step $s=1, \dots, S_E/B$}
         \STATE $k_s \sim k$ \COMMENT{Sample kernel hyperparameters}
         \STATE $N_c \sim \mathcal{U}[N_c^{(\min)}, N_c^{(\max)}]$
         \STATE $\mathcal{D}_c, \mathcal{D}_t \leftarrow \emptyset, \emptyset$
         \FOR{$b=1, \dots, B$}
            \STATE $f^{(b)} \leftarrow \text{SampleGP}(0, k_s)$
            \STATE $\mathbf{x}^{(b)} \sim \mathcal{U}[-2, 2]^{N_t+N_c}$
            \STATE $\boldsymbol{\epsilon}^{(b)} \sim \mathcal{N}(\mathbf{0}, I)$
            \STATE $\mathbf{y}^{(b)} \leftarrow f^{(b)}(\mathbf{x}^{(b)}) + \sigma_\text{obs}\boldsymbol{\epsilon}^{(b)}$
            \STATE $\mathcal{D}_c \leftarrow \mathcal{D}_c \cup \{(\mathbf{x}^{(b)}_{1:N_c}, \mathbf{y}^{(b)}_{1:N_c})\}$
            \STATE $\mathcal{D}_t \leftarrow \mathcal{D}_t \cup \{(\mathbf{x}^{(b)}_{N_c+1:N_c+N_t}, \mathbf{y}^{(b)}_{N_c+1:N_c+N_t})\}$
         \ENDFOR
         \STATE $\bfX_t, \bfY_t \leftarrow \mathcal{D}_t$
         \STATE $\text{pred\_dist} \leftarrow \text{NP}_\theta(\bfX_t, \mathcal{D}_c)$
         \STATE $\mathcal{L} \leftarrow \text{NLL}(\text{pred\_dist}, \bfY_t)$ \COMMENT{Negative log-likelihood}
         \STATE $\theta \leftarrow \text{AdamW}(\theta, \nabla_\theta \mathcal{L}, l_r)$
      \ENDFOR
   \ENDFOR
\end{algorithmic}
\end{algorithm}

\subsection{Tabular Data Regression}
\paragraph{Training and testing details} We train our models in exactly the same manner as for the GP regression task, with the key difference being that input features are multi-dimensional $\bfx \in \mathbb{R}^{20}$. When performing inference on real-world tabular tasks, datasets with fewer than 20 features are padded with zeros to be 20-dimensional. We train NP models on a synthetically generated dataset for 500 epochs, using 32,768 samples per epoch with a batch size of 128. For each training step, we select $N_t=128$ target points and draw $N_c \in \mathcal{U}[10, 1024]$ context points. We evaluate synthetic model performance using 80,000 test samples. We adopt a random data ordering for all tabular tasks.

\subsection{Station Temperature Prediction}
\paragraph{Task overview} For the temperature prediction task using HadISD data, we train models to perform spatiotemporal interpolation conditioned on contextual observations. Consequently, the model learns a spatiotemporal function mapping from time $t$, latitude $\phi$, longitude $\lambda$ and elevation $z$ to a predicted temperature $T$:
\begin{equation}
  f\colon \mathbb{R}_t \times [-20^\circ,60^\circ]_\phi \times [-10^\circ,52^\circ]_\lambda \times \mathbb{R}_z
    \;\longrightarrow\; \mathbb{R}_T,
\quad
(t,\phi,\lambda,z)\;\mapsto\;T(t,\phi,\lambda,z).  
\end{equation}

\paragraph{Fourier embeddings} Raw inputs are 4-dimensional, but we employ Fourier embeddings to extract multi-scale features, resulting in input vectors $\mathbf{x} \in \mathbb{R}^{128}$ (using 32 dimensions per feature). This approach has been established for foundational NP weather forecasting models, and we adopt a similar setup to \cite{bodnar2024aurora}, where we Fourier encode each scalar feature $x_j$ into a $D$ dimensional vector:
\begin{equation}
    \text{FourierEncode}(x_j)=\left[\cos \left(\frac{2\pi x_j}{\Lambda_i}\right), \sin \left(\frac{2\pi x_j}{\Lambda_i}\right) \right] \quad \text{for} \; 0\leq i < D/2.
\end{equation}

The wavelengths $\Lambda_i$ are logarithmically spaced between a minimum $\Lambda^{(\text{min})}$ and maximum $\Lambda^{(\text{max})}$ wavelength:
\begin{equation}
    \Lambda_i=\exp \left(\log \Lambda^{(\text{min})} + i \frac{\log \Lambda^{(\text{max})} - \log \Lambda^{(\text{min})}}{D/2-1} \right).
\end{equation}
We set $\Lambda^{(\text{min})}$ and $\Lambda^{(\text{max})}$ to match the physical scales we aim to capture in the normalised data. For elevation, we select $\Lambda^{(\text{min})}_z=0.1$ and $\Lambda^{(\text{max})}_z=8$, reflecting the limited impact of fine-scale elevation variations and a maximum normalised altitude just below 8. For latitude and longitude, we use $\Lambda^{(\text{min})}=0.001$ and $\Lambda^{(\text{max})}=2$. For time, we choose $\Lambda^{(\text{min})}_t=1$ and $\Lambda^{(\text{max})}_t=8760$ (representing the number of hours in a year) to capture seasonal variations.

\paragraph{Timestamp sampling} To ensure observational diversity, we sample training window start times $t$ proportionally to the temporal density of observations. We precompute a count array $C$ containing the number of observations within each window starting at a given timestamp, and define a probability mass function over all valid timestamps $t_\text{valid}$:
\begin{equation}
    \mathbf{w} \gets \left[ w_i = \frac{C[i]} {\sum_{j\in t_{\text{valid}}} C[j]} \right]_{i\in t_{\text{valid}}}.
\end{equation}

\paragraph{Training and testing details} We train for 200 epochs using 32,000 samples per epoch and a batch size of 32. We train on a spatiotemporal interpolation task using $H=8$ time windows stepped in increments of $\delta=6$ hours, resulting in a 48-hour task window. For each sample, we select $N_t=250$ target stations and sample $N_c \sim \mathcal{U}[100, 2100]$ context stations, randomly distributed in space-time. A temporal ordering is imposed on the context sequence. We evaluate factorised predictive performance across 80,000 random test samples and autoregressive (AR) predictive performance across 4,096 test samples.


\section{Proofs}
\label[appendix]{app:proofs}
\subsection{Exchangeability gap results for \textit{i.i.d.} data}\label{app:proofs_bruno}

\paragraph{Notation} For densities and conditional densities on some sequence space $\mathcal{X}^n$ (or $(\mathcal{X} \times \mathcal{Y})^n$ in the case of regression), we will indicate with a subscript what part of the product space each argument refers to. For example, we might write $p_{X_1,Y_1, \dots, X_n, Y_n}: (\mathcal{X} \times \mathcal{Y})^n \to \mathbb{R}$  for a joint density on $(\mathcal{X} \times \mathcal{Y})^n$, $p_{X_1, \dots, X_n}(x_1, \dots, x_n) := \int p_{X_1,Y_1, \dots, X_n, Y_n} (x_1, y_1, \dots, x_n, y_n) d y_1 \dots d y_n$ for the corresponding marginal density, $p_{Y_1, \dots, Y_n|X_1, \dots, X_n}(y_1, \dots, y_n|x_1, \dots, x_n)=\frac{p_{X_1,Y_1, \dots, X_n, Y_n} (x_1, y_1, \dots, x_n, y_n)}{p_{X_1,\dots, X_n} (x_1, \dots, x_n)}$ for the specific conditional density, e.t.c.. We will also often drop the subscript if it can be readily inferred from the arguments, e.g. writing $p(x_1, \dots, x_n)$ instead of $p_{X_1, \dots, X_n}(x_1, \dots, x_n)$. Sometimes we'll also abbreviate $x_1, \dots, x_n$ as $x_{1\colon n}$, freely asserting equivalences between spaces like $\mathcal{X}^n \times \mathcal{Y}^n$ and $\underbrace{\mathcal{X} \times \dots \times \mathcal{X}}_{\times n}  \times \underbrace{\mathcal{Y} \times \dots \times \mathcal{Y}}_{\times n}$. For index permutations $\pi$ we will also write $\pi(1:n)$ for $\pi(1), \dots, \pi(n)$.

\subsubsection{Proof of Proposition \ref{prop:bruno}}

We re-state \Cref{prop:bruno} before providing the proof.

\begin{proposition*} Let $p : \mathcal{Z}^n \to \mathbb{R}$ be the true (joint) data-generating distribution with full support\footnote{Else, the Kullbeck-Leibler Divergence might be ill-defined.}, which we assume to be exchangeable. Then, the following decomposition holds:
\begin{equation}\label{eq:joints_kl_gap}
D_{\text{KL}}(q_{1:n}\mathrel{\|}p) = D_{\text{KL}}(\hat{q}_{1:n}\mathrel{\|}p) + D_{\text{KL}}(q_{1:n}\mathrel{\|}\hat{q}_{1:n})
\end{equation}
and, in particular, $D_{\text{KL}}(q_{1:n} \mathrel{\|} p) \geq D_{\text{KL}}(\hat{q}_{1:n} \mathrel{\|} p)$ with equality if and only if $q_{1:n}$ is exchangeable.    
\end{proposition*}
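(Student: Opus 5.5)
The plan is to expand the left-hand side by inserting $\hat{q}_{1:n}$ inside the logarithm. Pointwise,
\[
\log\frac{q_{1:n}}{p} = \log\frac{q_{1:n}}{\hat{q}_{1:n}} + \log\frac{\hat{q}_{1:n}}{p}.
\]
Taking expectations under $q_{1:n}$ gives
\[
D_{\text{KL}}(q_{1:n}\|p) = D_{\text{KL}}(q_{1:n}\|\hat{q}_{1:n}) + \mathbb{E}_{q_{1:n}}\bigl[\log(\hat{q}_{1:n}/p)\bigr].
\]
The whole argument then reduces to showing that the last expectation equals $D_{\text{KL}}(\hat{q}_{1:n}\|p)$, that is, that the expectation can be taken under $\hat{q}_{1:n}$ instead of $q_{1:n}$.

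The key observation is that the function $f := \log(\hat{q}_{1:n}/p)$ is permutation invariant. Indeed, $\hat{q}_{1:n}$ is exchangeable by construction, since it is an average over the full group $\Pi_n$ and composing with a fixed permutation merely reindexes the sum. The density $p$ is exchangeable by assumption. For any permutation-invariant $f$, the change of variables $\mathbf{z}\mapsto\mathbf{z}_\pi$, which preserves Lebesgue (or counting) measure on $\mathcal{Z}^n$, gives
\[
\int q_{1:n}(\mathbf{z}_\pi) f(\mathbf{z})\,d\mathbf{z} = \int q_{1:n}(\mathbf{z}) f(\mathbf{z})\,d\mathbf{z}
\]
for every $\pi$. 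Averaging this identity over $\pi\in\Pi_n$ yields
\[
\mathbb{E}_{q_{1:n}}[f] = \mathbb{E}_{\hat{q}_{1:n}}[f] = D_{\text{KL}}(\hat{q}_{1:n}\|p),
\]
which establishes the decomposition.

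The inequality then follows from nonnegativity of $D_{\text{KL}}(q_{1:n}\|\hat{q}_{1:n})$. Equality holds if and only if this term vanishes, i.e.\ if and only if $q_{1:n}=\hat{q}_{1:n}$ almost everywhere. If $q_{1:n}$ is exchangeable, every summand in \cref{eq:exchangeable_prediction_rule} equals $q_{1:n}$, so $\hat{q}_{1:n}=q_{1:n}$. Conversely, $q_{1:n}=\hat{q}_{1:n}$ forces $q_{1:n}$ to be exchangeable, because $\hat{q}_{1:n}$ is.

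The main obstacle is the measure-theoretic bookkeeping rather than the algebra. We need the terms to be well defined and finite, so that splitting the logarithm does not produce $\infty-\infty$. The full-support assumption on $p$ makes $\log(\hat{q}/p)$ well defined. Absolute continuity of $q_{1:n}$ with respect to $\hat{q}_{1:n}$ is automatic, since $\hat{q}_{1:n}\ge q_{1:n}/n!$. I would therefore assume $D_{\text{KL}}(q_{1:n}\|p)<\infty$. If it is infinite, I would argue separately that both sides are infinite, using that $D_{\text{KL}}(q\|\hat{q})\le\log n!$ is always finite, so the right-hand side is infinite exactly when $D_{\text{KL}}(\hat{q}\|p)$ is. This handles the infinite case. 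The integrability of $f$ under $q_{1:n}$ then follows from the decomposition itself, and the permutation-averaging step is justified.
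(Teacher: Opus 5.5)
Your proposal is correct and follows the paper's proof essentially step for step: you split the logarithm through $\hat{q}_{1:n}$, then use the change of variables $\mathbf{z}\mapsto\mathbf{z}_\pi$ together with the exchangeability of $\hat{q}_{1:n}$ and $p$ to replace the expectation under $q_{1:n}$ by one under $\hat{q}_{1:n}$. Your finiteness bookkeeping (via $\hat{q}_{1:n}\ge q_{1:n}/n!$, so the gap is at most $\log n!$) is extra rigour that the paper omits; note that in the infinite case both sides equal $+\infty$, so the ``equality iff exchangeable'' clause only holds when $D_{\text{KL}}(q_{1:n}\|p)<\infty$.
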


\begin{proof}
    We can expand $\text{D}_\text{KL}(q_{1:n}\mathrel{\|}p)$ in the following way:
    \begin{align}
        \text{D}_\text{KL}(q_{1:n}\mathrel{\|}p) &\vcentcolon= \int_{\mathcal{Z}^n}q_{1:n}(\bfz_1,\dots,\bfz_n)\log\frac{q_{1:n}(\bfz_1,\dots,\bfz_n)}{p(\bfz_1,\dots,\bfz_n)} \mathrm{d}\bfz_{1:n} \\
        &= \int_{\mathcal{Z}^n}q_{1:n}(\bfz_1,\dots,\bfz_n) \Biggl(\log\frac{q_{1:n}(\bfz_1,\dots,\bfz_n)}{\hat{q}_{1:n}(\bfz_1,\dots,\bfz_n)} + \log\frac{\hat{q}_{1:n}(\bfz_1,\dots,\bfz_n)}{p(\bfz_1,\dots,\bfz_n)}\Biggr) \mathrm{d}\bfz_{1:n} \\
        &= \text{D}_\text{KL}(q_{1:n}\mathrel{\|}\hat{q}_{1:n}) + \int_{\mathcal{Z}^n}q_{1:n}(\bfz_1,\dots,\bfz_n)\log\frac{\hat{q}_{1:n}(\bfz_1,\dots,\bfz_n)}{p(\bfz_1,\dots,\bfz_n)} \mathrm{d}\bfz_{1:n} \\
        &\textit{(introduce average over all permutations } \Pi_n \textit{)} \\
        &= \text{D}_\text{KL}(q_{1:n}\mathrel{\|}\hat{q}_{1:n}) + \frac{1}{n!}\sum_{\pi\in\Pi_n}\int_{\mathcal{Z}^n}q_{1:n}(\bfz_1,\dots,\bfz_n)\log\frac{\hat{q}_{1:n}(\bfz_1,\dots,\bfz_n)}{p(\bfz_1,\dots,\bfz_n)} \mathrm{d}\bfz_{1:n} \\
        &\textit{(apply change of variables } (\bfz_1,\dots,\bfz_n) \mapsto (\bfz_{\pi(1)},\dots,\bfz_{\pi(n)}) \textit{ within inner summation)} \\
        &= \text{D}_\text{KL}(q_{1:n}\mathrel{\|}\hat{q}_{1:n}) + \frac{1}{n!}\sum_{\pi\in\Pi_n}\int_{\mathcal{Z}^n}q_{1:n}(\bfz_{\pi(1)},\dots,\bfz_{\pi(n)})\log\frac{\hat{q}_{1:n}(\bfz_{\pi(1)},\dots,\bfz_{\pi(n)})}{p(\bfz_{\pi(1)},\dots,\bfz_{\pi(n)})} \mathrm{d}\bfz_{1:n} \label{eq:before_exchangeability} \\
        &= \text{D}_\text{KL}(q_{1:n}\mathrel{\|}\hat{q}_{1:n}) + \frac{1}{n!}\sum_{\pi\in\Pi_n}\int_{\mathcal{Z}^n}q_{1:n}(\bfz_{\pi(1)},\dots,\bfz_{\pi(n)})\log\frac{\hat{q}_{1:n}(\bfz_1,\dots,\bfz_n)}{p(\bfz_1,\dots,\bfz_n)} \mathrm{d}\bfz_{1:n} \label{eq:after_exchangeability} \\
        &= \text{D}_\text{KL}(q_{1:n}\mathrel{\|}\hat{q}_{1:n}) + \int_{\mathcal{Z}^n}\Biggl(\frac{1}{n!}\sum_{\pi\in\Pi_n}q_{1:n}(\bfz_{\pi(1)},\dots,\bfz_{\pi(n)})\Biggr)\log\frac{\hat{q}_{1:n}(\bfz_1,\dots,\bfz_n)}{p(\bfz_1,\dots,\bfz_n)} \mathrm{d}\bfz_{1:n} \\
        &= \text{D}_\text{KL}(q_{1:n}\mathrel{\|}\hat{q}_{1:n}) + \int_{\mathcal{Z}^n}\hat{q}_{1:n}(\bfz_1,\dots,\bfz_n)\log\frac{\hat{q}_{1:n}(\bfz_1,\dots,\bfz_n)}{p(\bfz_1,\dots,\bfz_n)} \mathrm{d}\bfz_{1:n} \\
        &= \text{D}_\text{KL}(q_{1:n}\mathrel{\|}\hat{q}_{1:n}) + \text{D}_\text{KL}(\hat{q}_{1:n}\mathrel{\|}p)
    \end{align}where the equality between \Cref{eq:before_exchangeability} and \Cref{eq:after_exchangeability} follows from both $\hat{q}_{1:n}$ and $p$ being exchangeable. Clearly, by known properties of the KL-divergence, the gap $D_{\text{KL}}(q_{1:n}\mathrel{\|}\hat{q}_{1:n})$ will be $0$ if and only if the two distributions are the same. The distributions will only be the same if $q_{1:n}$ is already exchangeable. This completes the proof.
\end{proof}

\subsubsection{Proof of Proposition \ref{prop:bruno_regression}}

We re-state \cref{prop:bruno_regression} before providing the proof.

\begin{proposition*}Let the true data-generating distribution $p$ also be conditionally exchangeable, i.e., for the true conditional data-generating distribution $p_{Y_{1:n}|X_{1:n}} : \mathcal{Y}^n \times \mathcal{X}^n \to \mathbb{R}$ the following holds:
\begin{equation}\label{eq:cond_exch}
    p(\mathbf{y}_{1:n} \mathrel{|} \mathbf{x}_{1:n}) = p(\mathbf{y}_{\pi_{1:n}} \mathrel{|} \mathbf{x}_{\pi_{1:n}}) \quad \forall\pi, \mathbf{x}_{1:n}, \mathbf{y}_{1:n},
\end{equation} and the true marginal data-generating distribution over inputs $p_{X_{1:n}} : \mathcal{X}^n \to \mathbb{R}$ be i.i.d. Next, for any conditional density $q_{Y_{1:n}|X_{1:n}} : \mathcal{Y}^n \times \mathcal{X}^n \to \mathbb{R}$, obtain the exchangeabilified version of it as
\begin{equation}\label{eq:exchangeabilification_appendix}
    \hat{q}_{Y_{1:n}|X_{1:n}}(\mathbf{y}_{1:n}|\mathbf{x}_{1:n}) = \frac{1}{n!}\sum_{\pi \in \Pi_n} q_{Y_{1:n}|X_{1:n}}(\mathbf{y}_{\pi_{1:n}}|\mathbf{x}_{\pi_{1:n}}).
\end{equation} Then, if the support of $p$ is within the support of $q$, the following decomposition holds:
\begin{align}\label{eq:kl_gap}
    \mathbb{E}_{p_{X_{1:n}}} \left[D_{\text{KL}}\left[q_{Y_{1:n}|X_{1:n}} \mathrel{\|} p_{Y_{1:n}|X_{1:n}} \right]\right] &= \mathbb{E}_{p_{X_{1:n}}} \left[D_{\text{KL}}\left[\hat{q}_{Y_{1:n}|X_{1:n}} \mathrel{\|} p_{Y_{1:n}|X_{1:n}} \right]\right] \notag \\
    &\quad + \underbrace{\mathbb{E}_{p_{X_{1:n}}}\left[D_{\text{KL}}\left[q_{Y_{1:n}|X_{1:n}} \mathrel{\|} \hat{q}_{Y_{1:n}|X_{1:n}} \right]\right]}_{\text{KL Gap}}
\end{align}and, in particular, $\mathbb{E}_{p_{X_{1:n}}} \left[D_{\text{KL}}\left[q_{Y_{1:n}|X_{1:n}} \mathrel{\|} p_{Y_{1:n}|X_{1:n}} \right]\right] \geq \mathbb{E}_{p_{X_{1:n}}} \left[D_{\text{KL}}\left[\hat{q}_{Y_{1:n}|X_{1:n}} \mathrel{\|} p_{Y_{1:n}|X_{1:n}} \right]\right]$ with equality if and only if $q_{Y_{1:n}|X_{1:n}}$ is conditionally exchangeable.    
\end{proposition*}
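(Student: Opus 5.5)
The plan is to mirror the proof of \cref{prop:bruno}, but to work pointwise in $\mathbf{x}_{1:n}$ for the conditional densities and only then take the outer expectation over $p_{X_{1:n}}$. The permutation change of variables will act jointly on the pairs $(\mathbf{x}_i,\mathbf{y}_i)$, since the $\mathbf{y}$'s cannot be permuted on their own; undoing the permutation in the $\mathbf{x}$'s is exactly what the i.i.d. assumption on the inputs supplies.

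First, for each fixed $\mathbf{x}_{1:n}$, I insert $\hat{q}$ inside the logarithm:
\[
\log\frac{q(\mathbf{y}_{1:n}|\mathbf{x}_{1:n})}{p(\mathbf{y}_{1:n}|\mathbf{x}_{1:n})}=\log\frac{q(\mathbf{y}_{1:n}|\mathbf{x}_{1:n})}{\hat{q}(\mathbf{y}_{1:n}|\mathbf{x}_{1:n})}+\log\frac{\hat{q}(\mathbf{y}_{1:n}|\mathbf{x}_{1:n})}{p(\mathbf{y}_{1:n}|\mathbf{x}_{1:n})}.
\]
Integrating against $q(\cdot|\mathbf{x}_{1:n})$ and then against $p_{X_{1:n}}$, the first term is exactly the KL gap. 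It remains to show that the cross term
\[
T:=\int p(\mathbf{x}_{1:n})\,q(\mathbf{y}_{1:n}|\mathbf{x}_{1:n})\,\log\frac{\hat{q}(\mathbf{y}_{1:n}|\mathbf{x}_{1:n})}{p(\mathbf{y}_{1:n}|\mathbf{x}_{1:n})}\,\mathrm{d}\mathbf{x}_{1:n}\,\mathrm{d}\mathbf{y}_{1:n}
\]
equals $\mathbb{E}_{p_{X_{1:n}}}[D_{\text{KL}}(\hat{q}\,\|\,p)]$.

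Second, I write $T$ as $\frac{1}{n!}\sum_{\pi}T$. In the $\pi$-th copy I apply the joint change of variables $(\mathbf{x}_i,\mathbf{y}_i)\mapsto(\mathbf{x}_{\pi(i)},\mathbf{y}_{\pi(i)})$, which has unit Jacobian. Three invariances are then used:
\begin{enumerate}
\item $p(\mathbf{x}_{\pi(1:n)})=p(\mathbf{x}_{1:n})$, because the inputs are i.i.d.;
\item the conditional $p(\mathbf{y}|\mathbf{x})$ is invariant by \cref{eq:cond_exch};
\item $\hat{q}$ is conditionally exchangeable by construction, via reindexing the sum in \cref{eq:exchangeabilification_appendix}.
\end{enumerate}
After these substitutions, only $q(\mathbf{y}_{\pi(1:n)}|\mathbf{x}_{\pi(1:n)})$ still depends on $\pi$. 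Averaging over $\pi$ turns it into $\hat{q}(\mathbf{y}_{1:n}|\mathbf{x}_{1:n})$, so $T=\mathbb{E}_{p_{X}}[D_{\text{KL}}(\hat{q}\,\|\,p)]$ and the decomposition \cref{eq:kl_gap} follows.

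For the inequality, I use non-negativity of the KL gap. Equality holds iff $D_{\text{KL}}(q\,\|\,\hat{q})=0$ for $p_X$-almost every $\mathbf{x}_{1:n}$, i.e.\ iff $q=\hat{q}$. This is equivalent to conditional exchangeability: if $q$ is exchangeable, every term in the average equals $q$; conversely, $q=\hat{q}$ inherits exchangeability from $\hat{q}$. Both directions hold up to null sets of $p_X$, which I will note as the interpretation of the statement.

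The main obstacle is the measure-theoretic bookkeeping in the second step. The marginal $p_X$ must be permutation invariant; this is where i.i.d.\ (really just exchangeability of the inputs) is used, and it is why the outer expectation cannot be dropped. Pointwise in $\mathbf{x}$ the permutation moves us to a different conditioning point, so the identity fails without averaging over $\mathbf{x}$. I also need all integrals to be finite so that the logarithm can be split; I will invoke the stated support assumption (with $p$ having support contained in that of $q$, hence also $\hat{q}$) and assume finiteness of the resulting KL terms.
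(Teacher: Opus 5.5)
Your proposal is correct and follows essentially the same route as the paper's proof. You insert $\hat{q}$ into the logarithm to split off the KL gap, average the cross term over $\Pi_n$, apply the joint change of variables $(\mathbf{x}_{1:n},\mathbf{y}_{1:n})\mapsto(\mathbf{x}_{\pi(1:n)},\mathbf{y}_{\pi(1:n)})$, remove the permutation from $p_{X_{1:n}}$ (i.i.d.), $p_{Y_{1:n}|X_{1:n}}$ and $\hat{q}_{Y_{1:n}|X_{1:n}}$, and recognise the remaining average of $q$ as $\hat{q}$.

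Your extra bookkeeping is, if anything, more careful than the paper's. One small point: the identity term gives $\hat{q}\ge q/n!$, so the KL gap is always at most $\log n!$. The finiteness that really matters is therefore that of $\mathbb{E}_{p_{X_{1:n}}}[D_{\text{KL}}(\hat{q}\,\|\,p)]$, which is what the ``equality iff'' characterisation needs.
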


\begin{proof}
We begin by expanding the expected Kullback-Leibler divergence by writing out its integral definition:

\begin{align}
\E_{X_{1:n} \sim p_{X_{1:n}}}&\left[\text{D}_\text{KL}\left[q_{Y_{1:n}|X_{1:n}}(\cdot|X_{1:n} )\mid p_{Y_{1:n}|X_{1:n}}(\cdot|X_{1:n} )\right]\right] \nonumber \\
&= \int_{\mathcal{X}^n} p_{X_{1:n}}(\bfx_{1:n}) \int_{\mathcal{Y}^n} q_{Y_{1:n}|X_{1:n}}(\bfy_{1:n}|\bfx_{1:n}) \log \frac{q_{Y_{1:n}|X_{1:n}}(\bfy_{1:n}|\bfx_{1:n})}{p_{Y_{1:n}|X_{1:n}}(\bfy_{1:n}|\bfx_{1:n})} \mathrm{d}\bfy_{1:n} \mathrm{d}\bfx_{1:n} \nonumber \\
&\textit{(multiply and divide by the exchangeable-ified density } \hat{q}_{Y_{1:n}|X_{1:n}} \textit{ inside the logarithm)} \nonumber \\
&= \int_{\mathcal{X}^n} p_{X_{1:n}}(\bfx_{1:n}) \int_{\mathcal{Y}^n} q_{Y_{1:n}|X_{1:n}}(\bfy_{1:n}|\bfx_{1:n}) \Biggl( \log \frac{q_{Y_{1:n}|X_{1:n}}(\bfy_{1:n}|\bfx_{1:n})}{\hat{q}_{Y_{1:n}|X_{1:n}}(\bfy_{1:n}|\bfx_{1:n})} \nonumber\\
&\hspace{7.1cm} + \log \frac{\hat{q}_{Y_{1:n}|X_{1:n}}(\bfy_{1:n}|\bfx_{1:n})}{p_{Y_{1:n}|X_{1:n}}(\bfy_{1:n}|\bfx_{1:n})} \Biggr) \mathrm{d}\bfy_{1:n} \mathrm{d}\bfx_{1:n} \nonumber \\
&= \E_{X_{1:n} \sim p_{X_{1:n}}}\left[\text{D}_\text{KL}\left[q_{Y_{1:n}|X_{1:n}}(\cdot|X_{1:n} )\mid \hat{q}_{Y_{1:n}|X_{1:n}}(\cdot|X_{1:n} )\right]\right] \nonumber \\
&\quad + \int_{\mathcal{X}^n} p_{X_{1:n}}(\bfx_{1:n}) \int_{\mathcal{Y}^n} q_{Y_{1:n}|X_{1:n}}(\bfy_{1:n}|\bfx_{1:n}) \log \frac{\hat{q}_{Y_{1:n}|X_{1:n}}(\bfy_{1:n}|\bfx_{1:n})}{p_{Y_{1:n}|X_{1:n}}(\bfy_{1:n}|\bfx_{1:n})} \mathrm{d}\bfy_{1:n} \mathrm{d}\bfx_{1:n} \label{eq:split_kl_iid}
\end{align}
The first term in \Cref{eq:split_kl_iid} is exactly the conditional exchangeability gap from the proposition. We now focus on the second integral term.

By introducing an average over all permutations $\Pi_n$, which does not change the value of the integral (as it's a scalar with respect to $\pi \in \Pi_n$), we obtain:
\begin{align}
    &\frac{1}{n!} \sum_{\pi \in \Pi_n} \int_{\mathcal{X}^n} \int_{\mathcal{Y}^n} p_{X_{1:n}}(\bfx_{1:n}) q_{Y_{1:n}|X_{1:n}}(\bfy_{1:n}|\bfx_{1:n}) \log \frac{\hat{q}_{Y_{1:n}|X_{1:n}}(\bfy_{1:n}|\bfx_{1:n})}{p_{Y_{1:n}|X_{1:n}}(\bfy_{1:n}|\bfx_{1:n})} \mathrm{d}\bfy_{1:n} \mathrm{d}\bfx_{1:n} \nonumber \\
    &\textit{(apply change of variables } (\bfx_{1:n}, \bfy_{1:n}) \mapsto (\bfx_{\pi(1:n)}, \bfy_{\pi(1:n)}) \textit{ within the summation)} \nonumber \\
    &= \frac{1}{n!} \sum_{\pi \in \Pi_n} \int_{\mathcal{X}^n} \int_{\mathcal{Y}^n} p_{X_{1:n}}(\bfx_{\pi(1:n)}) q_{Y_{1:n}|X_{1:n}}(\bfy_{\pi(1:n)}|\bfx_{\pi(1:n)}) \log \frac{\hat{q}_{Y_{1:n}|X_{1:n}}(\bfy_{\pi(1:n)}|\bfx_{\pi(1:n)})}{p_{Y_{1:n}|X_{1:n}}(\bfy_{\pi(1:n)}|\bfx_{\pi(1:n)})} \mathrm{d}\bfy_{1:n} \mathrm{d}\bfx_{1:n} \label{eq:before_cond_exchange_iid}
\end{align}
Because both $\hat{q}_{Y_{1:n}|X_{1:n}}$ and $p_{Y_{1:n}|X_{1:n}}$ are conditionally exchangeable, they are invariant to joint permutations of their inputs and outputs. Furthermore, since $p_{X_{1:n}}$ is i.i.d. it is also exchangeable, $p_{X_{1:n}}(\bfx_{\pi(1:n)}) = p_{X_{1:n}}(\bfx_{1:n})$. Therefore, we can freely remove the permutations from these terms:
\begin{align}
    &= \frac{1}{n!} \sum_{\pi \in \Pi_n} \int_{\mathcal{X}^n} \int_{\mathcal{Y}^n} p_{X_{1:n}}(\bfx_{1:n}) q_{Y_{1:n}|X_{1:n}}(\bfy_{\pi(1:n)}|\bfx_{\pi(1:n)}) \log \frac{\hat{q}_{Y_{1:n}|X_{1:n}}(\bfy_{1:n}|\bfx_{1:n})}{p_{Y_{1:n}|X_{1:n}}(\bfy_{1:n}|\bfx_{1:n})} \mathrm{d}\bfy_{1:n} \mathrm{d}\bfx_{1:n} \label{eq:after_cond_exchange_iid} \\
    &\textit{(pull the summation inside the integral)} \nonumber \\
    &= \int_{\mathcal{X}^n} \int_{\mathcal{Y}^n} \Biggl( p_{X_{1:n}}(\bfx_{1:n}) \frac{1}{n!} \sum_{\pi \in \Pi_n} q_{Y_{1:n}|X_{1:n}}(\bfy_{\pi(1:n)}|\bfx_{\pi(1:n)}) \Biggr) \log \frac{\hat{q}_{Y_{1:n}|X_{1:n}}(\bfy_{1:n}|\bfx_{1:n})}{p_{Y_{1:n}|X_{1:n}}(\bfy_{1:n}|\bfx_{1:n})} \mathrm{d}\bfy_{1:n} \mathrm{d}\bfx_{1:n} \nonumber \\
    &\textit{(apply the definition of $\hat{q}$)} \nonumber \\
    &= \int_{\mathcal{X}^n} \int_{\mathcal{Y}^n} p_{X_{1:n}}(\bfx_{1:n}) \hat{q}_{Y_{1:n}|X_{1:n}}(\bfy_{1:n}|\bfx_{1:n}) \log \frac{\hat{q}_{Y_{1:n}|X_{1:n}}(\bfy_{1:n}|\bfx_{1:n})}{p_{Y_{1:n}|X_{1:n}}(\bfy_{1:n}|\bfx_{1:n})} \mathrm{d}\bfy_{1:n} \mathrm{d}\bfx_{1:n} \nonumber \\
    &= \E_{X_{1:n} \sim p_{X_{1:n}}}\left[\text{D}_\text{KL}\left[\hat{q}_{Y_{1:n}|X_{1:n}}(\cdot|X_{1:n} )\mid p_{Y_{1:n}|X_{1:n}}(\cdot|X_{1:n} )\right]\right] \label{eq:second_kl}
\end{align}
Substituting \Cref{eq:second_kl} into \Cref{eq:split_kl_iid}, we arrive exactly at the stated decomposition. Clearly, by known properties of the KL-divergence, the gap $\mathbb{E}_{p_{X_{1:n}}}\left[D_{\text{KL}}\left[q_{Y_{1:n}|X_{1:n}} \mathrel{\|} \hat{q}_{Y_{1:n}|X_{1:n}} \right]\right]$ will be $0$ if and only if the two conditional distributions are the same, in which case $q_{Y_{1:n}|X_{1:n}}$ will already be conditionally exchangeable.
\end{proof}

\subsection{Exchangeability gap for non-\textit{i.i.d.} data}\label{app:noniid_gap}
Although this paper is primarily concerned with the case of regression data with \textit{i.i.d.} inputs, we believe some readers might be interested in whether the framework applies in a non-\textit{i.i.d.} inputs case as well. In this section, we answer this question in the affirmative with the proposition below. In particular, we show a similar decomposition of the model's performance into a performance of a conditionally exchangeable predictor and an exchangeability gap, albeit the “exchangeabilified” predictor takes on a slightly weirder form. 

\begin{proposition} 
\textit{(Conditional exchangeability gap).} Let $p_{X_1, \dots, X_n}: \mathcal{X}^n \to \mathbb{R}$ be a density (interpreted as a density for the ‘true data generating’ distribution for the ‘inputs’) on the sequence space $\mathcal{X}^n$. Let $q_{Y_{1\colon n}|X_{1\colon n}}: \mathcal{Y}^n \times \mathcal{X}^n \to \mathbb{R}$ be any conditional density on ‘targets’ $\mathcal{Y}^n$. Define the exchangeable-ified conditional density as:
$$
\hat{q}(y_{1:n}|x_{1:n})\vcentcolon= \frac{1}{n!}\sum_{\pi  \in \Pi_n} q_{Y_{1\colon n}|X_{1\colon n}}(y_{\pi(1:n)}|x_{\pi(1:n)}) \frac{p_{X_{1:n}}(x_{\pi(1:n)})}{\hat{p}_{X_{1:n}}(x_{{1:n}})}, \quad \text{where } \hat{p}(x_{1:n})\vcentcolon= \frac{1}{n!}\sum_{\pi \in \Pi_n} p_{X_{1:n}}(x_{\pi(1:n)}),
$$
and where $\Pi_n$ is the space of all permutations of $n$ elements. $\hat{q}$ is clearly a conditional density and is conditionally exchangeable.

Then, for any conditionally exchangeable density $p_{Y_{1\colon n}|X_{1 \colon n}}$ (as a reminder, a conditional density is conditionally exchangeable if $p_{Y_{1:n}|X_{1:n}}(y_{1:n}|x_{1:n}) = p_{Y_{1:n}|X_{1:n}}(y_{\pi({1:n})}|x_{\pi({1:n})})$ for all $\pi, x_{1:n}, y_{1:n}$), the following decomposition holds:
\begin{align*}
&\E_{X_{1:n} \sim p_{X_{1:n}}}\left[\text{D}_\text{KL}\left[q_{Y_{1:n}|X_{1:n}}(\cdot|X_{1:n} )\mid p_{Y_{1:n}|X_{1:n}}(\cdot|X_{1:n} )\right]\right] = \\ &\qquad =\E_{X_{1:n} \sim p_{X_{1:n}}}\left[\text{D}_\text{KL}\left[\hat{q}_{Y_{1:n}|X_{1:n}}(\cdot|X_{1:n} )\mid p_{Y_{1:n}|X_{1:n}}(\cdot|X_{1:n} )\right]\right] \\
&\qquad+ \underbrace{\E_{X_{1:n} \sim p_{X_{1:n}}}\left[\text{D}_\text{KL}\left[q_{Y_{1:n}|X_{1:n}}(\cdot|X_{1:n} )\mid \hat{q}_{Y_{1:n}|X_{1:n}}(\cdot|X_{1:n} )\right]\right]}_{\text{Gap}}.
\end{align*}
\end{proposition}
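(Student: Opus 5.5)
The plan is to reuse the template of the proof of \cref{prop:bruno_regression}, with the non-i.i.d.\ input density absorbed into the definition of $\hat{q}$. First I will check the two claims made about $\hat{q}$, since the decomposition depends on them. For normalisation, integrating over $y_{1:n}$ turns each summand's $q$ factor into $1$, which leaves $\frac{1}{n!}\sum_\pi p_{X_{1:n}}(x_{\pi(1:n)})/\hat{p}(x_{1:n}) = 1$. For conditional exchangeability, replacing $(x,y)$ by $(x_{\sigma(1:n)}, y_{\sigma(1:n)})$ reindexes the sum over $\pi \mapsto \pi\circ\sigma$. The normaliser $\hat{p}$ is itself permutation invariant, so $\hat{q}$ is unchanged. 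I will note that $\hat{q}$ is only defined where $\hat{p}>0$, which covers the support of $p_{X_{1:n}}$.

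Next I will split the logarithm exactly as in \cref{eq:split_kl_iid}. Writing $\log(q/p) = \log(q/\hat{q}) + \log(\hat{q}/p)$ inside the expected KL immediately produces the Gap term. What remains is the cross term
\[
T := \int\!\!\int p_{X_{1:n}}(x_{1:n})\, q(y_{1:n}|x_{1:n}) \log \frac{\hat{q}(y_{1:n}|x_{1:n})}{p_{Y|X}(y_{1:n}|x_{1:n})}\, \mathrm{d}y\, \mathrm{d}x .
\]
The goal is then to show that $T$ equals $\E_{p_{X_{1:n}}}\left[D_{\text{KL}}[\hat{q}\mathrel{\|}p_{Y|X}]\right]$.

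To do this I will average $T$ over $\pi\in\Pi_n$ and apply the change of variables $(x,y)\mapsto(x_{\pi(1:n)},y_{\pi(1:n)})$ in each summand. Because $\hat{q}$ and $p_{Y|X}$ are both conditionally exchangeable, the log ratio is unaffected. This gives
\[
T = \int\!\!\int \Big(\tfrac{1}{n!}\sum_\pi p_{X_{1:n}}(x_{\pi(1:n)})\, q(y_{\pi(1:n)}|x_{\pi(1:n)})\Big) \log\frac{\hat{q}}{p_{Y|X}}\, \mathrm{d}y\, \mathrm{d}x .
\]
The main obstacle is that $p_{X_{1:n}}$ is no longer exchangeable, so it cannot be pulled out of the sum as in the i.i.d.\ case. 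The way around this is the definition of $\hat{q}$ itself: the bracketed term is exactly $\hat{p}(x_{1:n})\,\hat{q}(y_{1:n}|x_{1:n})$. Hence $T = \E_{\hat{p}}\left[D_{\text{KL}}[\hat{q}\mathrel{\|}p_{Y|X}]\right]$.

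To finish, I need to replace the expectation under $\hat{p}$ by one under $p_{X_{1:n}}$. The integrand $x\mapsto D_{\text{KL}}[\hat{q}(\cdot|x)\mathrel{\|}p_{Y|X}(\cdot|x)]$ is permutation invariant in $x_{1:n}$, because both of its arguments are conditionally exchangeable. For any permutation-invariant function $f$, $\E_{\hat{p}}[f] = \frac{1}{n!}\sum_\pi \E_{p_{X_{1:n}}\circ\pi}[f] = \E_{p_{X_{1:n}}}[f]$, by one more change of variables. Substituting back gives the stated decomposition. The remaining work is support bookkeeping: $p$ must be absolutely continuous with respect to $q$ and $\hat{q}$, as in \cref{prop:bruno_regression}, and any nonnegativity or integrability needed to swap sums and integrals is handled by Tonelli.
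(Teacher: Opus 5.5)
Your proposal is correct and follows the paper's proof step for step: the log split that yields the Gap, the permutation-averaged change of variables using conditional exchangeability of $\hat{q}$ and $p_{Y_{1:n}|X_{1:n}}$, recognising the bracket as $\hat{p}\,\hat{q}$, and then trading $\E_{\hat{p}}$ for $\E_{p_{X_{1:n}}}$ by permutation invariance of the KL integrand; your explicit checks of normalisation and exchangeability of $\hat{q}$ fill in what the paper calls clear. One minor correction to your bookkeeping remark: finiteness of the KL terms requires $q \ll p_{Y_{1:n}|X_{1:n}}$, not $p \ll q$, and $q \ll \hat{q}$ holds automatically on the support of $p_{X_{1:n}}$ because the identity permutation gives $\hat{q} \ge q\,p_{X_{1:n}}/(n!\,\hat{p})$.
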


\begin{proof}
We begin by expanding the expected Kullback-Leibler divergence by writing out its integral definition:

\begin{align}
\E_{X_{1:n} \sim p_{X_{1:n}}}&\left[\text{D}_\text{KL}\left[q_{Y_{1:n}|X_{1:n}}(\cdot|X_{1:n} )\mid p_{Y_{1:n}|X_{1:n}}(\cdot|X_{1:n} )\right]\right] \nonumber \\
&= \int_{\mathcal{X}^n} p_{X_{1:n}}(x_{1:n}) \int_{\mathcal{Y}^n} q_{Y_{1:n}|X_{1:n}}(y_{1:n}|x_{1:n}) \log \frac{q_{Y_{1:n}|X_{1:n}}(y_{1:n}|x_{1:n})}{p_{Y_{1:n}|X_{1:n}}(y_{1:n}|x_{1:n})} \mathrm{d}y_{1:n} \mathrm{d}x_{1:n} \nonumber \\
&\textit{(multiply and divide by the exchangeable-ified density } \hat{q}_{Y_{1:n}|X_{1:n}} \textit{ inside the logarithm)} \nonumber \\
&= \int_{\mathcal{X}^n} p_{X_{1:n}}(x_{1:n}) \int_{\mathcal{Y}^n} q_{Y_{1:n}|X_{1:n}}(y_{1:n}|x_{1:n}) \Biggl( \log \frac{q_{Y_{1:n}|X_{1:n}}(y_{1:n}|x_{1:n})}{\hat{q}_{Y_{1:n}|X_{1:n}}(y_{1:n}|x_{1:n})} + \log \frac{\hat{q}_{Y_{1:n}|X_{1:n}}(y_{1:n}|x_{1:n})}{p_{Y_{1:n}|X_{1:n}}(y_{1:n}|x_{1:n})} \Biggr) \mathrm{d}y_{1:n} \mathrm{d}x_{1:n} \nonumber \\
&= \E_{X_{1:n} \sim p_{X_{1:n}}}\left[\text{D}_\text{KL}\left[q_{Y_{1:n}|X_{1:n}}(\cdot|X_{1:n} )\mid \hat{q}_{Y_{1:n}|X_{1:n}}(\cdot|X_{1:n} )\right]\right] \nonumber \\
&\quad + \int_{\mathcal{X}^n} p_{X_{1:n}}(x_{1:n}) \int_{\mathcal{Y}^n} q_{Y_{1:n}|X_{1:n}}(y_{1:n}|x_{1:n}) \log \frac{\hat{q}_{Y_{1:n}|X_{1:n}}(y_{1:n}|x_{1:n})}{p_{Y_{1:n}|X_{1:n}}(y_{1:n}|x_{1:n})} \mathrm{d}y_{1:n} \mathrm{d}x_{1:n} \label{eq:split_kl}
\end{align}
The first term in \Cref{eq:split_kl} is exactly the conditional exchangeability gap from the proposition. We now focus on the second integral term.

By introducing an average over all permutations $\Pi_n$, which does not change the value of the integral (as it's a scalar with respect to $\pi \in \Pi_n$), we obtain:
\begin{align}
    &\frac{1}{n!} \sum_{\pi \in \Pi_n} \int_{\mathcal{X}^n} \int_{\mathcal{Y}^n} p_{X_{1:n}}(x_{1:n}) q_{Y_{1:n}|X_{1:n}}(y_{1:n}|x_{1:n}) \log \frac{\hat{q}_{Y_{1:n}|X_{1:n}}(y_{1:n}|x_{1:n})}{p_{Y_{1:n}|X_{1:n}}(y_{1:n}|x_{1:n})} \mathrm{d}y_{1:n} \mathrm{d}x_{1:n} \nonumber \\
    &\textit{(apply change of variables } (x_{1:n}, y_{1:n}) \mapsto (x_{\pi(1:n)}, y_{\pi(1:n)}) \textit{ within the summation)} \nonumber \\
    &= \frac{1}{n!} \sum_{\pi \in \Pi_n} \int_{\mathcal{X}^n} \int_{\mathcal{Y}^n} p_{X_{1:n}}(x_{\pi(1:n)}) q_{Y_{1:n}|X_{1:n}}(y_{\pi(1:n)}|x_{\pi(1:n)}) \log \frac{\hat{q}_{Y_{1:n}|X_{1:n}}(y_{\pi(1:n)}|x_{\pi(1:n)})}{p_{Y_{1:n}|X_{1:n}}(y_{\pi(1:n)}|x_{\pi(1:n)})} \mathrm{d}y_{1:n} \mathrm{d}x_{1:n} \label{eq:before_cond_exchange}
\end{align}
Because both $\hat{q}_{Y_{1:n}|X_{1:n}}$ and $p_{Y_{1:n}|X_{1:n}}$ are conditionally exchangeable, they are invariant to joint permutations of their inputs and targets. Therefore, we can simplify the argument inside the logarithm:
\begin{align}
    &= \frac{1}{n!} \sum_{\pi \in \Pi_n} \int_{\mathcal{X}^n} \int_{\mathcal{Y}^n} p_{X_{1:n}}(x_{\pi(1:n)}) q_{Y_{1:n}|X_{1:n}}(y_{\pi(1:n)}|x_{\pi(1:n)}) \log \frac{\hat{q}_{Y_{1:n}|X_{1:n}}(y_{1:n}|x_{1:n})}{p_{Y_{1:n}|X_{1:n}}(y_{1:n}|x_{1:n})} \mathrm{d}y_{1:n} \mathrm{d}x_{1:n} \label{eq:after_cond_exchange} \\
    &\textit{(pull the summation inside the integral)} \nonumber \\
    &= \int_{\mathcal{X}^n} \int_{\mathcal{Y}^n} \Biggl( \frac{1}{n!} \sum_{\pi \in \Pi_n} p_{X_{1:n}}(x_{\pi(1:n)}) q_{Y_{1:n}|X_{1:n}}(y_{\pi(1:n)}|x_{\pi(1:n)}) \Biggr) \log \frac{\hat{q}_{Y_{1:n}|X_{1:n}}(y_{1:n}|x_{1:n})}{p_{Y_{1:n}|X_{1:n}}(y_{1:n}|x_{1:n})} \mathrm{d}y_{1:n} \mathrm{d}x_{1:n} \nonumber
\end{align}
Notice that, by the definition of $\hat{q}$ and $\hat{p}$ provided in the proposition, we have the equivalence:
$$\frac{1}{n!} \sum_{\pi \in \Pi_n} p_{X_{1:n}}(x_{\pi(1:n)}) q_{Y_{1:n}|X_{1:n}}(y_{\pi(1:n)}|x_{\pi(1:n)}) = \hat{p}_{X_{1:n}}(x_{1:n}) \hat{q}_{Y_{1:n}|X_{1:n}}(y_{1:n}|x_{1:n})$$
Substituting this back into our integral yields:
\begin{align}
    &= \int_{\mathcal{X}^n} \int_{\mathcal{Y}^n} \hat{p}_{X_{1:n}}(x_{1:n}) \hat{q}_{Y_{1:n}|X_{1:n}}(y_{1:n}|x_{1:n}) \log \frac{\hat{q}_{Y_{1:n}|X_{1:n}}(y_{1:n}|x_{1:n})}{p_{Y_{1:n}|X_{1:n}}(y_{1:n}|x_{1:n})} \mathrm{d}y_{1:n} \mathrm{d}x_{1:n} \nonumber \\
    &= \E_{X_{1:n} \sim \hat{p}_{X_{1:n}}}\left[\text{D}_\text{KL}\left[\hat{q}_{Y_{1:n}|X_{1:n}}(\cdot|X_{1:n} )\mid p_{Y_{1:n}|X_{1:n}}(\cdot|X_{1:n} )\right]\right] \label{eq:almost_there}
\end{align}
To complete the proof, we must show that taking the expectation over $\hat{p}_{X_{1:n}}$ is equivalent to taking it over $p_{X_{1:n}}$. Let $f(x_{1:n}) \vcentcolon= \text{D}_\text{KL}\left[\hat{q}_{Y_{1:n}|X_{1:n}}(\cdot|x_{1:n} )\mid p_{Y_{1:n}|X_{1:n}}(\cdot|x_{1:n} )\right]$. Because both distributions in the KL divergence are conditionally exchangeable, the function $f$ evaluates equivalently under permutations: $f(x_{1:n}) = f(x_{\pi(1:n)})$. 

Using the definition of $\hat{p}_{X_{1:n}}$, we can rewrite the expectation over $\hat{p}$ back into an expectation over $p$:
\begin{align}
    \int_{\mathcal{X}^n} \hat{p}_{X_{1:n}}(x_{1:n}) f(x_{1:n}) \mathrm{d}x_{1:n} &= \int_{\mathcal{X}^n} \Biggl( \frac{1}{n!} \sum_{\pi \in \Pi_n} p_{X_{1:n}}(x_{\pi(1:n)}) \Biggr) f(x_{1:n}) \mathrm{d}x_{1:n} \nonumber \\
    &= \frac{1}{n!} \sum_{\pi \in \Pi_n} \int_{\mathcal{X}^n} p_{X_{1:n}}(x_{\pi(1:n)}) f(x_{\pi(1:n)}) \mathrm{d}x_{1:n} \nonumber \\
    &= \int_{\mathcal{X}^n} p_{X_{1:n}}(x_{1:n}) f(x_{1:n}) \mathrm{d}x_{1:n} \label{eq:final_equivalence}
\end{align}
Applying \Cref{eq:final_equivalence} to \Cref{eq:almost_there}, and combining it with the gap term from \Cref{eq:split_kl}, we arrive exactly at the stated decomposition.
\end{proof}

\section{Datasets}
\label[appendix]{app:datasets}

\subsection{1D GP Synthetic Regression}
In the main text we present results for models with up to $N_c^{(\text{max})}=64$ context points, using the RBF kernel. We present additional results in \cref{app:static_1dGP} for more complex kernels, and higher context sizes (up to $N_c^{(\text{max})}=512$).

\paragraph{Data generation} For all experiments, input locations $x$ are sampled uniformly from the domain $[-2, 2]$. Target values are generated as $y = f(x) + \sigma_\text{obs}\epsilon$, where observation noise is fixed at $\sigma_\text{obs} = 0.1$.

\paragraph{RBF kernel} The radial basis function (RBF) kernel is defined as:

\begin{equation}
    k_{\text{rbf}}(x, x')= \exp\left(-\frac{(x-x')^2}{2\ell^2}\right).
\end{equation}

To span a sensible range of complexity, we use logarithmic uniform sampling to sample the lengthscale $\ell$ such that $\log_{10}(\ell) \sim \mathcal{U}[\log_{10}(\ell_\text{min}),\log_{10}(\ell_\text{max})]$. We use $\ell_\text{min}=0.25$ and $\ell_\text{max}=1$.

\paragraph{Matérn kernels}
The Matérn kernel is a generalisation of the RBF kernel, with an additional smoothness parameter $\nu$. Within this work, kernels $\nu \in \{\frac{1}{2}, \frac{3}{2}, \frac{5}{2}\}$ are considered, referred to as \emph{Matérn 1/2}, \emph{Matérn 3/2} and \emph{Matérn 5/2}. They vary in complexity and are given by:

\begin{flalign}
    k_{\text{m1/2}}(x, x') &= \exp \left(-\frac{|x-x'|}{\ell} \right), \\
    k_{\text{m3/2}}(x, x') &= \left(1 + \frac{\sqrt{3}|x-x'|}{\ell}
    \right)\exp \left(-\frac{\sqrt{3}|x-x'|}{\ell} \right), \\
    k_{\text{m5/2}}(x, x') &= \left(1 + \frac{\sqrt{5}|x-x'|}{\ell} + \frac{5(x-x')^2}{3 \ell^2}
    \right)\exp \left(-\frac{\sqrt{5}|x-x'|}{\ell} \right).
\label{eq:matern_kernels}
\end{flalign}

We sample $\ell$ using the same methodology as for the RBF kernel.

\paragraph{Periodic kernel}
The periodic kernel models periodic data by allowing the correlation between points to cycle over a period $p$, and is defined as:

\begin{equation}
    k_{\text{per}}(x, x')= \exp\left(-2 \frac{\sin^2(\pi |x-x'|/p)}{\ell^2}\right).
\end{equation}

We use $p=2$, allowing for two complete oscillations over the domain, with $\ell$ being sampled in the same fashion as for the RBF kernel.

\paragraph{Mixed kernel} To introduce additional complexity, we train models using a mixture of GP kernels. In this \emph{mixed kernel} setting, we randomly sample a stationary kernel family $k$ from the set $\{k_\text{rbf}, k_\text{m1/2}, k_\text{m3/2}, k_\text{m5/2}, k_\text{per}\}$ for each training batch, before sampling the specific hyperparameters. This requires the NP to implicitly infer not only the kernel parameters but also the underlying kernel family itself for each task.

\subsection{Tabular Data}
\paragraph{Synthetic prior} To encourage generalisation across diverse tabular distributions, we generate synthetic datasets using a structural prior adapted from the TabPFN v1.0 synthetic data pipeline \citep{TabPFNv2}\footnote{TabPFN's pipeline is available at \url{https://github.com/PriorLabs/TabPFN/tree/v1.0.0/tabpfn/priors}}. For each dataset instance, we sample hyperparameters and instantiate a randomly-weighted MLP-based generator, yielding input features $\mathbf{x}$ and regression targets $\mathbf{y}$.
The MLP depth is sampled from a truncated normal distribution with a minimum depth of 2, with log-uniformly sampled mean and standard deviation in the range $[1,6]$. The hidden width is sampled analogously with log-uniform mean/standard deviation in the range $[5,130]$ and a minimum width of $4$. Each MLP randomly selects an activation function from the set $\{\text{ReLU}, \text{Tanh}, \text{Sigmoid}, \text{ELU}, \text{Identity}\}$.

To standardize input dimensionality, we Z-score normalise features per dataset and zero-pad feature vectors to a fixed dimension $D_x=20$. For regression targets, we first clip outliers using an interquartile range filter with factor $k=3.0$, then apply a randomized normalisation chosen uniformly from Z-score, Min-Max, Max-Abs, or Robust scaling to induce robustness to varying output scales.

\paragraph{Testing protocol} When testing on synthetically generated data, we test over 80,000 tasks with $N_c \sim \mathcal{U}[10, 1024]$ and $N_t=128$. For real-world datasets of length $N$, we randomly sample $N_c\sim \mathcal{U}[0.5N, 0.8N]$ context points and use the remaining points as the targets $N_t = N-N_c$, averaging over 50 such tasks from each dataset. When evaluating model performance on real-world datasets in a streaming setting, we evaluate on $N_t=0.2N$ randomly selected target points and use the remaining points as contextual information to be streamed, averaging our results across 10 random dataset permutations. We consider the Skillcraft, Powerplant, Elevator and Protein datasets from the UCI Machine Learning Repository for real-world tabular evaluation. We indicate the total number of datapoints $N$ in brackets for each UCI dataset.

\paragraph{Skillcraft ($N=3338$)} SkillCraft1 Master Table is a dataset of player rankings on the StarCraft 2 video game. We use this ranking (\emph{LeagueIndex}) as our target output. We drop the \emph{GameID} column (which is an index column) and log-transform all other features.

\paragraph{Powerplant ($N=9568$)} The Combined Cycle Power Plant dataset records the net hourly electrical output of a powerplant alongside ambient environmental features for that point in time. We use the hourly electrical output as our target output (which maps to the last column) and all other columns for our input features.

\paragraph{Elevator ($N=16599$)} For the Elevators dataset, a dataset monitoring the control of an aircraft elevator, we use the last column as our target and remove 2 feature columns that have a standard deviation below $1\times 10^{-5}$.

\paragraph{Protein ($N=44019$)} The Physicochemical Properties of Protein Tertiary Structure is a large scale scientific dataset detailing protein decoy structures. For this task, we use the last column as the target column and drop all duplicate entries.

\paragraph{Normalisation} We Z-normalise both features and targets, and clip all values to be in the range $[-5, 5]$. In the static context evaluation setting, we compute normalisation statistics on the context dataset and apply normalisation to both the target and context sets. For the streaming setting, we compute normalisation statistics on a calibration set of size $\max(200, 0.2N_c)$ which is excluded from evaluation and then stream the remaining data. We continuously update feature normalisation statistics in an online fashion as the stream arrives.




\subsection{Station Temperature Prediction}
The HadISD dataset \citep{hadisd1, hadisd2} is a quality-controlled subset of the Integrated Surface Database (ISD) \citep{smith2011integrated}, comprising in situ observations from 9,948 weather stations. Key recorded variables include temperature, dewpoint temperature, cloud cover and wind speed. Due to computational constraints, we focus on the dry bulb air temperature at a 2-metre screen height across 2,805 stations situated in or near Africa and Europe. This corresponds to a latitude and longitude range of $[-20^\circ,60^\circ]_\phi \times [-10^\circ,52^\circ]_\lambda$.

As an off-the-grid meteorological dataset, station density varies geographically; weather station observations occur relatively sparsely in Africa, and much more frequently in Europe as seen in \autoref{fig:had_isd_station_density}, adding to the task complexity.

\begin{figure}[!htbp]
    \centering
    \includegraphics[width=0.50\linewidth]{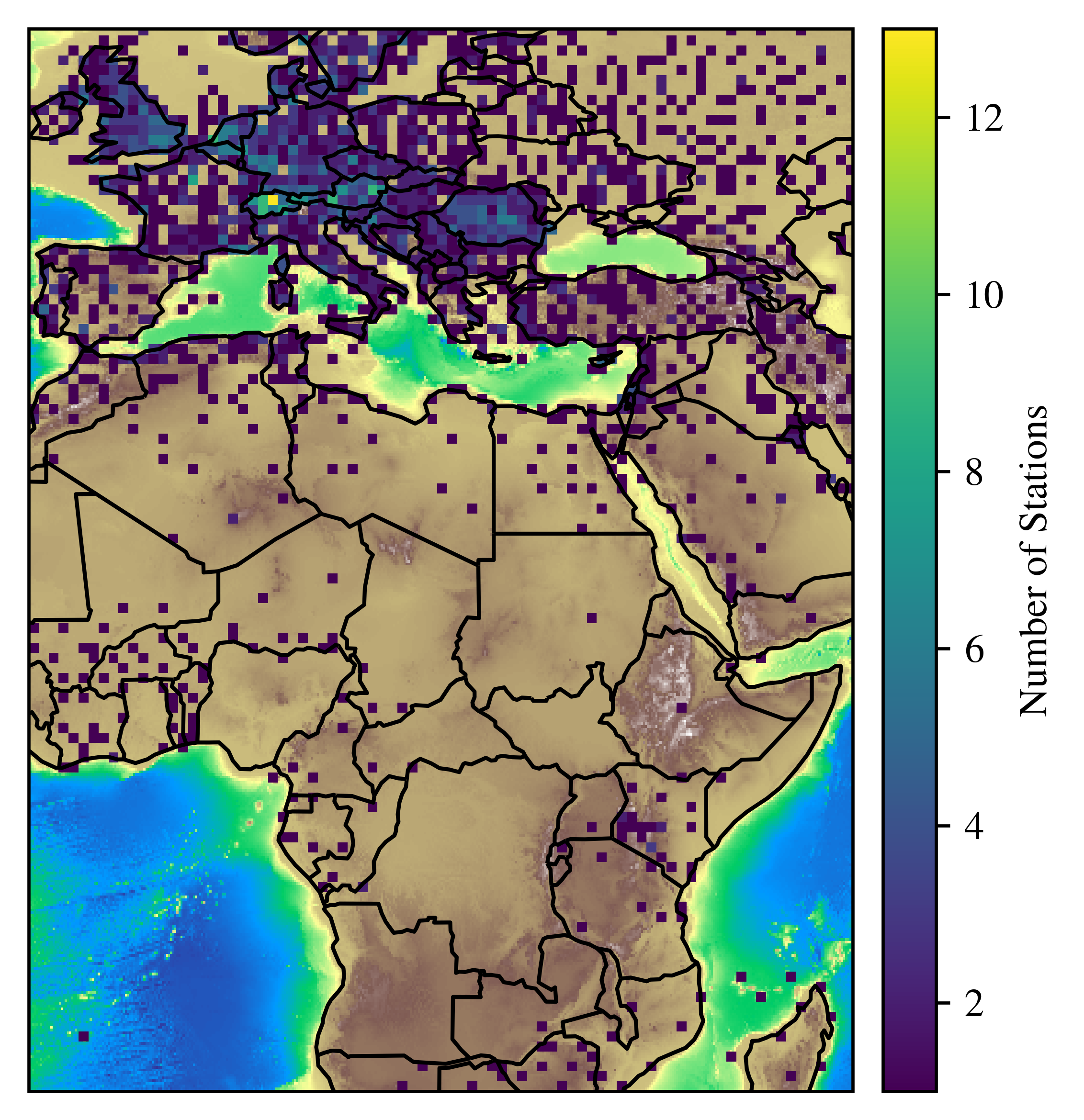}
    \caption{Distribution of the $2,805$ stations across the restricted latitude and longitude used for our station temperature prediction task, using $0.75^\circ \times 0.75^\circ$ patches.}
    \label{fig:had_isd_station_density}
\end{figure}

\paragraph{Data split} We train our models on observations from 1980 to 2016, validate model performance on data from 2017, and test model performance on data from 2018--2019.

\paragraph{Normalisation} Latitude and longitude features are normalised to be in the range $[-1, 1]$, and we apply Z-normalisation to temperature and elevation. All normalisation statistics are computed on the training dataset.

\paragraph{Spatiotemporal interpolation} We train all NP models on a spatiotemporal interpolation task. Given a random subset of contextual observations at $H=8$ windows separated by $\delta=6$ hours, we train NP models to predict the temperature at randomly sampled locations across the complete time window. This corresponds to a total window of 48 hours. This training task forces NP models to learn complex underlying spatiotemporal temperature patterns, informing a foundational representation of weather modelling. Moreover, the spatiotemporal interpolation is a task of practical interest of its own, such as when having to estimate true readings at faulty or sparse sensors.

\paragraph{Forecasting} We test the ability of our NP models to generalise to weather forecasting, having only been trained on the markedly different task of spatiotemporal interpolation. For forecasting we take $H=8$ windows separated by $\delta=6$ hours and sample $N_c$ temporally ordered observations from the first $H-1$ windows as our context set, leaving $N_t$ stations from the last time window to compose our target set.

\section{Additional Experimental Results}
\label[appendix]{app:additional_results}

In addition to the experimental results provided in \Cref{sec:experiments}, here we present further evidence for the claims that we make in the main part of the paper. We also provide the results of further empirical investigations that we conduct into the performance of our models in certain scenarios.

\subsection{Static Context Performance}
\label[appendix]{app:static_context}

\begin{table}[htb]
    \captionsetup[subtable]{labelformat=simple}
    \renewcommand\thesubtable{(\alph{subtable})}
    \caption{Average Test Log-Likelihoods ($\uparrow$) for (a) Factorised and (b) AR deployment. We report the absolute performance of the reference non-incremental TNP-D. For all the other methods we report the difference ($\Delta$) relative to the reference performance. Held-out task are shown in purple, while transfer scenarios, such as Sim-to-Real (Tabular) and Forecasting (Temperature) are coloured in orange. The values indicate absolute or relative performance $\pm$ the standard error of the mean (SEM) of each method.}
    \label{tab_app:aggregated_results}
    \centering

    \begin{subtable}{\textwidth}
        \centering
        \caption{\textbf{Factorised predictions}. \texttt{incTNP-Seq} achieves parity or outperforms \texttt{TNP-D} on held-out tasks, and dominates on transfer scenarios.}
        \label{tab_app:aggregated_results_factorised}
        
        \begin{small}
        \begin{sc}
        \resizebox{\textwidth}{!}{
            \begin{tabular}{l c cc cc}
            \toprule
            & \multicolumn{1}{c}{\textbf{Reference}} & \multicolumn{2}{c}{\textbf{Ours} ($\Delta$ ($\uparrow$) relative to Ref.)} & \multicolumn{2}{c}{\textbf{Baselines} ($\Delta$ ($\uparrow$) relative to Ref.)} \\
            \cmidrule(lr){2-2} \cmidrule(lr){3-4} \cmidrule(lr){5-6}
            
            \textbf{Dataset} & \textbf{TNP-D} & \textbf{incTNP} & \textbf{incTNP-Seq} & \textbf{CNP} & \textbf{LBANP} \\
            \midrule
            
            \rowcolor{blue!5} 1D GP & $0.4309 \pm 0.0019$ & \textcolor{BrickRed}{${-0.0133 \pm 0.0019}$} & \textcolor{BrickRed}{${-0.0016 \pm 0.0019}$} & \textcolor{BrickRed}{${-0.2301 \pm 0.0021}$} & \textcolor{OliveGreen}{$\mathbf{+0.0040 \pm 0.0019}$} \\
            \rowcolor{blue!5} Tabular (Synthetic) & $0.1540 \pm 0.0047$ & \textcolor{BrickRed}{${-0.0201 \pm 0.0046}$} & \textcolor{OliveGreen}{$\mathbf{+0.0075 \pm 0.0048}$} & \textcolor{BrickRed}{${-0.3304 \pm 0.0038}$} & \textcolor{BrickRed}{${-0.0576 \pm 0.0045}$} \\
            \rowcolor{orange!10} Skillcraft & $-0.9543 \pm 0.0002$ & \textcolor{OliveGreen}{$+0.0025 \pm 0.0002$} & \textcolor{OliveGreen}{$\mathbf{+0.0078 \pm 0.0002}$} & \textcolor{BrickRed}{${-0.1336 \pm 0.0002}$} & \textcolor{BrickRed}{${-0.0310 \pm 0.0002}$} \\
            \rowcolor{orange!10} Powerplant & $-0.0076 \pm 0.0002$ & \textcolor{OliveGreen}{$+0.0022 \pm 0.0002$} & \textcolor{OliveGreen}{$\mathbf{+0.0028 \pm 0.0002}$} & \textcolor{BrickRed}{${-0.2692 \pm 0.0002}$} & \textcolor{BrickRed}{${-0.0244 \pm 0.0001}$} \\
            \rowcolor{orange!10} Elevators & $\mathbf{-0.3219 \pm 0.0003}$ & \textcolor{BrickRed}{${-0.0235 \pm 0.0003}$} & \textcolor{BrickRed}{${-0.0086 \pm 0.0004}$} & \textcolor{BrickRed}{${-0.9417 \pm 0.0002}$} & \textcolor{BrickRed}{${-0.2048 \pm 0.0003}$} \\
            \rowcolor{orange!10} Protein & $-1.1524 \pm 0.0001$ & \textcolor{BrickRed}{${-0.0281 \pm 0.0001}$} & \textcolor{OliveGreen}{$\mathbf{+0.0363 \pm 0.0001}$} & \textcolor{BrickRed}{${-0.1877 \pm 0.0000}$} & \textcolor{BrickRed}{${-0.0239 \pm 0.0001}$} \\
            \rowcolor{blue!5} HadISD (Interp) & $-1.7025 \pm 0.0023$ & \textcolor{BrickRed}{${-0.0112 \pm 0.0023}$} & \textcolor{OliveGreen}{$\mathbf{+0.0182 \pm 0.0022}$} & \textcolor{BrickRed}{${-0.5331 \pm 0.0007}$} & \textcolor{BrickRed}{${-0.0904 \pm 0.0017}$} \\
            \rowcolor{orange!10} HadISD (Forecast) & $-2.5708 \pm 0.0031$ & \textcolor{OliveGreen}{$+0.0302 \pm 0.0039$} & \textcolor{OliveGreen}{$\mathbf{+0.6897 \pm 0.0019}$} & \textcolor{OliveGreen}{$+0.1806 \pm 0.0011$} & \textcolor{OliveGreen}{$+0.2681 \pm 0.0018$} \\
            \bottomrule
            \end{tabular}
        }
        \end{sc}
        \end{small}
    \end{subtable}

    \vspace{0.5cm} 

    \begin{subtable}{\textwidth}
        \centering
        \caption{\textbf{AR predictions.}}
        \label{tab_app:aggregated_results_ar}
        
        \begin{small}
        \begin{sc}
        \resizebox{\textwidth}{!}{
        \begin{tabular}{l c c c c c c}
        \toprule
        \textbf{Dataset} & \textbf{TNP-D} (Ref) & \textbf{incTNP} & \textbf{incTNP-Seq} & \textbf{CNP} & \textbf{LBANP} & \textbf{TNP-A} \\
        \midrule
        \rowcolor{blue!5} 1D GP & $0.767 \pm 0.003$ & \textcolor{BrickRed}{$-0.007 \pm 0.003$} & \textcolor{OliveGreen}{$\mathbf{+0.002 \pm 0.003}$} & \textcolor{BrickRed}{$-0.230 \pm 0.011$} & \textcolor{BrickRed}{$-0.001 \pm 0.003$} & \textcolor{OliveGreen}{$\mathbf{+0.004 \pm 0.003}$} \\
        \rowcolor{blue!5} HadISD (Interp) & $-1.670 \pm 0.007$ & \textcolor{BrickRed}{$-0.038 \pm 0.009$} & \textcolor{OliveGreen}{$+0.016 \pm 0.007$} & \textcolor{BrickRed}{$-0.552 \pm 0.002$} & \textcolor{BrickRed}{$-0.097 \pm 0.005$} & \textcolor{OliveGreen}{$\mathbf{+0.031 \pm 0.008}$} \\
        \rowcolor{orange!10} HadISD (Forecast) & $-1.749 \pm 0.003$ & \textcolor{BrickRed}{$-0.005 \pm 0.003$} & \textcolor{OliveGreen}{$\mathbf{+0.061 \pm 0.004}$} & \textcolor{BrickRed}{$-0.559 \pm 0.004$} & \textcolor{BrickRed}{$-0.120 \pm 0.005$} & \textcolor{OliveGreen}{$+0.035 \pm 0.003$} \\
        \bottomrule
        \end{tabular}
        }
        \end{sc}
        \end{small}
    \end{subtable}
\end{table}

In this task we evaluate the performance of the models when evaluated in the static context setting. This is to establish whether the causal masking introduced in the \texttt{incTNP} variants leads to significant performance drop when evaluated in the standard NP evaluation setting. For the 1D GP and Tabular (Synthetic) tasks the results are obtained over $80000$ datasets sampled from the prior. For the real-world tabular tasks, the results are shown over $100$ permutations of the datasets. For HadISD, factorised results are averaged over $80000$ random samples whilst AR results are averaged over $4096$ samples.

In \cref{tab_app:aggregated_results}, we expand on the results from \cref{tab:aggregated_results} by including the standard error of the mean (SEM), providing a measure of the variability across test datasets. In one of the only two settings for which \texttt{incTNP-Seq} is outperformed by \texttt{TNP-D}, the 1D GP task, the performance difference is within an SEM of \texttt{incTNP-Seq}'s performance. Otherwise, the variability is small enough to demonstrate that, when our models are the top performers, it is not an artefact of randomness.

\subsubsection{1D GP Synthetic Regression}
\label[appendix]{app:static_1dGP}
\paragraph{Additional results}
We provide additional results for larger context set sizes ($N_c$ up to $512$ during training and evaluation) as well as on more complicated kernels (mix of $5$ kernels), to show that the conclusions from the main are indeed supported in a variety of experimental settings. These results are shown in \Cref{fig:combined_boxplots}. In particular, \texttt{incTNP-Seq} continues to be the best-performing variant, followed by \texttt{TNP-D} and then \texttt{incTNP}, while also benefitting from robustness to hyperparameter setting. The differences between \texttt{incTNP-Seq} and \texttt{TNP-D} become even more pronounced with harder tasks.

\begin{figure}[htbp]
    \centering
    \begin{subfigure}[b]{0.24\textwidth}
        \centering
        \includegraphics[width=\linewidth]{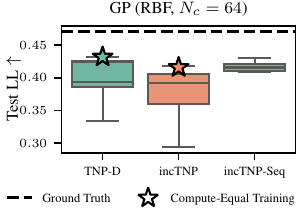}
        \caption{$N_c = 64$}
        \label{fig:boxplot_nc64}
    \end{subfigure}
    \begin{subfigure}[b]{0.24\textwidth}
        \centering
        \includegraphics[width=\linewidth]{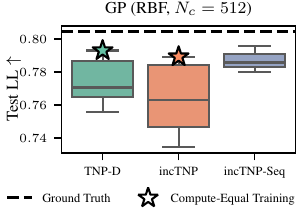}
        \caption{$N_c = 512$}
        \label{fig:boxplot_nc512}
    \end{subfigure}
    \begin{subfigure}[b]{0.24\textwidth}
        \centering
        \includegraphics[width=\linewidth]{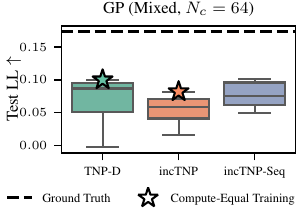}
        \caption{$N_c = 64$}
        \label{fig:boxplot_nc64}
    \end{subfigure}
    \begin{subfigure}[b]{0.24\textwidth}
        \centering
        \includegraphics[width=\linewidth]{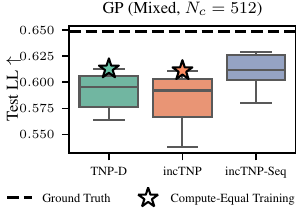}
        \caption{$N_c = 512$}
        \label{fig:boxplot_nc512}
    \end{subfigure}
    
    \caption{Mean Log-Likelihood boxplots for RBF and Mixed kernels and for different context sizes ($N_c$).}
    \label{fig:combined_boxplots}
\end{figure}

\label[appendix]{app:compute_budget_fairness}
\paragraph{Fairness of compute budgets} 
For every task, we also train variants of \texttt{TNP-D} and \texttt{incTNP} where their training time is matched (i.e., increased) to that of \texttt{incTNP-Seq}. This was done to eliminate any bias towards \texttt{incTNP-Seq} in our fixed-number-of-epochs setting by it being slightly more compute-heavy an architecture. This benefits the faster-training models since they encounter a greater number of datasets during training than \texttt{incTNP-Seq}. In all cases, the results are presented for the best-performing variant within each family. We refer to the matched training-time variants as ``compute-equal''. Notably, even with compute-equal training, \texttt{incTNP-Seq} remains the top-performing variant across our experiments.

\paragraph{Robustness to hyperparameter choice} 
To verify robustness to hyperparameter selection, we iterate over learning rates (LRs) in $[0.0001, 0.0003, 0.0005, 0.001, 0.002]$. We perform each iteration twice; once with the LR held constant and again with the LR following a cosine annealing schedule \citep{loshchilov2017sgdr}. For each task and model, the best performing variant, LR, and annealing schedule are as follows:
\begin{itemize}
    \item RBF, $N_c=64$: 
    \begin{itemize}
        \item TNP-D: 220 epochs (compute-equal), cosine annealing scheduler with initial LR=0.0003
        \item incTNP: 200 epochs, cosine annealing scheduler with initial LR=0.0003
        \item incTNP-Seq: 200 epochs, cosine annealing scheduler with initial LR=0.0003
    \end{itemize}
    \item RBF, $N_c=512$:
    \begin{itemize}
        \item TNP-D: 250 epochs (compute-equal), cosine annealing scheduler with initial LR=0.0001
        \item incTNP: 250 epochs (compute-equal), cosine annealing scheduler with initial LR=0.0003
        \item incTNP-Seq: 200 epochs, cosine annealing scheduler with initial LR=0.0003
    \end{itemize}
    \item Mixed, $N_c=64$:
    \begin{itemize}
        \item TNP-D: 440 epochs (compute-equal), cosine annealing scheduler with initial LR=0.0003
        \item incTNP: 440 epochs (compute-equal), cosine annealing scheduler with initial LR=0.0003
        \item incTNP-Seq: 400 epochs, cosine annealing scheduler with initial LR=0.001
    \end{itemize}
    \item Mixed, $N_c=512$: 
    \begin{itemize}
        \item TNP-D: 500 epochs (compute-equal), cosine annealing scheduler with initial LR=0.0003
        \item incTNP: 500 epochs (compute-equal), cosine annealing scheduler with initial LR=0.0005
        \item incTNP-Seq: 400 epochs, cosine annealing scheduler with initial LR=0.0005
    \end{itemize}
\end{itemize}



    

\subsubsection{Synthetic Tabular Data}
\label[appendix]{app_subsec:tabular_data}

\paragraph{Additional results}
As with the synthetic GP data, we conduct a further experiment to investigate the effect of larger context sets in the synthetic tabular data setting. These results can be found in \Cref{fig:tabular_boxplot}. They demonstrate the same performance pattern yet again; \texttt{incTNP-Seq} leads and is followed by \texttt{TNP-D} and then \texttt{incTNP}. \texttt{incTNP-Seq}'s narrower boxplot once more demonstrates superior robustness to the hyperparameter setting. The only difference between these results and those in \Cref{fig:LL_vs_hyperparams} is that the upper bound of performance is somewhat increased in this setting. This is to be expected since the models are provided with more context data.

\begin{figure}[htb]
    \centering
    \includegraphics[width=0.25\linewidth]{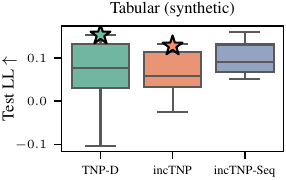}
    \caption{Mean Log-Likelihood boxplots for tabular data prediction with $N_c$ up to $1024$.}
    \label{fig:tabular_boxplot}
\end{figure}

As with the GP regression task, we perform a learning rate sweep across both a fixed learning rate and cosine learning rate scheduler, searching across (initial) learning rates in $[0.0001, 0.0003, 0.0005, 0.001, 0.003, 0.005]$. We also compare against compute-equal variants of \texttt{TNP-D} and \texttt{incTNP}.

The best trained variants were found to be:
\begin{itemize}
    \item TNP-D: 680 epochs (compute-equal), cosine annealing scheduler with initial LR=0.003.
    \item incTNP: 500 epochs, cosine annealing scheduler with initial LR=0.005.
    \item incTNP-Seq: 500 epochs, cosine annealing scheduler with initial LR=0.003.
\end{itemize}

\subsubsection{Hyperparameter Robustness Analysis} 
\paragraph{Outlier Filtering}
When conducting our hyperparameter sensitivity analysis in \cref{fig:LL_vs_hyperparams}, we use Seaborn's default filtering functionality and filter out training runs that fail to converge. \cref{fig:outlier_filtering} shows our hyperparameter robustness analysis without any outlier filtering. In the GP case, \texttt{incTNP} and \texttt{TNP-D} fail to converge when using the largest learning rate without a scheduler, and \texttt{incTNP} fails to converge without a scheduler when using the largest learning rate for the tabular setting.

\begin{figure}[htb]
    \centering
    \includegraphics[width=0.6\linewidth]{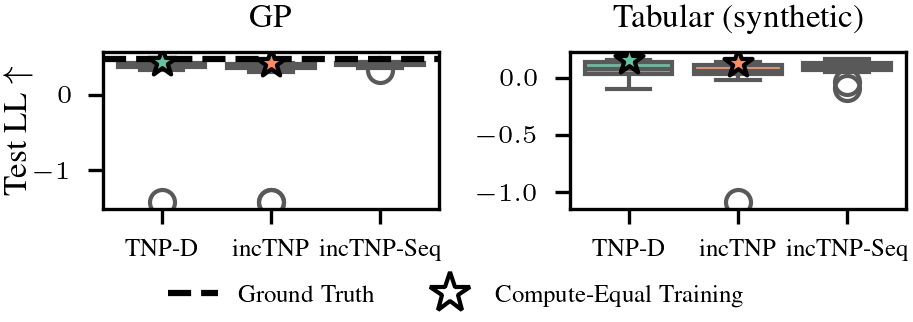}
    \caption{Test log-likelihoods ($\uparrow$) on synthetic GP (RBF kernel, $N_c$ up to $64$) and Tabular datasets across multiple training configurations without any outlier removal. The highest learning rate we used led to failure of some models (\texttt{TNP-D} and \texttt{incTNP} in the GP case, and \texttt{incTNP} in the tabular case) to converge, resulting in outliers.}
    \label{fig:outlier_filtering}
\end{figure}

\paragraph{Measured Gradient Variance}
\label[appendix]{app_para:grad_variance}
\texttt{incTNP-Seq}'s autoregressive training strategy leads to a reduced sensitivity to hyperparameter choice (as shown in \cref{fig:LL_vs_hyperparams}). We hypothesise that this strategy results in lower gradient variance, with implicit regularisation from the denser supervision signal and a smoother loss landscape also potentially contributing to hyperparameter robustness. To probe this, we measure the gradient-noise-to-signal ratio (GNSR) for both \texttt{incTNP} and \texttt{incTNP-Seq} during GP training in \cref{fig:gnsr_measured}. Specifically, we compute gradients on a fixed collection of mini-batches, estimate the signal as the squared norm of the mean gradient, and estimate the noise as the variance of the mini-batch gradients around this mean. \texttt{incTNP-Seq} exhibits lower GNSR throughout training, providing exploratory evidence to support the hypothesis of reduced training gradient variance.

\begin{figure}[htbp]
    \centering
    \includegraphics[width=0.6\linewidth]{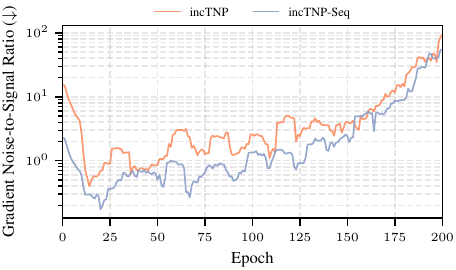}
    \caption{The gradient noise-to-signal ratio (GNSR) for \texttt{incTNP} and \texttt{incTNP-Seq} as training progresses on the RBF kernel with $N_c^{(\text{max})} = 64$. For each checkpoint, we compute gradients over a fixed collection of mini-batches, estimate the signal as the squared norm of the mean gradient, and estimate the noise as the variance of mini-batch gradients around this mean. We estimate this using $80$k samples with batch size $16$.}
    \label{fig:gnsr_measured}
\end{figure}

\subsubsection{Station Temperature Prediction}
\label[appendix]{app:had_isd_additional_results}
We train all NP models on the temperature prediction task (HadISD) using $H=8$ time windows spread $\delta=6$ hours apart, resulting in a 48 hour training window. For the experiment performed in the main part of the paper, we train the models to perform spatiotemporal interpolation. We then evaluate their performance on the same interpolation scenario, as well as a forecasting scenario in which the targets are sampled strictly from future time windows relative to the context. 

To further assess generalisation, here we extend our investigation by evaluating performance over a \textit{24 hour} testing window. We evaluate on spatiotemporal interpolation with factorised predictions (where targets are less correlated), as well as forecasting with AR predictions (where targets are highly correlated, making AR predictions especially suitable). Since models are trained on 48 hour windows, we consider 24 hour window evaluation to be an out-of-distribution setting. \cref{tab_app:had_h4_delta6_factorised} summarises NP model performance, with our \texttt{incTNP} and \texttt{incTNP-Seq} matching or exceeding the performance of \texttt{TNP-D}, demonstrating robust generalisation capabilities.

\begin{table}[htb]
    \captionsetup[subtable]{labelformat=simple}
    \renewcommand\thesubtable{(\alph{subtable})}
    \caption{Average Test Log-Likelihoods ($\uparrow$) for (a) Factorised and (b) AR deployment. We report the absolute performance of the reference non-incremental TNP-D. For all the other methods we report the difference ($\Delta$) relative to the reference performance. All tasks are considered out-of-distribution and coloured in orange. The values indicate absolute or relative performance $\pm$ the standard error of the mean (SEM) of each method.}
    \label{tab_app:had_h4_delta6_factorised}
    \centering

    \begin{subtable}{\textwidth}
        \centering
        \caption{\textbf{Factorised predictions}. \texttt{incTNP-Seq} and \texttt{incTNP} outperform or achieve parity with \texttt{TNP-D} on OOD interpolation.}
        \label{tab_app:aggregated_results_factorised}
        
        \begin{small}
        \begin{sc}
        \resizebox{\textwidth}{!}{
            \begin{tabular}{l c cc cc}
            \toprule
            & \multicolumn{1}{c}{\textbf{Reference}} & \multicolumn{2}{c}{\textbf{Ours} ($\Delta$ ($\uparrow$) relative to Ref.)} & \multicolumn{2}{c}{\textbf{Baselines} ($\Delta$ ($\uparrow$) relative to Ref.)} \\
            \cmidrule(lr){2-2} \cmidrule(lr){3-4} \cmidrule(lr){5-6}
            
            \textbf{Time Window (Interp)} & \textbf{TNP-D} & \textbf{incTNP} & \textbf{incTNP-Seq} & \textbf{CNP} & \textbf{LBANP} \\
            \midrule
            
            \rowcolor{orange!10} 24 Hours & $-1.6695 \pm 0.0022$ & \textcolor{OliveGreen}{${+0.0064 \pm 0.0022}$} & \textcolor{OliveGreen}{$\mathbf{+0.0312 \pm 0.0022}$} & \textcolor{BrickRed}{${-0.5024 \pm 0.0007}$} & \textcolor{BrickRed}{$-0.0501 \pm 0.0017$} \\
            \bottomrule
            \end{tabular}
        }
        \end{sc}
        \end{small}
    \end{subtable}

    \vspace{0.5cm} 

    \begin{subtable}{\textwidth}
        \centering
        \caption{\textbf{AR predictions.} \texttt{incTNP} matches \texttt{TNP-D}'s performance, whilst \texttt{incTNP-Seq} significantly outperforms it.}
        \label{tab_app:had_h4_delta6_ar}
        
        \begin{small}
        \begin{sc}
        \resizebox{\textwidth}{!}{
        \begin{tabular}{l c c c c c c}
        \toprule
        \textbf{Time Window (Forecast)} & \textbf{TNP-D} (Ref) & \textbf{incTNP} & \textbf{incTNP-Seq} & \textbf{CNP} & \textbf{LBANP} & \textbf{TNP-A} \\
        \midrule
        \rowcolor{orange!10} 24 Hours & $-1.8108\pm 0.0026$ & \textcolor{OliveGreen}{$+0.0356 \pm 0.0030$} & \textcolor{OliveGreen}{$\mathbf{+0.1536 \pm 0.0071}$} & \textcolor{BrickRed}{$-0.4390 \pm 0.0040$} & \textcolor{BrickRed}{$-0.0526\pm 0.0050$} & \textcolor{OliveGreen}{$+0.0931 \pm 0.0031$} \\
        \bottomrule
        \end{tabular}
        }
        \end{sc}
        \end{small}
    \end{subtable}
\end{table}

\subsection{Discussion on TNP-A}
\label[appendix]{app:tnp_a}
In addition to \texttt{TNP-D}, \citet{nguyen2022transformer} introduce \texttt{TNP-A}, a TNP variant designed to predict the joint distribution over targets autoregressively. Whilst our primary focus is on the streaming setting rather than pure autoregressive (AR) generation, \texttt{TNP-A} represents a significant baseline in the literature that warrants discussion. We report results for \texttt{TNP-A} on AR prediction tasks in \cref{tab_app:aggregated_results_ar} and \cref{tab_app:had_h4_delta6_ar}.

\texttt{TNP-A} processes a context set $\{x_n^c,y_n^c\}_{n=1}^{N_c}$ and target input locations $\{x_n^t,0\}_{n=1}^{N_t}$, as well as a sequence of observed targets $\{x_n^t,y_n^t\}_{n=1}^{N_t}$ to model the joint distribution over targets autoregressively. As with \texttt{TNP-D}, context points attend to all other context points. However, target input locations and observed targets attend to all context points as well as preceding observed targets (as is visualised in Figure 2 of~\citep{nguyen2022transformer}), inducing a dependency on the ordering of targets. At prediction time, where the true target observations are not available, \texttt{TNP-A} samples from its predictive distribution sequentially to build up a set of observations, and averages predictions over multiple sample unrolls $S$. 

By allowing unmasked attention between context points, \texttt{TNP-A} achieves context permutation invariance but cannot incrementally update its context representation efficiently, incurring the same computational bottleneck as \texttt{TNP-D}.

\texttt{TNP-A} combines the context set, target locations and target observations into a single sequence that is processed through masked self-attention layers, resulting in an $\mathcal{O}(SN_t(N_c+N_t)^2)$ runtime complexity. We observe that \citet{nguyen2022transformer}'s \texttt{TNP-A} implementation can be sped up through the use of KV caching to $\mathcal{O}(S(N_c +N_t)^2)$. We term our implementation \texttt{TNP-A (Cached)}, and refer to the original as \texttt{TNP-A (Original)}. We note that \texttt{TNP-A}'s reliance on a unified self-attention stack combined with its hybrid masking strategy makes the implementation of KV caching non-trivial. We also note that runtime comparisons are not fully architecture-controlled: our \texttt{TNP-D} implementation interleaves self-attention and cross-attention layers, whereas \texttt{TNP-A} uses a unified self-attention stack, resulting in fewer \texttt{TNP-A} parameters when the transformer layer count is equalised.

In \autoref{fig:tnpa_runtime_cost}, we explore the AR streaming cost of TNP variants relative to \texttt{incTNP} on the HadISD task. We observe that \texttt{TNP-A (Original)} is prohibitively expensive to deploy in streaming environments, scaling less favourably than AR \texttt{TNP-D}. Conversely, our optimised \texttt{TNP-A} model scales more capably. Ultimately, the inability to efficiently incrementally update its context representation means that \texttt{TNP-A} is ill-suited for AR streaming tasks, and cannot compete with the computational advantages \texttt{incTNP} offers, which are exacerbated for tasks where the frequency of context observations exceeds the number of inference calls.


\begin{figure}[htbp]
    \centering
    \includegraphics[width=0.6\linewidth]{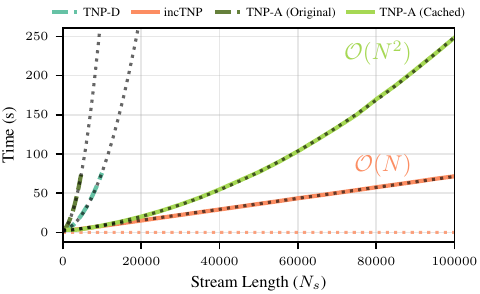}
    \caption{Cost of each AR mode deployment step as weather observations arrive sequentially on the HadISD task. At each step, models condition on all streamed data $N_s$ and perform AR prediction on a fixed target set of $N_t=250$ using $S=50$ sample unrolls. The dotted orange line represents the cost of updating \texttt{incTNP}'s context representation. Both the original unoptimised TNP-A model from~\citet{nguyen2022transformer}, \texttt{TNP-A (Original)}, and our optimised version with KV caching implemented, \texttt{TNP-A (Cached)}, are compared. We only run \texttt{TNP-A (Original)} over $2{,}000$ context points and AR \texttt{TNP-D} over $5{,}000$ context points due to their expensive scaling, but fit quadratic functions to them (with dotted black lines) to show their expected scaling. We also fit cost curves to \texttt{TNP-A (Original)} and \texttt{incTNP} with dotted black lines to highlight their respective quadratic and linear per-step AR scaling. At $N_s=100{,}000$, \texttt{TNP-A (Cached)} is $3.5\times$ slower than \texttt{incTNP} despite having approximately half as many parameters.}
    \label{fig:tnpa_runtime_cost}
\end{figure}

Our findings are echoed by \citet{hassan2025efficientautoregressiveinferencetransformer}, who show that \texttt{TNP-A}'s design is ill-suited for efficient autoregressive deployment. They show that the use of two target sets ($\{x_n^t,0\}_{n=1}^{N_t}$ and $\{x_n^t,y_n^t\}_{n=1}^{N_t}$) imposes a significant computational burden during AR decoding, particularly when the target set is large. Additionally, they show that \texttt{TNP-A}'s bespoke masking strategy prevents the use of highly optimised kernels such as FlashAttention \citep{dao2023flashattention2fasterattentionbetter} during training, resulting in a prohibitively expensive training cost. Thus, whilst \texttt{TNP-A} achieves strong AR predictive performance (as shown in \cref{tab_app:aggregated_results_ar} and \cref{tab_app:had_h4_delta6_ar}), it is computationally demanding---particularly during training. By contrast, \texttt{incTNP-Seq} utilises a standard causal structure that is compatible with high-performance attention kernels and maintains a linear-time context update per step, making it a more practical candidate for the large-scale streaming tasks explored in this work.

\subsection{Implicit Bayesianness}
\label[appendix]{app:implicit_bayesianness}

We now provide additional details about our empirical investigation into the implicit Bayesianness of the models' implied prediction rules under our specific streaming evaluation protocol, as well as further results. These additional results further support our claim that \texttt{TNP-D} and \texttt{incTNP-Seq} exhibit similar degrees of consistency in their model updates.

\subsubsection{Metric Implementation Details}
\label[appendix]{app:metric_implementation_details}

We detail the empirical quantity computed in our implicit Bayesianness experiments. Consider an evaluation task with a fixed context set $\mcD^{\mathrm{fixed}}$ and $N$ target observations $\mcD^t=\{(\bfx_i^t,\bfy_i^t)\}_{i=1}^{N}$. For a given permutation $\pi\in\Pi_N$, the metric evaluates the model using a teacher-forced autoregressive factorisation. The context available prior to the $k$-th prediction is
\begin{equation*}
    H^{\mathrm{TF}}_{k,\pi}
    =
    \left(
    \mcD^{\mathrm{fixed}},
    (\bfx^t_{\pi(1)},\bfy^t_{\pi(1)}),
    \dots,
    (\bfx^t_{\pi(k-1)},\bfy^t_{\pi(k-1)})
    \right).
\end{equation*}
This induces the ordering-specific predictive density
\begin{equation*}
    p^{\mathrm{TF}}_{\theta,\pi}(\tilde{\bfy}_{1:N})
    =
    \prod_{k=1}^{N}
    p_{\theta}\!\left(
        \tilde{\bfy}_{\pi(k)}
        \mid
        \bfx^t_{\pi(k)}, H^{\mathrm{TF}}_{k,\pi}
    \right).
\end{equation*}
Monte Carlo samples $\tilde{\bfy}_{\pi(k)}$ drawn from the model are used to evaluate the log-density of the current predictive factor, but are not added to the context. Subsequent predictions condition on the corresponding observed target values in $H^{\mathrm{TF}}_{k,\pi}$.

The exchangeabilified predictive distribution is approximated by a finite mixture over $G$ random orderings $\{\rho_1,\dots,\rho_G\}$:
\begin{equation*}
    \hat{p}^{\mathrm{TF}}_{\theta,G}(\tilde{\bfy}_{1:N})
    =
    \frac{1}{G}
    \sum_{g=1}^{G}
    p^{\mathrm{TF}}_{\theta,\rho_g}(\tilde{\bfy}_{1:N}).
\end{equation*}
In the implementation, each mixture component is represented by the predictive means and standard deviations obtained along its teacher-forced stream. These component parameters are then mapped back to the canonical target order before evaluating the mixture density.

The KL gap is estimated by Monte Carlo averaging over $J$ evaluation orderings $\pi_j$ and $S_{\mathrm{MC}}$ samples:
\begin{equation*}
    \widehat{g}_{\mathrm{TF}}
    =
    \frac{1}{J S_{\mathrm{MC}}}
    \sum_{j=1}^{J}
    \sum_{s=1}^{S_{\mathrm{MC}}}
    \left[
        \log p^{\mathrm{TF}}_{\theta,\pi_j}(\tilde{\bfy}^{(s,j)})
        -
        \log \hat{p}^{\mathrm{TF}}_{\theta,G}(\tilde{\bfy}^{(s,j)})
    \right],
    \qquad
    \tilde{\bfy}^{(s,j)}
    \sim
    p^{\mathrm{TF}}_{\theta,\pi_j}.
\end{equation*}
The final metric is obtained by averaging $\widehat{g}_{\mathrm{TF}}$ across evaluation tasks.

\subsubsection{1D GP Regression}
\label[appendix]{app:1d_gp_kl_gap_additional_result}

In the example provided in \cref{subsec:implicit_bayesianness}, we condition on $20$ initial fixed context points, and predict over a set of $40$ targets. The number of initial context points is chosen to ensure stable pre-training of \texttt{WISKI} so that it indeed represents a fair baseline. The factorisation we are using is:

\begin{equation*}
    p_{\text{NP}}(\bfy^t_{1:40}|\bfx^t_{1:40},\mcD^{\text{fixed}}) = \Pi_{i=1}^{40} p_{\text{NP}}(\bfy^t_i|\bfy^t_{1:i-1}, \bfx^t_{1:i},\mcD^{\text{fixed}}).
\end{equation*}

To compute the metric, we evaluate over $2000$ such datasets. We use $256$ different permutations to compute the permutation-invariant prediction map $\hat{p}_{\text{NP}}$. To compute the expectation of the KL gap, we average over $128$ different permutations. Finally, we use $32$ Monte Carlo samples to approximate the KL divergence between $p_{\text{NP}}$ (Gaussian) and $\hat{p}_{\text{NP}}$ (mixture of Gaussians). Because \texttt{WISKI} is more expensive to run than the TNP-based variants, we compute the metric on $128$ datasets, using $64$ permutations for constructing $\hat{p}_{\text{NP}}$, and $32$ permutations for computing the expectation of the KL gap. Finally, we use $20$ Monte Carlo samples for computing the KL divergence.

To verify the robustness of our findings regarding implicit Bayesianness, we replicate the analysis in a sparser initial context setting: conditioning on an initial fixed context of $3$ points and evaluating over $100$ target points. We omit \texttt{WISKI} from this comparison, as the method proved numerically unstable given such a small fixed context size. The results, presented in \cref{fig:additional_implicit_bayesiannes}, corroborate our earlier conclusions: \texttt{incTNP-Seq} exhibits a KL gap only marginally higher than the baseline \texttt{TNP-D}, while achieving superior NLL. Similarly, although \texttt{incTNP} shows a slightly larger KL gap and lower predictive performance, it remains highly comparable to the non-incremental baseline.

\begin{figure}[htpb]
    \centering
    \includegraphics[width=0.5\linewidth]{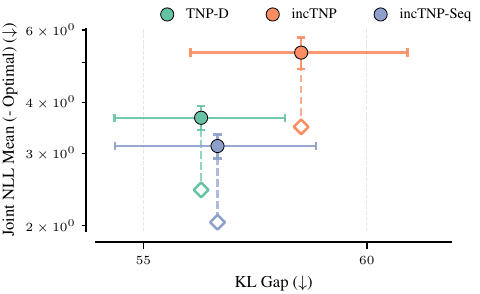}
    \caption{Performance (Joint NLL gap relative to optimal) versus implicit Bayesianness (KL Gap). Similarly to \cref{fig:implicit_bayesianness}, \texttt{incTNP-Seq} achieves a KL gap similar to the non-causal \texttt{TNP-D} and a better performance. In contrast, \texttt{incTNP}'s performance is poorer, and the KL gap is slightly larger, but within errors. Diamond markers ($\diamond$) denote the permutation-invariant versions of the models.}
    \label{fig:additional_implicit_bayesiannes}
\end{figure}

\subsection{Streaming Setting}
\label[appendix]{app:streaming_results}

\subsubsection{Streaming Tabular Data}
\label[appendix]{app:streaming_tabular_data}

\begin{figure}[htb]
    \centering
    \includegraphics[width=0.8\linewidth]{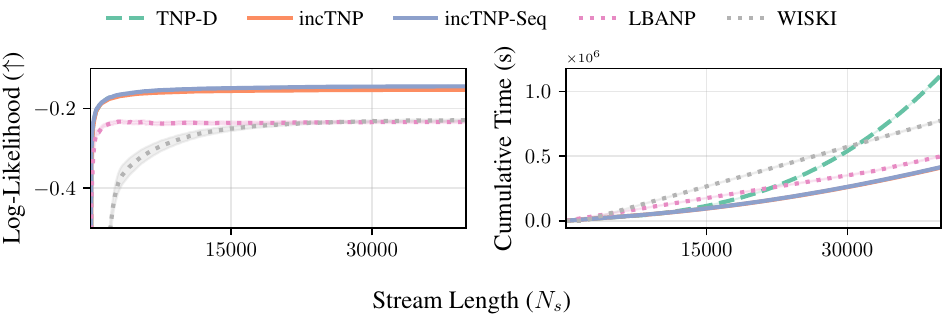}
    \caption{Test log-likelihood ($\uparrow$) on the Tabular (Synthetic) dataset including baselines (\texttt{LBANP} and \texttt{WISKI}). (Left) The TNP variants show significantly better performance than the baselines. (Right) Although the TNP-based models show better performance, the TNP-D is more costly than the baselines at large streaming lengths. In contrast, the \texttt{incTNP} variants have both the best performance and lowest computational cost.}
    \label{fig:streaming_synthetic_with_baselines}
\end{figure}

In the main part of the paper, we only compare TNP family members since the goal of the investigation is to expose any trade-offs introduced by our causal masking mechanism. However, a natural question to ask is how well our models perform in comparison to other baselines that practitioners might consider. We include \texttt{WISKI} as a baseline since it is a well-established stochastic process model suitable for streaming-data settings, and also the \texttt{LBANP} to represent the state-of-the-art of sub-quadratic and general-purpose TNP-based models. These additional results are shown in \Cref{fig:streaming_synthetic_with_baselines,fig:streaming_real_life_with_baselines} for synthetic and real-world scenarios, respectively. Although we also included a \texttt{CNP} as a baseline, its performance was so poor that it would not even show up on the same scale in our figures.

\begin{figure}[htb]
    \centering
    \includegraphics[width=\linewidth]{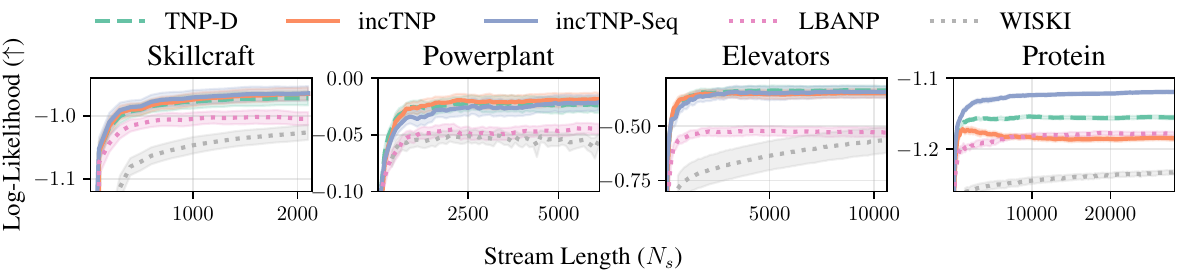}
    \caption{Test log-likelihood ($\uparrow$) on Tabular (real-world datasets). The TNP-based variants show better performance on the real-world datasets too.}
    \label{fig:streaming_real_life_with_baselines}
\end{figure}

In both synthetic and real-world environments, the three TNP-type models considered in the main portion of the paper significantly outperform the extra baselines \texttt{LBANP} and \texttt{WISKI}. So, not only are the three TNP variants simply more performant in the straightforward synthetic setting (\Cref{fig:streaming_synthetic_with_baselines}), but there are also more efficient at overcoming the difficult sim-to-real transfer than the additional baselines are (\Cref{fig:streaming_real_life_with_baselines}). Also note that, for all baselines, the predictive performance improves as the number of context points is increased, as expected.

\subsubsection{Streaming Temperature AR Prediction}
\label[appendix]{app:streaming_temperature}

We evaluate the streamed performance of AR predictions for both spatiotemporal interpolation and forecasting on the temperature prediction task in \cref{fig:streaming_hadisd_tnpa}, including \texttt{TNP-A} as an additional baseline. In the interpolation setting (left), all TNP and incremental TNP variants improve their predictive performance as context arrives, consistently outperforming the \texttt{CNP} and \texttt{LBANP} baselines. Notably, \texttt{incTNP} closely matches \texttt{TNP-D}'s predictive capabilities across all context sizes, supporting the notion that causal masking of context points incurs minimal detriment to model performance. \texttt{incTNP-Seq} provides the strongest performance throughout the majority of the stream, ultimately converging to a similar log-likelihood as \texttt{TNP-A} at $N_s=2000$. While \texttt{TNP-A} performs poorly when context is limited, it adapts effectively as the context set grows.

\begin{figure}[htb]
    \centering
    \includegraphics[width=0.8\linewidth]{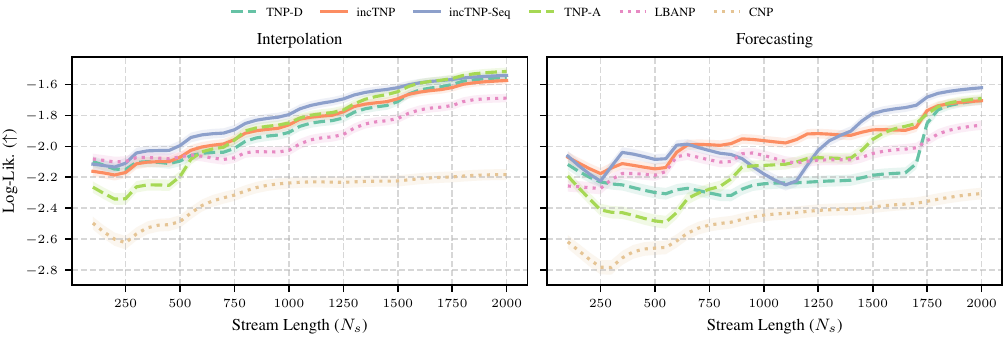}
    \caption{Test log-likelihood ($\uparrow$) on the temperature prediction task using AR predictions, reporting streamed performance for both spatiotemporal interpolation (left) and forecasting (right). Results are averaged over 2,000 samples, with the shaded areas representing the standard error of the mean (SEM).}
    \label{fig:streaming_hadisd_tnpa}
\end{figure}

\cref{fig:streaming_hadisd_tnpa} (right) showcases streamed AR performance for temperature forecasting. In this setup, observations are streamed from $H=7$ time windows (spaced in steps of $\delta=6$ hours), and performance is evaluated on the final window (48 hours from the first window). Consequently, the model must initially forecast 48 hours into the future, a task that becomes a 6-hour forecast by the end of the stream. Given the difficulty of this evaluation, performance improves more gradually, with significant boosts occurring as observations from the final time window arrive. Both \texttt{incTNP} and \texttt{incTNP-Seq} outperform \texttt{TNP-D} throughout the stream, with \texttt{incTNP-Seq} achieving superior final performance to all baselines. This highlights the effectiveness of the dense training strategy employed by \texttt{incTNP-Seq}, and the strong generalisation capabailites of incremental TNPs more broadly. Whilst \texttt{incTNP-Seq} experiences a transient performance drop for $N_s \in [600, 1100]$, we attribute this to the inherent difficulty of forecasting at target locations that are multiple time windows into the future; similar drops are observed for \texttt{TNP-D} and \texttt{TNP-A}. Crucially, all models recover as they observe the final context window, with \texttt{incTNP-Seq} exhibiting the strongest final generalisation.

\subsection{AR Target Ordering Sensitivity}
AR prediction induces a sensitivity to target ordering, and we impose a random target ordering across AR experiments unless otherwise stated following \citet{bruinsma2023autoregressive}. We measure the standard deviation in log-likelihood per datapoint across random target orderings in \cref{fig:ar_targ_ordering}, following Figure 9 in \citet{bruinsma2023autoregressive}. Our results show that \texttt{incTNP-Seq} is about as sensitive to target ordering as \texttt{TNP-D}, whilst \texttt{incTNP} is slightly more sensitive. For all models, a larger initial context set and larger target sets reduce the standard deviation in log-likelihood per datapoint, aligning with the findings of \citet{bruinsma2023autoregressive}. Future work may wish to examine AR target ordering sensitivity further.

\begin{figure}[htb]
    \centering
    \includegraphics[width=\linewidth]{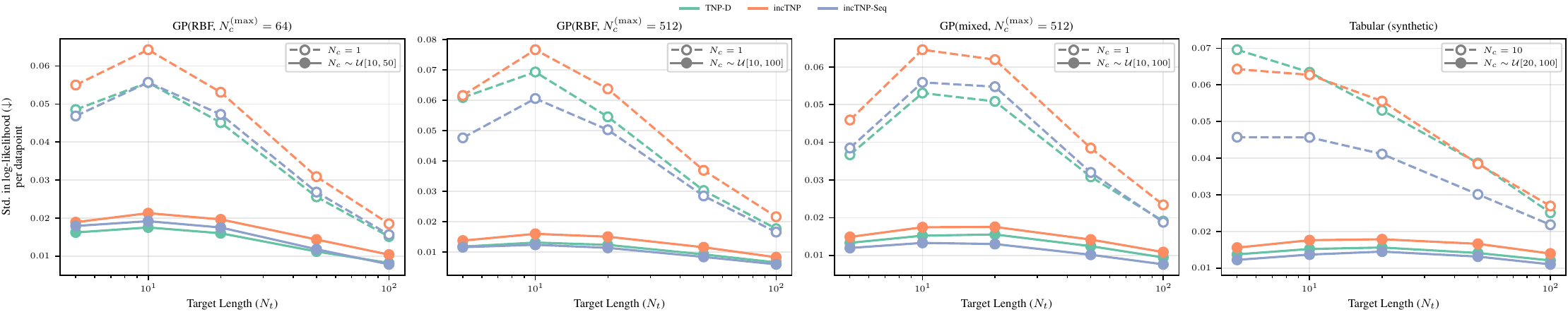}
    \caption{Standard deviation of the log-likelihood over different target orderings (following Figure 9 in \citet{bruinsma2023autoregressive}). Empty markers indicate small, fixed-size contexts (1 or 10 tabular datapoints), while filled markers represent larger, variable contexts sampled uniformly per the legend. We observe similar behaviour between \texttt{TNP-D} and \texttt{incTNP-Seq}, while \texttt{incTNP} tends to show slightly higher sensitivity to target ordering.}
    \label{fig:ar_targ_ordering}
\end{figure}

\section{Computational Complexity Analysis}
\label[appendix]{app_subsec:temp_streaming_ar}
In this section, we discuss the computational cost of \texttt{incTNP}, \texttt{incTNP-Seq} and \texttt{TNP-D} in the streaming setting. Within this environment, we evaluate on $N_t$ target points and a streamed context set $\mcD^s$ of size $N_s$. For AR mode predictions, we aggregate over $S$ separate sample unrolls. For simplicity---and to align with typical streaming setups---, we consider the case of $\mathcal{O}(1)$ new contextual observations for each streamed update step. Our analysis can easily be extended to deal with alternative context update arrangements.

\paragraph{MHSA and MHCA cost} Our cost calculations derive from the fact that self-attention on a sequence of length $A$ costs $\mathcal{O}(A^2)$ and that cross-attention between two sequences of lengths $A$ and $B$ costs $\mathcal{O}(AB)$. If causally masked self-attention has already been performed on a sequence of length $A$ and an additional point arrives at the end of that sequence, the cost of self-attention for that point falls to $\mathcal{O}(A)$ through KV caching. However, if self-attention is bidirectional, then the cost of self-attention remains $\mathcal{O}(A^2)$. We apply these facts to our NP models.

\paragraph{Conditioning cost} Early-stage NP variants such as the \texttt{CNP} support incrementally conditioning on contextual information as it arrives in $\mathcal{O}(N_s)$, allowing for a separation between conditional updating and target querying. As \texttt{TNP-D} models use bidirectional self-attention to encode the context set, they fail to support efficient incremental updating; each new context point requires the entire context representation to be recomputed in $\mathcal{O}(N_s^2)$. However, \texttt{incTNP} models use a causal masking within the context set. This avoids the need to recompute the contextual representations of previous context points, supporting efficient conditioning in $\mathcal{O}(N_s)$ through KV caching. This speedup ultimately makes \texttt{incTNP} models strong choices for streaming tasks and renders \texttt{TNP-D} prohibitively expensive.

\paragraph{Querying cost (factorised)} Having conditioned on the context set, both \texttt{TNP-D} and \texttt{incTNP} class models employ cross attention between embedded targets and context points, incurring an $\mathcal{O}(N_sN_t)$ cost.

\paragraph{Querying cost (AR)} Having conditioned on the context set, AR mode deployment involves iteratively predicting one target point at a time and concatenating a sample from that distribution to the context set:

\begin{equation} 
p(\bfY^t| \bfX^t; \mcD^s) = \prod_{n=1}^{N_t} p_{\theta}(\bfy^t_{n} | \bfx^t_{n}, \mcD^s \cup {(\bfx^t_{j}, \bfy^t_{j})}_{j=1}^{n-1}).
\end{equation} 

This process is repeated over $S$ unrolls. At each AR update step, \texttt{TNP-D} has to recompute its entire contextual representation. Thus each step $i$ requires computing self-attention for a context set of size $N_s+i-1$ (costing $\mathcal{O}((N_s+i-1)^2)$), as well as computing cross attention between the single target point and this expanded context set  (costing $\mathcal{O}(N_s+i-1)$). Thus the total cost for AR mode prediction of \texttt{TNP-D} is:

\begin{align*}
    &S\sum_{i=1}^{N_t} \left[\underbrace{\mathcal{O}((N_s+i-1)^2)}_{\text{MHSA}} + \underbrace{\mathcal{O}(N_s+i-1)}_{\text{MHCA}} \right] \\
    &= \mathcal{O}(S N_t(N_s + N_t )^2).
\end{align*}

Both \texttt{incTNP} and \texttt{incTNP-Seq} dramatically speed up AR mode deployment as they allow for efficient incremental conditioning. Thus, the cost of updating the contextual representation at each step $i$ using the causally masked self-attention layer drops from a quadratic cost to a linear one: $\mathcal{O}((N_s+i-1))$. Thus the total AR cost reduces to:

\begin{align*}
    &S\sum_{i=1}^{N_t} \left[\underbrace{\mathcal{O}((N_s+i-1))}_{\text{M-MHSA}} + \underbrace{\mathcal{O}(N_s+i-1)}_{\text{MHCA}} \right] \\
    &= \mathcal{O}(S N_t(N_s + N_t )).
\end{align*}

\subsection{Per-Update Cost} 
\autoref{tab_app:per_step_big_o_summary} summarises the per step cost of incremental TNP models relative to TNP. Within the streaming setting, causal masking enables computationally viable incremental context updates, enabling models to handle long streamed sequences. 

\begin{table}[hbt]
    \centering
    \renewcommand{\arraystretch}{1.25} 
    \small 
    \caption{Summary of per-step streamed conditioning and querying cost for NP models, using both factorised and AR predictions. The training step cost for $N_c$ context points and $N_t$ targets is also included.}
    \label{tab_app:per_step_big_o_summary}
    \begin{tabular}{l c cc c}
        \toprule
         & & \multicolumn{2}{c}{\textbf{Querying Cost (Step)}} & \\
        \cmidrule(lr){3-4} 
        \textbf{Model} & \textbf{Update Cost (Step)} & \textbf{Factorised} & \textbf{Autoregressive} & \textbf{Training Cost} \\
        \midrule
        \texttt{incTNP} & $\mathcal{O}(N_s)$ & $\mathcal{O}(N_s N_t)$ & $\mathcal{O}(S N_t(N_s + N_t))$ & $\mathcal{O}(N_c^2 + N_c N_t)$ \\
        \texttt{incTNP-Seq} & $\mathcal{O}(N_s)$ & $\mathcal{O}(N_s N_t)$ & $\mathcal{O}(S N_t(N_s + N_t))$ & $\mathcal{O}((N_c + N_t)^2)$ \\
        \texttt{TNP-D} & $\mathcal{O}(N_s^2)$ & $\mathcal{O}(N_s N_t)$ & $\mathcal{O}(S N_t(N_s + N_t)^2)$ & $\mathcal{O}(N_c^2 + N_c N_t)$ \\
        \bottomrule
    \end{tabular}
\end{table}

 Consider a streaming update step requiring NP models to have conditioned on all $N_s$ streamed points and compute the target distribution for $N_t$ target points. Predicting a factorised distribution with \texttt{TNP-D} costs $\mathcal{O}(N_s^2+N_sN_t)$ per streaming step, increasing to $\mathcal{O}(S N_t(N_s + N_t)^2 )$ for AR predictions. Our incremental TNP models scale much more attractively, performing in $\mathcal{O}(N_sN_t)$ and $\mathcal{O}(S N_t(N_s + N_t))$ for factorised and AR mode respectively.

\subsection{Cumulative Cost}
Next, we explore the \emph{cumulative} cost across $N_s$ streaming steps. \texttt{TNP-D} suffers a quadratic conditioning cost at each step resulting in a cubic cumulative complexity:

\begin{align*}
    &\sum_{i=1}^{N_s} \mathcal{O}(i^2) = \mathcal{O}(N_s^3).
\end{align*}

\texttt{incTNP} and \texttt{incTNP-Seq} only suffer a linear conditioning cost per step, leading to an overall quadratic complexity:

\begin{align*}
    &\sum_{i=1}^{N_s} \mathcal{O}(i) = \mathcal{O}(N_s^2).
\end{align*}

For our HadISD weather forecasting task, the streaming setup is such that the target set is fixed, but the predictions are updated as new context observations arrive in sequence. In particular, the observations arrive in a temporal ordering, allowing us to update our temperature predictions for a fixed set of $N_t=250$ target stations. In such cases, the cumulative querying cost of factorised predictions for all models becomes:

\begin{align*}
    &\sum_{i=1}^{N_s} \mathcal{O}(iN_t) = \mathcal{O}(N_s^2N_t).
\end{align*}

In the case of AR mode deployment, \texttt{TNP-D} suffers from a cumulative querying cost of:

\begin{align*}
    &\sum_{i=1}^{N_s} \mathcal{O}(SN_t(i+N_t)^2) = \mathcal{O}(SN_tN_s(N_s+N_t)^2).
\end{align*}

\texttt{incTNP} and \texttt{incTNP-Seq} offer a much more scalable cumulative querying cost of:

\begin{align*}
    &\sum_{i=1}^{N_s} \mathcal{O}(SN_t(i+N_t)) = \mathcal{O}(SN_tN_s(N_s+N_t)).
\end{align*}

Thus for cases where $N_t << N_s$ and $N_t \in \mathcal{O}(1)$, the cumulative cost for \texttt{incTNP} models is quadratic, whilst it is cubic for \texttt{TNP-D}. An example of this cumulative scaling on the HadISD task is shown in \autoref{fig:ar_hadisd_streaming_cost}, where \texttt{TNP-D} is 3.5 times more computationally expensive after $2000$ context points, with the gap ever expanding. For AR streaming tasks where the number of target points is large, this gap becomes significantly larger still. In short, imposing causal masking across the context set allows for transformer neural processes to be used in streaming tasks for both factorised and AR predictions, something which is simply not computationally viable for \texttt{TNP-D}.


\begin{figure}[H]
    \centering
    \includegraphics[width=0.6\linewidth]{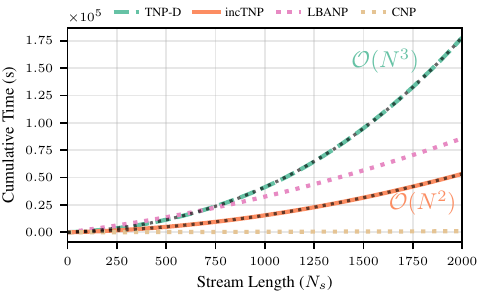}
    \caption{Cumulative runtime cost (measured in seconds) on the HadISD dataset of deploying in AR mode as the context set grows ($N_s$). Runtime measured as station observations arrive in a streamed fashion, resulting in updated predictions at $N_t=250$ stations using $S=50$ sample unrolls across 32 batches. We fit quadratic and cubic cumulative cost functions to \texttt{incTNP} and \texttt{TNP-D} (plotted with dotted black lines), to highlight the cumulative AR scaling cost of both approaches.}
    \label{fig:ar_hadisd_streaming_cost}
\end{figure}

Cumulative runtime performance is summarised in \autoref{tab_app:cumulative_big_o_summary}.

\begin{table}[hbt]
    \centering
    \renewcommand{\arraystretch}{1.25} 
    \small 
    \caption{Summary of cumulative streamed conditioning and querying cost for NP models, using both factorised and AR predictions. Both \texttt{incTNP} and \texttt{incTNP-Seq} share the same streamed cumulative scaling and are presented together.}
    \label{tab_app:cumulative_big_o_summary}
    \begin{tabular}{l c cc c}
        \toprule
         & & \multicolumn{2}{c}{\textbf{Querying Cost (Cumulative)}} & \\
        \cmidrule(lr){3-4} 
        \textbf{Model} & \textbf{Update Cost (Cumulative)} & \textbf{Factorised} & \textbf{Autoregressive} \\
        \midrule
        \texttt{incTNP}(\texttt{-Seq}) & $\mathcal{O}(N_s^2)$ & $\mathcal{O}(N_s^2N_t)$ & $\mathcal{O}(SN_tN_s(N_s+N_t))$\\
        \texttt{TNP-D} & $\mathcal{O}(N_s^3)$ & $\mathcal{O}(N_s^2N_t)$ & $\mathcal{O}(SN_tN_s(N_s+N_t)^2)$ \\
        \bottomrule
    \end{tabular}
\end{table}

\section{Memory Complexity Analysis}
\label[appendix]{app_subsec:memory_cost_analysis}

\texttt{incTNP} and \texttt{incTNP-Seq} achieve a substantial runtime speed-up over \texttt{TNP-D} at inference time through the use of KV caching. This section considers the memory complexity of both incremental approaches (which share an identical memory footprint at inference time). We assess the total memory cost, which can be decomposed into persistent and transient cost, for $N_t$ target points on a streamed context set of size $N_s$. We aggregate over $S$ sample unrolls for AR mode predictions, which we generalise to a batch size $B$ ($B=1$ for factorised streaming prediction and $B=S$ for AR prediction). As discussed in \cref{app:experiment-details}, all transformer-based NPs use $L$ layers, with $H$ heads and an embedding dimension $D_z$. As in \cref{app_subsec:temp_streaming_ar}, we consider the case of $N_s'\in\mathcal{O}(1)$ new contextual observations per streaming update, though our analysis can be easily extended to consider alternative context update patterns.

\subsection{Transient Memory Cost}
Within this subsection, we focus exclusively on the transient (i.e. non-persistent) memory cost of \texttt{incTNP} and \texttt{TNP-D}.

\paragraph{Embedder and Decoder} We make use of an embedder and decoder MLP for \texttt{incTNP} and \texttt{TNP-D}. Both models pay an $\mathcal{O}(BN_tD_z)$ cost to decode target predictions. Embedding cost scales linearly with the total number of new tokens $N_\text{new}=N_s'+N_t$ to be embedded, $\mathcal{O}(BN_\text{new}D_z)$. Thus the transient cost of periphery NP components is $M_\text{peri}\in \mathcal{O}(BN_\text{new}D_z)$. In the case of \texttt{TNP-D}, $N_s'=N_s$ which results in a cost of $M_\text{peri}^\texttt{TNP-D}\in \mathcal{O}(B(N_s+N_t)D_z)$. We focus on the case when \texttt{incTNP} processes $N_s'\in \mathcal{O}(1)$ new tokens per step, costing $M_\text{peri}^\texttt{incTNP}\in \mathcal{O}(BN_tD_z)$. In both cases, these terms are of the same order as, or dominated by, the transient cost of the transformer component.


\paragraph{Transformer} At inference time, computing multi-head attention on $N_q$ query tokens and $N_{kv}$ key/value tokens induces a transient memory cost $M_\text{transformer}(N_q,N_{kv})$. This cost arises from the dynamic allocation of intermediate tensors across three distinct stages of the transformer forward pass:

\begin{enumerate}
    \item \textbf{Pre-Norm and Projections:} The inputs must be normalised and linearly projected into query, key, and value representations. Allocating these intermediate tensors scales linearly with the number of tokens being projected, costing $\mathcal{O}(B(N_q + N_{kv})D_z)$.
    \item \textbf{Attention Kernel:} The memory required to compute attention weights for scaled-dot product attention (SDPA) $M_\text{attn}(N_q, N_{kv})$ varies based on the attention kernel used. Naive SDPA attention kernels materialise the full $N_q \times N_{kv}$ attention matrix across $H$ heads, resulting in a quadratic transient spike $M_\text{attn}^\text{naive}(N_q, N_{kv}) \in \mathcal{O}(BHN_qN_{kv})$. Conversely, memory-efficient attention kernels, such as FlashAttention \citep{dao2023flashattention2fasterattentionbetter}, do not materialise the full similarity matrix, reducing the kernel's transient footprint to a linear one: $M_\text{attn}^\text{flash} \in \mathcal{O}(B(N_q+N_{kv})D_z)$. The availability of memory-efficient kernels tends to depend on the hardware used, with FlashAttention supported on high-end GPUs.
    \item \textbf{MLP: } The output of the attention mechanism is passed through the transformer's shallow MLP, costing $\mathcal{O}(BN_qD_z)$.
\end{enumerate}

At inference, we analyse peak transient memory during a forward pass, not cumulative memory allocated across all layers. Since transformer layers are executed sequentially and activations are not retained for backpropagation, this peak transient cost does not scale with the number of layers $L$. Thus, the transient memory cost with naive attention is
\begin{equation*}
    M_\text{transformer}^\text{naive}\left(N_q,N_{kv}\right)\in \mathcal{O}\left(B\left(D_z\left(N_q+N_{kv}\right)+HN_qN_{kv}\right)\right).
\end{equation*}

Memory-efficient attention comes at a reduced transient cost of

\begin{equation*}
    M_\text{transformer}^\text{flash}\left(N_q,N_{kv}\right)\in \mathcal{O}\left(BD_z\left(N_q + N_{kv}\right)\right).
\end{equation*}

We use $M_\text{transformer}(N_q,N_{kv})=M_\text{transformer}^\text{flash}(N_q,N_{kv})$ unless otherwise specified within our cost-analysis.

Both \texttt{TNP-D} and \texttt{incTNP} make a MHCA call during querying, using $N_q=N_t$ queries and $N_{kv}=N_s$ key/value tokens at a cost of $M_\text{transformer}(N_t,N_s)$. Both models also make a MHSA call using $N_{kv}=N_s$ key/value tokens. Whilst \texttt{TNP-D} uses $N_q=N_s$ queries for each MHSA call, KV caching allows \texttt{incTNP} to only query new contextual observations $N_s' \in \mathcal{O}(1)$ such that $N_q=N_s'$. This results in a MHSA cost of $M_\text{transformer}(N_s,N_s)$ for \texttt{TNP-D} and a cost of $M_\text{transformer}(N_s',N_s)$ for \texttt{incTNP}.


\paragraph{Factorised Predictions} When making factorised predictions within the streaming setting, we set the batch size $B=1$ and consider the case $N_s' \in \mathcal{O}(1)$. 

For \texttt{incTNP}, updating the contextual representation has a transient memory footprint of:

\begin{align*}
    & \mathcal{O}(M_\text{transformer}(N_s',N_s))= \mathcal{O}(D_zN_s).
\end{align*}

\texttt{incTNP} performs cross-attention between the targets and context points at query time at a cost of:

\begin{align*}
    & \mathcal{O}(M_\text{transformer}(N_t,N_s))= \mathcal{O}(D_z(N_s+N_t)).
\end{align*}

Our \texttt{TNP-D} implementation does not support conditioning and thus performs self attention and cross attention sequentially when predicting at targets:

\begin{align*}
    & \mathcal{O}(M_\text{transformer}(N_s,N_s)+M_\text{transformer}(N_t,N_s))= \mathcal{O}(D_z(N_s+N_t)).
\end{align*}

\paragraph{Autoregressive Predictions}
When making autoregressive predictions, transient memory usage peaks when appending the very last context point to the initial context set. For AR mode, we set $B=S$.

\texttt{TNP-D} recomputes self-attention over all previous points whilst also performing cross-attention to obtain predictions over the last target:

\begin{align*}
    &\mathcal{O}\left(\underbrace{M_\text{transformer}\left(N_t+N_s-1,N_t+N_s-1\right)}_\text{MHSA}+ \underbrace{M_\text{transformer}\left(1, N_t+N_s-1\right)}_\text{MHCA}\right) \\
    &= \mathcal{O}(SD_z(N_t+N_s)).
\end{align*}

Whilst \texttt{incTNP} typically has a lower transient AR memory cost, it shares the same asymptotic behaviour as \texttt{TNP-D}:

\begin{align*}
    &\mathcal{O}\left(\underbrace{M_\text{transformer}\left(1,N_t+N_s-1\right)}_\text{M-MHSA}+ \underbrace{M_\text{transformer}\left(1, N_t+N_s-1\right)}_\text{MHCA}\right) \\
    &= \mathcal{O}(SD_z(N_t+N_s)).
\end{align*}

AR querying cost dominates the contextual updating costs, resulting in a total AR cost of $\mathcal{O}(SD_z(N_t+N_s))$.

\paragraph{Transient Cost Summary}
\autoref{tab_app:transient_cost_summary} summarises the transient memory costs of both incremental non-incremental transformer-based approaches. 

\begin{table}[hbt]
    \centering
    \renewcommand{\arraystretch}{1.25} 
    \small 
    \caption{Summary of per-step streamed conditioning and querying transient memory costs for NP models, using both factorised and AR predictions. Cost is shown for the memory-efficient kernel $M_\text{transformer}^\text{flash}$(\textit{Mem-Eff}) and standard attention kernel $M_\text{transformer}^\text{naive}$(\textit{Standard}).}
    \label{tab_app:transient_cost_summary}
    \begin{tabular}{l l ll c}
        \toprule
         & & \multicolumn{2}{c}{\textbf{Querying Cost}} &\\
        \cmidrule(lr){3-4} 
        \textbf{Model} & \textbf{Update Cost} & \textbf{Factorised} & \textbf{Autoregressive} & \textbf{Attention}\\
        \midrule
        \texttt{incTNP} & $\mathcal{O}(D_zN_s)$ & $\mathcal{O}(D_z(N_t+N_s))$ & $\mathcal{O}(SD_z(N_t+N_s))$ & Mem-Eff\\
        \texttt{TNP-D} & --- & $\mathcal{O}(D_z(N_t+N_s))$ & $\mathcal{O}(SD_z(N_t+N_s))$ & Mem-Eff\\
        \midrule 
        \texttt{incTNP} & $\mathcal{O}(N_s(D_z+H))$ & $\mathcal{O}(D_z(N_t+N_s)+HN_sN_t)$ & $\mathcal{O}(S(N_t+N_s)(D_z+H))$ & Standard\\
        \texttt{TNP-D} & --- & $\mathcal{O}(D_z(N_t+N_s)+H(N_s^2+N_sN_t))$ & $\mathcal{O}(S(N_t+N_s)(D_z+H(N_t+N_s)))$ & Standard\\
        \bottomrule
    \end{tabular}
\end{table}

\subsection{Persistent Memory Cost}

\paragraph{Weights} Both incremental and standard TNPs share an identical parameter count, incurring a static memory footprint of $W \in \mathcal{O}(LD_z^2)$. Crucially, $W$ is independent of $N_s$ and $N_t$, acting simply as an $\mathcal{O}(1)$ constant.

\paragraph{KV Cache} \texttt{incTNP} supports incremental conditioning by storing all projected key and value representations from \texttt{incTNP}'s M-MHSA layers. This enables a substantial runtime speedup but comes at a persistent memory cost of $\mathcal{O}(BLD_zN_s)$. For factorised predictions this results in a footprint of $\mathcal{O}(LD_zN_s)$, whilst AR prediction has a peak cost of $\mathcal{O}(SLD_z(N_s+N_t))$.

\subsection{Total Cost} The peak live memory is given by the persistent memory plus the maximum transient memory incurred across the conditioning and querying stages. \autoref{tab_app:total_cost_summary} outlines the overall memory scaling of our approach against \texttt{TNP-D} when using memory-efficient attention kernels and standard quadratic memory implementations. In the former case, both systems scale linearly with respect to the number of streamed context points $N_s$. \texttt{incTNP} also scales linearly with respect to the number of transformer layers $L$ as a result of its KV cache, which has limited impact on asymptotic scaling as we treat $L\in \mathcal{O}(1)$. When using quadratic memory attention kernels, \texttt{incTNP} still scales linearly with respect to the number of streamed context points $N_s$ whilst \texttt{TNP-D} scales quadratically. Consequently, \texttt{incTNP} scales as favourably or more favourably than \texttt{TNP-D} in terms of asymptotic memory cost within the streaming setting.

\begin{table}[hbt]
    \centering
    \renewcommand{\arraystretch}{1.25} 
    \small 
    \caption{Summary of per-step streamed maximum memory costs for NP models, using both factorised and AR predictions. Cost is shown for the memory-efficient kernels $M_\text{transformer}^\text{flash}$(\textit{Mem-Eff}) and standard attention kernels $M_\text{transformer}^\text{naive}$(\textit{Standard}).}
    \label{tab_app:total_cost_summary}
    \begin{tabular}{l ll c}
        \toprule
         & \multicolumn{2}{c}{\textbf{Cost}} &\\
        \cmidrule(lr){2-3} 
        \textbf{Model} & \textbf{Factorised} & \textbf{Autoregressive} & \textbf{Attention}\\
        \midrule
        \texttt{incTNP} &  $\mathcal{O}(D_z(N_t+LN_s)+W)$ & $\mathcal{O}(SD_zL(N_t+N_s)+W)$ & Mem-Eff\\
        \texttt{TNP-D} &  $\mathcal{O}(D_z(N_t+N_s)+W)$ & $\mathcal{O}(SD_z(N_t+N_s)+W)$ & Mem-Eff\\
        \midrule 
        \texttt{incTNP} & $\mathcal{O}(D_z(N_t+LN_s)+H(N_sN_t)+W)$ & $\mathcal{O}(S(N_t+N_s)(LD_z+H)+W)$ & Standard\\
        \texttt{TNP-D} & $\mathcal{O}(D_z(N_t+N_s)+H(N_s^2+N_sN_t)+W)$ & $\mathcal{O}(S(N_t+N_s)(D_z+H(N_t+N_s))+W)$ & Standard\\
        \bottomrule
    \end{tabular}
\end{table}

\subsection{Measured Memory Costs}

\autoref{fig_app:memory_scaling_factorised} plots the measured peak memory of \texttt{incTNP} and \texttt{TNP-D} for factorised predictions on the temperature prediction task. When using FlashAttention in \autoref{fig_app:mem_flash_factorised}, memory requirements scale linearly for both models. \texttt{incTNP}'s KV cache contributes most to peak memory use, whilst \texttt{TNP-D}'s memory cost stems almost entirely from transient allocation. Whilst both approaches scale linearly with respect to the number of streamed points $N_s$, \texttt{incTNP} consistently uses more memory. This may present an issue for large scale models, however for this work we encountered no constraints even when going to one million streamed points. Running \texttt{TNP-D} at this scale is prohibitively slow ($\sim650\times$ slower than \texttt{incTNP}), and thus the limited memory-runtime tradeoff within this scenario is fully justified. Future work may wish to consider cache eviction and summarisation strategies, as well as use of constant-memory attention blocks \citep{pmlr-v235-feng24i}.

\begin{figure}[ht]
     \centering
     \begin{subfigure}[b]{0.48\textwidth}
         \centering
         \includegraphics[width=\textwidth]{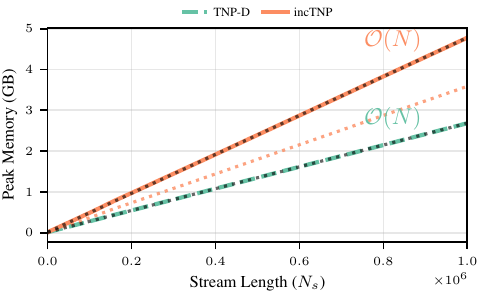}
         \caption{FlashAttention}
         \label{fig_app:mem_flash_factorised}
     \end{subfigure}
     \hfill
     \begin{subfigure}[b]{0.48\textwidth}
         \centering
         \includegraphics[width=\textwidth]{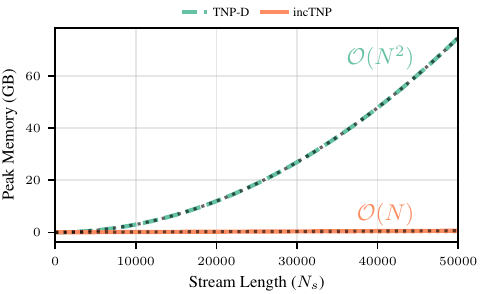}
         \caption{Standard Attention}
         \label{fig_app:mem_quad_factorised}
     \end{subfigure}
     \caption{Measured memory scaling of \texttt{TNP-D} and \texttt{incTNP} models on the HadISD temperature prediction task as the streamed context set $N_s$ grows for factorised predictions on $N_t=250$ locations. Peak memory use is reported when using FlashAttention (left - $M_\text{transformer}^\text{flash}$) and when using standard quadratic memory attention (right - $M_\text{transformer}^\text{naive}$). A dotted orange line is used to indicate the persistent memory costs of \texttt{incTNP} (i.e. the memory cost of the KV cache and model weights), which contribute to the peak memory. Memory measured on a single NVIDIA A100 80GB GPU, using 16 bit floats and PyTorch's \texttt{torch.nn.attention.SDPBackend} FlashAttention and Math backends. Dotted black lines are fit to peak memory plots to highlight $\mathcal{O}(N)$ linear or $\mathcal{O}(N^2)$ quadratic memory scaling.}
     \label{fig_app:memory_scaling_factorised}
\end{figure}

Using an attention kernel with a quadratic memory cost, as in \autoref{fig_app:mem_quad_factorised}, results in quadratic memory scaling for \texttt{TNP-D} as it recomputes the attention scores between all streamed tokens at each step. This quadratic scaling prevents \texttt{TNP-D} from streaming over $50,000$ tokens, whilst \texttt{incTNP} retains its linear complexity. For systems where efficient-memory attention kernels are not well supported, \texttt{incTNP} is clearly the superior choice in terms of memory and runtime cost. When linear-memory attention kernels are available, both approaches scale linearly with \texttt{incTNP}'s KV cache using more memory in practice.

\subsection{Additional Memory Considerations}

\paragraph{Raw Stream Storage} Beyond model-state memory, practical streaming implementations may also need to retain raw inputs required for future predictions. In our implementation, \texttt{TNP-D} recomputes predictions from the full raw context set and therefore requires persistent storage of all previously observed context pairs, incurring an additional storage cost of $\mathcal{O}(N_s(D_x+D_y)+N_tD_x)$ at inference time, where $D_x$ is feature dimensionality and $D_y$ is output dimensionality. By contrast, \texttt{incTNP} encodes past context into a KV cache and therefore does not require the raw context stream to be retained, beyond the current streamed update of size $N_s' \in \mathcal{O}(1)$ and the target inputs used for querying. This results in a raw-data storage cost of $\mathcal{O}(N_tD_x + N_s'(D_x+D_y)) = \mathcal{O}(N_tD_x+D_y)$.

\paragraph{Training Cost} Our memory cost analysis primarily considers the footprint of \texttt{incTNP} at inference time, as a result of its use of KV caching and incremental updates. KV caching is not used during training, and thus \texttt{incTNP} and \texttt{TNP-D} share the same training memory profile of $\mathcal{O}(BLD_z(N_c+N_t)+W)$ with FlashAttention or $\mathcal{O}(BL(D_z(N_c+N_t)+H(N_c^2+N_cN_t))+W)$ with standard attention. Whilst \texttt{incTNP-Seq} behaves identically to \texttt{incTNP} at test time, it incurs a greater training cost of $\mathcal{O}(BL(D_z(N_c+N_t)+H(N_c+N_t)^2)+W)$ under standard attention. This asymptotic cost reduces back to the cost of \texttt{incTNP} training, $\mathcal{O}(BLD_z(N_c+N_t)+W)$, when using FlashAttention. Consequently, under memory-efficient attention, the training memory requirements of \texttt{incTNP}, \texttt{incTNP-Seq}, and \texttt{TNP-D} are of the same asymptotic order.

\section{Sensitivity to Context Ordering}
\label[appendix]{app:sensitivity_to_context_ordering}
In order to achieve linear-complexity streaming updates, incremental TNPs sacrifice a core property of standard Neural Processes: permutation invariance with respect to the context set. In the main text, we devise a metric of implicit Bayesianness to measure the consistency of the implied prediction rule. We find that \texttt{incTNP} is as consistent as \texttt{TNP-D} for streaming inference under this metric, despite lacking context permutation invariance.

Whilst consistency is the more relevant measure for \emph{streaming} deployment, we analyse the degree of sensitivity to context permutation exhibited by \texttt{incTNP} and \texttt{incTNP-Seq} for \emph{static} context sets within this section.

\paragraph{Context permutation invariance} Formally, a model $p_\theta$ is context permutation invariant if and only if its predictive distribution is identical under all $\pi \in \Pi_{N_c}$ possible orderings of the context set $\mcD^c_\pi$:

\begin{equation*}
    \forall \pi \in \Pi_{N_c}, p_\theta(\bfY^t | \bfX^t, \mcD^c)=p_\theta(\bfY^t | \bfX^t, \mcD^c_\pi).
\end{equation*}

\subsection{Measure of Context Permutation Sensitivity}
\label[appendix]{app:ctx_perm_sensitivty}
To measure the sensitivity of model prediction to permutation of the context set, we use an empirical diagnostic $g_\text{ctx}$, which we term the \emph{context KL gap}.

For a specific target set and context ordering, we aim to measure the gap $g$ between the predictive distribution $q$ and the context permutation invariant construction $z$:

\begin{align*}
    q &:= p_\theta(\bfY^t | \bfX^t, \mcD^c), \\
    z &:= \frac{1}{N_c!} \sum_{\pi \in \Pi_{N_c}}p_\theta(\bfY^t | \bfX^t, \mcD^c_\pi), \\
    g &= D_\text{KL}(q \parallel z).
\end{align*}

In practice, we approximate $z$ using $K < N_c!$ random context permutations and perform a Monte Carlo approximation using $S_\text{MC}$ samples to estimate the KL divergence between $q$ and $z$. We estimate the context KL gap $g_\text{ctx}$ by aggregating gaps across $R_\text{draws}$ random draws of context and target sets of fixed lengths $N_c$ and $N_t$.

By the properties of the KL divergence, we know that $g_\text{ctx} \geq 0$, and context permutation invariant models attain zero gap. Larger values indicate greater sensitivity to context ordering. Since context permutation invariance can be achieved trivially, we situate the context KL gap alongside predictive accuracy (log-likelihood); \citet{mlodozeniec2024implicitly} opt for a similar approach in their analysis of implicit Bayesianness.

\begin{figure}[htbp]
    \centering
    \includegraphics[width=0.50\linewidth]{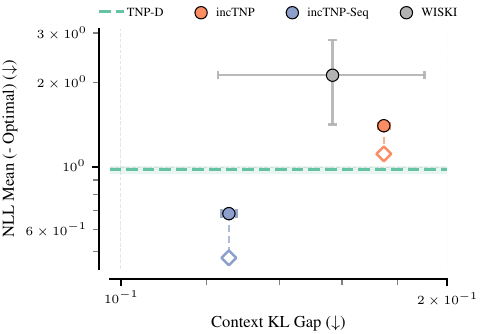} 
    \caption{Performance (NLL gap relative to optimal for factorised predictions) versus context permutation sensitivity (context KL gap) for $N_c=128$ context points and $N_t=128$ target points. Diamond markers ($\diamond$) denote context permutation invariant model constructions. \texttt{TNP-D} is context permutation invariant (up to numerical noise) and thus is represented as a dotted line. We plot the standard error of the mean (SEM) of all models. \texttt{WISKI} is pretrained on the first 64 context points, and updated in a streaming manner for the remaining 64 context points. All NP models are trained with $N_c^{(\text{max})}=512$.}
    \label{fig_app:sensitivit_ctx_high_ctx}
\end{figure}

We evaluate context sensitivity on 1D regression with the RBF kernel, considering both high and low context environments. We first consider context sensitivity for $N_c=128$ context points, benchmarking our models against \texttt{TNP-D} and \texttt{WISKI}. For all transformer-based NPs, we compute with $S_\text{MC}=256$ Monte Carlo samples and $K=256$ permutations per draw, averaging over $R_\text{draws}=4096$ data draws. We use $S_\text{MC}=20$, $K=20$ and $R_\text{draws}=512$ for \texttt{WISKI} due to its increased computational cost, pretraining on the first $64$ context points and streaming the remaining $64$ context observations. \autoref{fig_app:sensitivit_ctx_high_ctx} visualises our results, showcasing that incremental TNPs achieve comparable or favourable context sensitivity relative to \texttt{WISKI} whilst obtaining stronger modelling performance within this evaluation regime. \texttt{incTNP-Seq}'s dense autoregressive training strategy results in a lower sensitivity to context permutation than that of \texttt{incTNP} whilst also improving model performance within this test setting.

\autoref{fig_app:sensitivit_ctx_low_ctx} visualises context permutation sensitivity within a low context environment of $N_c=8$ context points and $N_t=10$ targets. We employ the same setup as the previous figure for measuring the context KL gap for transformer-based NPs, but use $S_\text{MC}=128$, $K=128$ and $R_\text{draws}=1024$ for \texttt{TNP-A} due to its computational cost. Once again, \texttt{incTNP-Seq} exhibits a reduced sensitivity to context permutation compared to \texttt{incTNP} whilst offering superior modelling. Unsurprisingly, all factorised transformer-based NP models are outperformed by the autoregressive \texttt{TNP-A} models. Like \texttt{TNP-D}, \texttt{TNP-A} uses bidirectional attention to encode the context set and is thus context permutation invariant. However, during inference \texttt{TNP-A} samples autoregressively from target predictions to produce a Monte Carlo approximation of its true predictive distribution. Both incremental TNPs have a smaller measured $g_\text{ctx}$ than \texttt{TNP-A} with 50 sample unrolls, but exceed that of \texttt{TNP-A} with 100 samples unrolls. This helps to situate the measured sensitivity of our incremental TNPs relative to the baseline noise inherent to AR-style joint estimation with finite samples.

\begin{figure}[hbtp]
    \centering
    \includegraphics[width=0.50\linewidth]{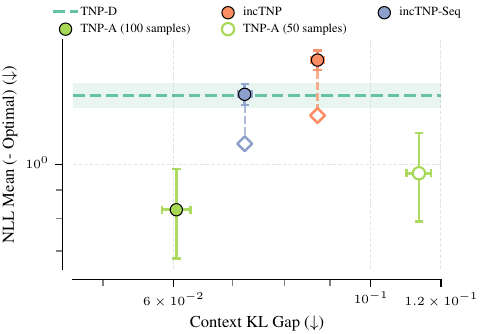} 
    \caption{Performance (NLL gap relative to optimal for factorised predictions) versus context permutation sensitivity (context KL gap) for $N_c=8$ context points and $N_t=10$ target points. Diamond markers ($\diamond$) denote context permutation invariant model constructions. \texttt{TNP-D} is context permutation invariant (up to numerical noise) and thus is represented as a dotted line. We plot the standard error of the mean (SEM) of all models. We plot the test time characteristics of \texttt{TNP-A} when unrolling over $S=50$ and $S=100$ autoregressive sample unrolls. All NP models are trained with $N_c^{(\text{max})}=64$.}
    \label{fig_app:sensitivit_ctx_low_ctx}
\end{figure}

\subsection{Test Set Performance with Context Permutation}
Having quantified the sensitivity to context permutation of incremental TNP models, we now consider their implicit performance distribution over random contextual permutation. To do this we generate a test set consisting of $4096$ task samples. We then randomly permute the ordering of context points for each sample, measuring the mean log-likelihood performance for each permuted test dataset to approximate the distribution of test set performance for each NP. This highlights both the mean performance and the expected variance across random context orderings.

We measure test set performance fluctuation across 1D regression (using the RBF kernel for $N_c^{(\text{max})}=64$ and $N_c^{(\text{max})}=512$) and synthetic tabular regression. We exclude HadISD temperature prediction from this analysis as models are trained with a temporal context ordering.

\subsubsection{1D GP Regression}
\autoref{fig_app:perms_1d_gp} showcases the performance distribution of incremental TNPs over random context permutations on 1D GP regression. Both \texttt{incTNP} and \texttt{incTNP-Seq} exhibit a relatively low average sensitivity to context ordering, as typified by their large central density, allowing them to be reliably deployed on randomly ordered datasets.

\begin{figure}[htbp]
     \centering
     \begin{subfigure}[b]{0.48\textwidth}
         \centering
         \includegraphics[width=\textwidth]{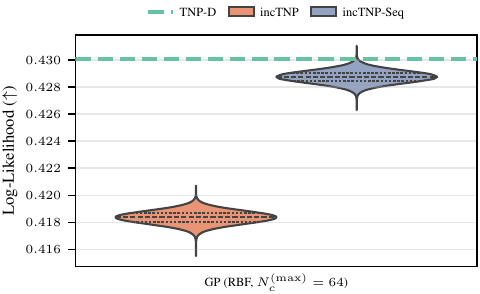}
         \caption{$N_c^\text{(max)}=64$}
         \label{fig_app:perms_1d_gp_64}
     \end{subfigure}
     \hfill
     \begin{subfigure}[b]{0.48\textwidth}
         \centering
         \includegraphics[width=\textwidth]{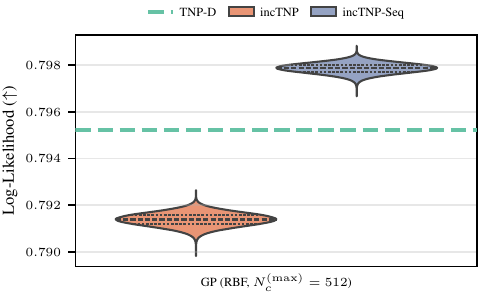}
         \caption{$N_c^\text{(max)}=512$}
         \label{fig_app:perms_1d_gp_512}
     \end{subfigure}
     \caption{Distribution of test set log-likelihoods across random context permutations for \texttt{TNP-D}, \texttt{incTNP} and \texttt{incTNP-Seq}. We consider the distribution for both $N_c^\text{(max)}=64$ (left) and $N_c^\text{(max)}=512$ (right) RBF GP regression tasks. We approximate over $1,000,000$ test set permutations per model (processing $\sim$$4$ billion samples per model) for both tasks. \texttt{TNP-D} is context permutation invariant and thus represented with a dotted line. For incremental NPs, we plot distribution density with a violin plot, using internal dashed lines to signify the distribution quartiles.}
     \label{fig_app:perms_1d_gp}
\end{figure}

\subsubsection{Synthetic Tabular Data}
\autoref{fig_app:perms_tabular} illustrates the distribution of average performance for synthetic tabular data. Across the tabular task, as well as both GP tasks (in \autoref{fig_app:perms_1d_gp}), \texttt{incTNP-Seq}'s performance is less sensitive to context ordering, whilst achieving significantly better modelling accuracy. This further underscores the strength of \texttt{incTNP-Seq}'s autoregressive training strategy, which consistently remains competitive with or outperforms \texttt{TNP-D} despite using a less expressive attention mechanism.  This shows that whilst causal masking violates context permutation invariance, average variation is limited in practice and justified by the improved performance it affords.

\begin{figure}[htbp]
    \centering
    \includegraphics[width=0.48\linewidth]{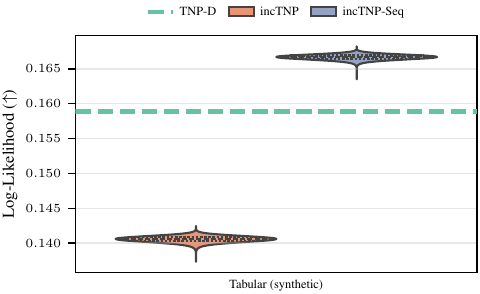}
    \caption{Distribution of test set log-likelihoods across random context permutations for \texttt{TNP-D}, \texttt{incTNP} and \texttt{incTNP-Seq} on synthetic tabular data. We approximate over $1,000,000$ test set permutations per model. \texttt{TNP-D} is context permutation invariant and thus represented with a dotted line. For incremental NPs, we plot distribution density with a violin plot, using internal dashed lines to signify the distribution quartiles.}
    \label{fig_app:perms_tabular}
\end{figure}

\subsection{Context Ordering Algorithm}
\label[appendix]{app:context_ordering}

With incremental TNPs displaying limited average performance variation for random contextual shuffling, we now consider how to best order $N_c^{(\text{new})}$ new context points $\mcD_c^{(\text{new})}$ given an incremental model $\text{NP}_\theta$ already conditioned on $N_c^{(\text{init})}$ initial context points $\mcD_c^{(\text{init})}$. This is a particularly relevant consideration for batched streaming updates and active learning scenarios.

Inspired by the widely used \emph{greedy variance} criterion for selecting inducing points of sparse Gaussian Processes, such as in \citet{JMLR:v21:19-1015}, variance-based context ordering is explored. Let $\mathcal{D}_c$ denote the currently accumulated context set, initialized as $\mathcal{D}_c^{(\text{init})}$. For each new context point $d_i = (\mathbf{x}_i, \mathbf{y}_i) \in \mathcal{D}_c^{(\text{new})}$, we evaluate the incremental model at the input location $\mathbf{x}_i$ to obtain the predictive moments $\mu_i, \sigma_i^2 = \text{NP}_\theta(\mathbf{x}_i, \mathcal{D}_c)$. The variance $\sigma^2$ is used to select which point should be incorporated into the context set at that iteration. Maximum, median and minimum variance selection are all considered. We formalise our approach in \cref{alg:greedy_selection_ordering}, incurring a runtime complexity of $\mathcal{O}((N_c^{(\text{new})})^2(N_c^{(\text{init})}+N_c^{(\text{new})}))$. This is well-suited towards batched streaming where $N_c^{(\text{new})}$ is often much smaller than $N_c^{(\text{init})}$.

\begin{algorithm}[htb]
   \caption{Greedy Context Data Ordering}
   \label{alg:greedy_selection_ordering}
\begin{algorithmic}[1]
   \STATE {\bfseries Input:} Initial context sequence $\mathcal{D}_c^{(\text{init})}$, new context points $\mathcal{D}_c^{(\text{new})}$, incremental TNP model $\text{NP}_\theta$, selection strategy $s \in \{\max, \text{median}, \min\}$.
   \vspace{0.1cm} 
   \STATE $\mathcal{D}_c \leftarrow \mathcal{D}_c^{(\text{init})}$ \COMMENT{Initialize ordered context sequence}
   \WHILE{$\mathcal{D}_c^{(\text{new})} \neq \emptyset$}
      \FOR{\textbf{each} point $d_i = (\mathbf{x}_i, \mathbf{y}_i) \in \mathcal{D}_c^{(\text{new})}$}
         \STATE $\mu_i, \sigma^2_i \leftarrow \text{NP}_\theta (\mathbf{x}_i, \mathcal{D}_c)$
      \ENDFOR
      \IF{$s = \max$}
         \STATE $d^* \leftarrow \arg\max_{d_i \in \mathcal{D}_c^{(\text{new})}} \sigma^2_i$
      \ELSIF{$s = \text{median}$}
         \STATE $d^* \leftarrow \arg\text{median}_{d_i \in \mathcal{D}_c^{(\text{new})}} \sigma^2_i$
      \ELSE
         \STATE $d^* \leftarrow \arg\min_{d_i \in \mathcal{D}_c^{(\text{new})}} \sigma^2_i$
      \ENDIF
      \STATE $\mathcal{D}_c \leftarrow \mathcal{D}_c \oplus d^*$ \COMMENT{Append selected point to sequence}
      \STATE $\mathcal{D}_c^{(\text{new})} \leftarrow \mathcal{D}_c^{(\text{new})} \setminus \{d^*\}$
   \ENDWHILE
   \STATE \textbf{Return} $\mathcal{D}_c$
\end{algorithmic}
\end{algorithm}

\begin{figure}[hbt]
    \centering
    \includegraphics[width=0.48\linewidth]{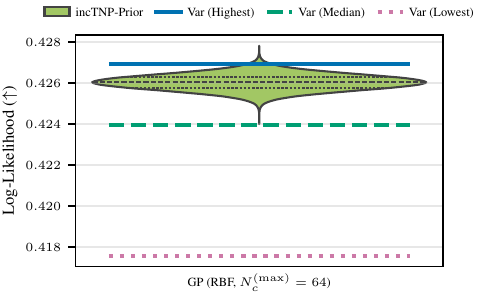}
    \caption{Distribution of test set log-likelihoods across random context permutations for an \texttt{incTNP-Seq} variant trained to condition on the null context set that we term \texttt{incTNP-Prior}. We measure performance over $1,000,000$ test set permutations for 1D GP regression using the RBF kernel and $N_c^{(\text{max})}=64$. Our test set consists of $4096$ samples. \texttt{incTNP-Seq}'s distribution density is shown using a violin plot, with internal dashed lines signifying distribution quartiles. We also benchmark \texttt{incTNP-Prior}'s modelling accuracy when using variance-based greedy contextual ordering as outlined in \cref{alg:greedy_selection_ordering}. We consider all selection strategies $s \in \{\max, \text{median}, \min\}$, corresponding to Var (Highest), Var (Median) and Var (Lowest) respectively.}
    \label{fig_app:ctx_ordering_prior}
\end{figure}

To best illustrate this ordering technique, we train a variant of \texttt{incTNP-Seq} that supports conditioning on the null set using a learned prior contextual token $\mathbf{R}^{(\text{prior})} \in \mathbb{R}^{D_z}$, which we term \texttt{incTNP-Prior}. We update \cref{eq:context_update_inc_seq} and \cref{eq:target_update_inc_seq} accordingly to:

\begin{flalign}
 \mathbf{Z}_l^{(c)}&=\text{M-MHSA-layer}(\mathbf{Z}_{l-1}^{(c)},M^{\text{causal},c}) \quad l\in{1, \cdots,L} \quad \text{where} \quad \mathbf{Z}_0^{(c)}= \mathbf{R}^{(\text{prior})} \oplus \bfD_{c}^{(seq)} \\
 \mathbf{Z}_l^{(t)}&=\text{M-MHCA-layer}(\mathbf{Z}^{(t)}_{l-1},\mathbf{Z}_{l}^{(c)}, M^{\text{causal},t}) \quad l\in{1, \cdots,L} \quad \text{where} \quad \mathbf{Z}_0^{(t)}=(\bfD_{t}^{(seq)})_{1:N_c+N_t},
\end{flalign}

where $\oplus$ denotes concatenation.

\texttt{incTNP-Prior}'s performance on greedily ordered context data is shown in \autoref{fig_app:ctx_ordering_prior}. Iteratively selecting context points with the highest predicted variance yields significantly improved performance over random context ordering, ranking in the top 1\% of all random context set shuffles. Conversely, choosing context points with the lowest predicted variance significantly degrades the mean log-likelihood, with performance falling well below that seen across $1,000,000$ random test set orderings. Overall, greedily ordering the context set based on predicted variance is a principled task-independent approach that consistently extracts strong performance from incremental TNPs, exploiting their well-calibrated uncertainty estimates.

\subsection{Context Permutation Invariance through Aggregation}
Having quantified the effect of sacrificing context permutation invariance for incremental TNPs, we now explore reintroducing this property approximately. As discussed in \cref{app:ctx_perm_sensitivty}, we can construct a context permutation invariant version $z$ of our model $p_\theta$ by constructing a Gaussian Mixture Model over all context orderings:
\begin{equation*}
    \frac{1}{N_c!} \sum_{\pi \in \Pi_{N_c}}p_\theta(\bfY^t | \bfX^t, \mcD^c_\pi).
\end{equation*}

We can create an unbiased estimate of this permutation invariant construction using $K$ randomly drawn context orderings:

\begin{equation*}
    \frac{1}{K} \sum_{k=1}^Kp_\theta(\bfY^t | \bfX^t, \mcD^c_{\pi_k}) \quad \text{where} \quad \pi_k \sim \Pi_{N_c},
\end{equation*}

allowing for approximate context permutation invariance using a small, constant number of draws $K$. A similar technique is adopted by \texttt{TNP-A} to achieve approximate equivariance to the ordering of \emph{target} datapoints.

\begin{table*}[htb]
    \centering
    \caption{Average Test Log-Likelihoods ($\uparrow$) for factorised deployment. We report the absolute performance of the non incremental \texttt{TNP-D}. For \texttt{incTNP} and \texttt{incTNP-Seq} we report the difference ($\Delta$) relative to the reference performance when averaging model predictions over $K\in\{1,2,10,100\}$ permutations of the context set. We also report the performance gain for $K=100$ context permutations over a single permutation $K=1$. Values indicate absolute or relative performance $\pm$ the standard error of the mean (SEM) of each method. We measure performance on 1D GP regression with an RBF kernel for both $N_c^{(\text{max})}=64$ and $N_c^{(\text{max})}=512$, as well as on the synthetic tabular dataset.}
    \label{tab_app:permutation_averaging}
    
    \begin{small}
    \begin{sc}
    \resizebox{\textwidth}{!}{
        \begin{tabular}{ll c ccccc}
        \toprule
        & & \multicolumn{1}{c}{\textbf{Reference}} & \multicolumn{4}{c}{\textbf{Ours} ($\Delta$ ($\uparrow$) relative to Ref.)} & \\
        
        \cmidrule(lr){3-3} \cmidrule(lr){4-7} 
        
        \textbf{Dataset} & \textbf{Model} & \textbf{TNP-D} & $\mathbf{K=1}$ & $\mathbf{K=2}$ & $\mathbf{K=10}$ & $\mathbf{K=100}$ & \textbf{Gain ($1 \rightarrow 100$)} \\
        \midrule
        \multirow{2}{*}{\textbf{1D GP (64)}} 
         & incTNP & \multirow{2}{*}{$0.4309 \pm 0.0070$} & \textcolor{BrickRed}{$-0.0133 \pm 0.0070$} & \textcolor{BrickRed}{$-0.0102 \pm 0.0070$} & \textcolor{BrickRed}{$-0.0077 \pm 0.0070$} & \textcolor{BrickRed}{$-0.0071 \pm 0.0069$} & $+0.0062$ \\
         & incTNP-Seq &  & \textcolor{BrickRed}{$-0.0014 \pm 0.0070$} & \textcolor{OliveGreen}{$+0.0012 \pm 0.0069$} & \textcolor{OliveGreen}{$+0.0034 \pm 0.0069$} & \textcolor{OliveGreen}{$\mathbf{+0.0039 \pm 0.0069}$} & $+0.0053$ \\
        \midrule
        \multirow{2}{*}{\textbf{1D GP (512)}} 
         & incTNP & \multirow{2}{*}{$0.7930 \pm 0.0033$} & \textcolor{BrickRed}{$-0.0037 \pm 0.0033$} & \textcolor{BrickRed}{$-0.0028 \pm 0.0033$} & \textcolor{BrickRed}{$-0.0022 \pm 0.0033$} & \textcolor{BrickRed}{$-0.0020 \pm 0.0033$} & $+0.0017$ \\
         & incTNP-Seq &  & \textcolor{OliveGreen}{$+0.0027 \pm 0.0032$} & \textcolor{OliveGreen}{$+0.0033 \pm 0.0032$} & \textcolor{OliveGreen}{$+0.0038 \pm 0.0032$} & \textcolor{OliveGreen}{$\mathbf{+0.0039 \pm 0.0032}$} & $+0.0012$ \\
        \midrule
        \multirow{2}{*}{\textbf{Tabular}} 
         & incTNP & \multirow{2}{*}{$0.1541 \pm 0.0054$} & \textcolor{BrickRed}{$-0.0202 \pm 0.0054$} & \textcolor{BrickRed}{$-0.0165 \pm 0.0054$} & \textcolor{BrickRed}{$-0.0137 \pm 0.0053$} & \textcolor{BrickRed}{$-0.0129 \pm 0.0053$} & $+0.0073$ \\
         & incTNP-Seq &  & \textcolor{OliveGreen}{$+0.0075 \pm 0.0054$} & \textcolor{OliveGreen}{$+0.0105 \pm 0.0054$} & \textcolor{OliveGreen}{$+0.0129 \pm 0.0053$} & \textcolor{OliveGreen}{$\mathbf{+0.0136 \pm 0.0053}$} & $+0.0060$ \\
        
        \bottomrule
        \end{tabular}
    }
    \end{sc}
    \end{small}
\end{table*}


\autoref{tab_app:permutation_averaging} showcases the log-likelihood performance improvements obtained for our incremental TNP models by averaging over $K\in\{1,2,10,100\}$ permutations across GP and tabular regression tasks. Aggregation offers modest but consistent performance increases for \texttt{incTNP} and \texttt{incTNP-Seq}, with most of the improvement obtained after a small handful of permutations (at around $K=10$). Thus aggregating predictions across context permutations offers approximate context permutation invariance and reliable performance boosts. This is particularly useful for \texttt{incTNP-Seq}, allowing it to be deployed as a standard Neural Process in a static context environment, offering performance benefits over \texttt{TNP-D}.

\section{Distribution Shift}
\label[appendix]{app:distribution_shift}
In many real-world streaming tasks, the task distribution may evolve over time; such distribution shifts may occur subtly over a long period or happen suddenly. We investigate how \texttt{incTNP} models, as well as other NP baselines, handle this phenomenon within the context of GP regression. Concretely, we adapt a version of the change surface GP kernel proposed by \citet{pmlr-v51-herlands16} to smoothly blur between two GP kernels $k_1$ and $k_2$ over a number of context observations $t$:

\begin{equation*}
    k_\text{cs}\Bigl((x,t),(x',t')\Bigr)=\Bigl(1-\omega(t)\Bigr)\Bigl(1-\omega(t')\Bigr)k_1(x,x')+\omega(t)\omega(t')k_2(x,x'),
\end{equation*}
where $\omega(t)$ is the scaled and shifted sigmoid function:
\begin{equation*}
    \omega(t)=\sigma_\text{sig}\Biggl(\frac{t-t_0}{\tau+\epsilon}\Biggr).
\end{equation*} and where $\sigma_\text{sig}(\cdot)\vcentcolon=\frac{1}{1+e^{-\cdot}}$ denotes the standard sigmoid function.

The distribution shift is controlled by a central parameter $t_0$, dictating where both kernels are mixed equally, and a temperature $\tau$ controlling the transition rate between the kernels, with $\epsilon$ preventing zero division. For this work $k_1$ and $k_2$ are chosen from the same kernel family but have different randomly sampled hyperparameters, and we aggregate our results over 4,096 tasks. We consider performance for NP models trained up to $N_c^{(\text{max})}=64$ context points on both the mixed and RBF kernels (using the training procedure from \cref{app:1d_gp}). Crucially, these models were trained on stationary kernels where the kernel hyperparameters remain fixed throughout each task. \emph{They have never been exposed to distribution shifts during training.}

\autoref{fig:combined_kernel_dist_shift} illustrates the streaming performance of NP models as the distribution shifts between different kernels sampled randomly from the mixed kernel (using a moderate shift of $t_0=20, \tau=10$). A primary finding is that, despite being trained exclusively on static tasks where the underlying distribution remains fixed, the TNP-based models (\texttt{TNP-D} and \texttt{incTNP} variants) successfully adapt with the non-stationary environment. These models recover and achieve strong predictive performance as more data is observed from the new regime $k_2$. This demonstrates the strong generalisation capabilities of these models because they have never been exposed to shifting distributions during training.

\begin{figure}[htb]
    \centering
    \includegraphics[width=0.60\linewidth]{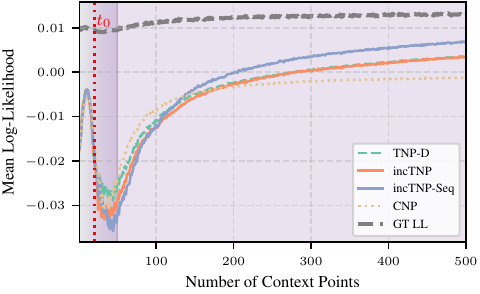}
    \caption[Distribution shift performance with mixed kernel]{Test log-likelihood ($\uparrow$) on the distribution shift experiment (transitioning from kernel $k_1$ to $k_2$) using models trained on the mixed kernel GP task. The red line indicates the crossover point $t_0=20$, where both kernels $k_1$ and $k_2$ are mixed equally. The shaded purple region represents the transition from $k_1$ to $k_2$. Shaded region starts at 5\% mixing interval (distribution is drawn mostly from $k_1$) and ends at 95\% mixing interval (distribution is drawn mostly from $k_2$), using $\tau=10$. The dotted grey line indicates the ground truth log-likelihood (GT LL). Results are averaged over 4,096 tasks.}
    \label{fig:combined_kernel_dist_shift}
\end{figure}

Within the crossover region, all transformer-based NPs suffer a temporary drop in log-likelihood, with incremental TNPs exhibiting a slightly sharper decline as their fixed causal context history initially remains dominated by the previous regime. However, crucially, both \texttt{incTNP} and \texttt{incTNP-Seq} manage to recover as additional information arrives, with \texttt{incTNP-Seq} ultimately outperforming \texttt{TNP-D} in the long run. This indicates that incremental TNPs can capably handle distribution shifts, holding significant potential for real-world online learning tasks.

\begin{figure}[htb]
  \centering
  \begin{subfigure}{0.32\textwidth}
    \centering
    \includegraphics[width=\linewidth]{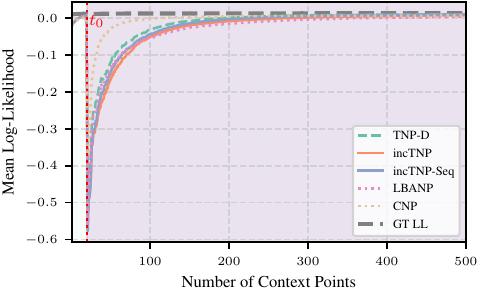}
    \caption{$\tau=0$}
    \label{fig:tau0_rbf}
  \end{subfigure}
  \hfill
  \begin{subfigure}{0.32\textwidth}
    \centering
    \includegraphics[width=\linewidth]{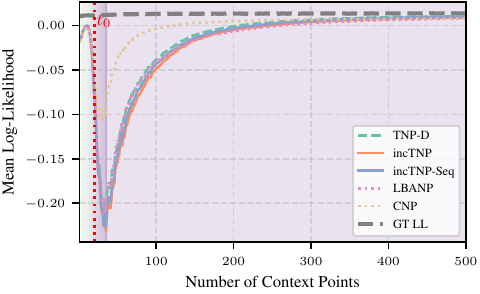}
    \caption{$\tau=5$}
    \label{fig:tau5_rbf}
  \end{subfigure}
  \hfill
  \begin{subfigure}{0.32\textwidth}
    \centering
    \includegraphics[width=\linewidth]{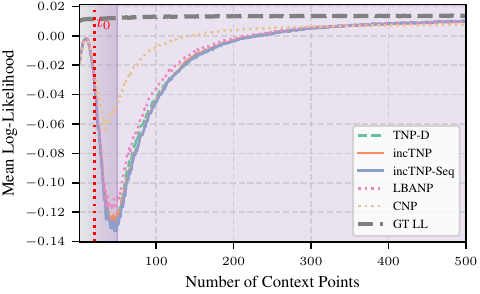}
    \caption{$\tau=10$}
    \label{fig:tau10_rbf}
  \end{subfigure}
  \vspace{1em}
  \hfill
  \begin{subfigure}{0.32\textwidth}
    \centering
    \includegraphics[width=\linewidth]{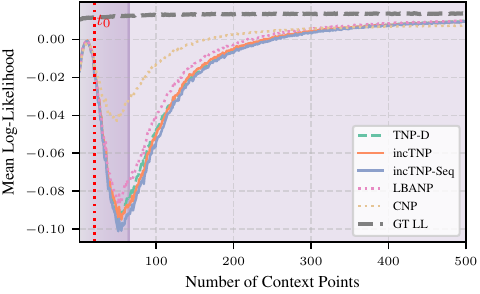}
    \caption{$\tau=15$}
    \label{fig:tau15_rbf}
  \end{subfigure}
  \hfill
  \begin{subfigure}{0.32\textwidth}
    \centering
    \includegraphics[width=\linewidth]{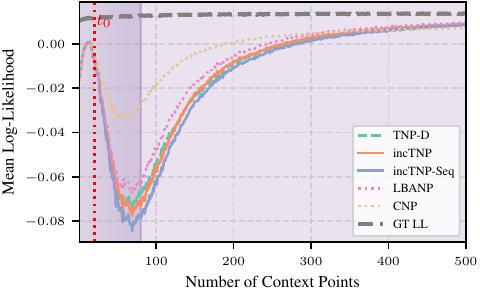}
    \caption{$\tau=20$}
    \label{fig:tau20_rbf}
  \end{subfigure}
  \hfill
  \caption[Distribution shift for RBF kernel with varying $\tau$]{Test log-likelihood ($\uparrow$) on the distribution shift task for the RBF kernel as the transition rate $\tau$ between kernels changes.
  Smaller $\tau$ values result in a much more abrupt distribution shift, whilst larger values lead to a smoother transition. $t_0=20$ is used throughout. Results are averaged over 4,096 tasks.}
  \label{fig:rbf_shifted_dist}
\end{figure}

In \autoref{fig:rbf_shifted_dist}, we visualise the impact of the transition rate---controlled by $\tau$---on the RBF kernel, where $k_1$ and $k_2$ possess different randomly sampled lengthscales. We observe that all TNP-based models successfully recover strong predictive performance as context data from the new regime arrives, aligning with our findings from the mixed kernel. Whilst the \texttt{CNP} baseline adapts quickly, it remains a much weaker model due to its tendency to underfit. In contrast, incremental TNPs achieve predictive parity with, or outperform, the non-incremental \texttt{TNP-D}, fitting the data more closely as the stream progresses. As expected, smoother transitions (corresponding to larger $\tau$) result in less severe drops in log-likelihood. Notably, in these cases, the model starts to recover performance before fully transitioning into the second regime (i.e., prior to the 95\% mixing threshold). This suggests that when the shift is sufficiently gradual, the model can successfully detect and adapt to the changing distribution dynamics on the fly.

This preliminary investigation demonstrates that \texttt{incTNP} and \texttt{incTNP-Seq} are capable of handling distribution shifts, rendering them promising candidates for streaming tasks in non-stationary environments too. Further work is required to quantify the rate of adaptation to such distribution shifts and how this depends on task difficulty, as well as their extrapolation behaviour as the number of context points is pushed even further beyond the maximum number of context points seen during training.

\end{document}